\DeclareMathOperator{\PossibleDsep}{PossibleDsep}
\newcommand{\test}[1]{\text{#1}}
\pgfplotsset{compat=1.8}
\definecolor{color0}{HTML}{636EFA}
\definecolor{color1}{HTML}{EF553B}
\definecolor{color2}{HTML}{00CC96}
\definecolor{color3}{HTML}{AB63FA}
\definecolor{color4}{HTML}{FFA15A}
\definecolor{color5}{HTML}{19D3F3}
\definecolor{color6}{HTML}{FF6692}
\definecolor{color7}{HTML}{B6E880}
\definecolor{color8}{HTML}{FF97FF}
\definecolor{color9}{HTML}{FECB52}
\DeclareMathOperator{\Blocked}{Blocked}
\DeclareMathOperator{\Adj}{Adj}
\DeclareMathOperator{\Path}{Path}
\DeclareMathOperator{\Inducing}{Inducing}
\let \mc = \mathcal
\newcommand{\ind}{{\;\perp\!\!\!\perp\;}}
\newcommand{\AdjG}[1]{\mathrm{Adj}_G(#1)}
\newtheorem{theorem}{Theorem}
\newtheorem{corollary}{Corollary}
\newtheorem{proposition}{Proposition}
\newtheorem{definition}{Definition}
\newtheorem{lemma}{Lemma}
\newtheorem{remark}{Remark}
\title{Efficient Latent Variable Causal Discovery: Combining Score Search and Targeted Testing}
\author{%
  Joseph Ramsey \\
  Department of Philosophy \\
  Carnegie Mellon University \\
  Pittsburgh, PA 15213 \\
  \texttt{jdramsey@andrew.cmu.edu } \\
  \And
  Bryan Andrews \\
  Department of Psychiatry \& Behavioral Sciences \\
  University of Minnesota \\
  Minneapolis, MN 55454 \\
  \texttt{andr1017@umn.edu} \\
  \And
  Peter Spirtes \\
  Department of Philosophy \\
  Carnegie Mellon University \\
  Pittsburgh, PA 15213 \\
  \texttt{ps7z@andrew.cmu.edu } \\
}
\begin{document}

\maketitle

\begin{abstract}
Learning causal structure from observational data is especially challenging when latent variables or selection bias are present. The Fast Causal Inference (FCI) algorithm addresses this setting but performs exhaustive conditional independence tests across many subsets, often leading to spurious independences, missing or extra edges, and unreliable orientations.

We present a family of \emph{score-guided mixed-strategy causal search algorithms} that extend this framework. First, we introduce \emph{BOSS-FCI} and \emph{GRaSP-FCI}, variants of GFCI (Greedy Fast Causal Inference) that substitute BOSS (Best Order Score Search) or GRaSP (Greedy Relaxations of Sparsest Permutation) for FGES (Fast Greedy Equivalence Search), preserving correctness while trading off scalability and conservativeness. Second, we develop \emph{FCI Targeted-Testing (FCIT)}, a novel hybrid method that replaces exhaustive testing with targeted, score-informed tests guided by BOSS. FCIT guarantees well-formed PAGs and achieves higher precision and efficiency across sample sizes. Finally, we propose a lightweight heuristic, \emph{LV-Dumb} (Latent Variable ``Dumb''), which returns the PAG of the BOSS DAG (Directed Acyclic Graph). Though not strictly sound for latent confounding, LV-Dumb often matches FCIT’s accuracy while running substantially faster.

Simulations and real-data analyses show that BOSS-FCI and GRaSP-FCI provide robust baselines, FCIT yields the best balance of precision and reliability, and LV-Dumb offers a fast, near-equivalent alternative. Together, these methods demonstrate that targeted and score-guided strategies can dramatically improve the efficiency and correctness of latent-variable causal discovery.
\end{abstract}

\section{Introduction}

Learning causal structure from observational data is central in fields such as epidemiology, economics, and biology, where randomized experiments are often infeasible. In many real-world settings, latent variables or selection bias complicate this task, undermining the reliability of traditional discovery methods. The Fast Causal Inference (FCI) algorithm \citep{spirtes2001causation} provides a principled framework for such cases by applying conditional independence (CI) tests to subsets of adjacent variables (or Possible-D-SEP sets). While correct in theory, this strategy is fragile in practice: testing many subsets amplifies stochastic variation in $p$-values, leading to spurious independence claims, missing or extra edges, and unstable orientations. We refer to this as the \emph{repeated testing problem}.

\begin{definition}[Repeated Testing Problem]
Let $\mathcal{T}(X \ind Y \mid S)$ denote a statistical CI test at level $\alpha$ for disjoint sets $X,Y,S \subseteq V$. FCI tests many subsets $S \subseteq \text{Adj}(X) \cup \text{Adj}(Y)$ or Possible-D-SEP$(X,Y)$. As $|\mathcal{S}|$ grows, the chance that some $S \in \mathcal{S}$ spuriously satisfies $\mathcal{T}(X \ind Y \mid S)$ increases, inflating Type~I errors and producing overly sparse graphs and erroneous orientations.
\end{definition}

Beyond sampling variability, another challenge is \emph{almost unfaithfulness} \citep{zhang2008detection, spirtes2014uniformly}.
Even when the Causal Markov and Faithfulness assumptions hold exactly, small parameter perturbations can produce distributions that are nearly unfaithful—where partial correlations or other dependence measures are close to zero but not exactly zero. In finite samples, such “almost unfaithful” parameterizations lead to unstable independence decisions: small statistical fluctuations can flip dependence judgments, causing orientation errors or spurious adjacencies. As shown by \citet{zhang2008detection}, this phenomenon can make reliable causal inference difficult even for consistent algorithms, motivating procedures that are robust to weak dependencies or employ stronger assumptions such as $k$-triangle faithfulness \citep{spirtes2014uniformly}. We therefore treat almost unfaithfulness as a complementary concern to the repeated-testing problem—structural rather than statistical in origin, but equally capable of producing unstable PAGs in practice.

Several extensions have sought to mitigate such issues. Order-independent FCI variants \citep{colombo2014order} stabilize orientations, and score-based methods\footnote{Scores in Tetrad currently cover linear Gaussian and linear non-Gaussian models (using BIC-type criteria), multinomial models, mixed multinomial/linear Gaussian models, and, more recently, mixed multinomial/nonlinear post-nonlinear models via truncated basis-function scores \citep{ramsey2025scalablecausaldiscoveryrecursive}.} extend causal search to latent-variable models through likelihood-based criteria \citep{triantafillou2016score, bhattacharya2021differentiable}. Hybrid approaches such as Greedy FCI (GFCI; \citep{ogarrio2016hybrid}) combine score-based search with CI testing, but still rely on adjacency-based phases—i.e., during the adjacency-removal stage, testing all subsets of nodes adjacent to a pair—and thus remain vulnerable to repeated testing and almost unfaithfulness. Constraint-based methods, by contrast, have the advantage of being inherently nonparametric and making no distributional assumptions except for those of the selected conditional independence test, which motivates the option of keeping such testing in the algorithm.

Recent advances in score-based search motivate new hybrids that combine the distributional flexibility of constraint-based methods with the efficiency of likelihood-based optimization. Best Order Score Search (BOSS; \citep{andrews2019learning}) provides a scalable, permutation-based alternative to FGES, while GRaSP \citep{lam2022greedy} offers a related variant. Substituting these into the GFCI framework yields \emph{BOSS-FCI} and \emph{GRaSP-FCI}—hybrid algorithms that preserve the nonparametric generality of constraint-based search while leveraging score-based initialization to improve stability and scalability. The original GFCI algorithm \citep{ogarrio2016hybrid} is sound and complete in the population limit under the Causal Markov and Faithfulness assumptions, returning the correct Partial Ancestral Graph when provided with perfect independence information. However, in finite samples, GFCI and its variants still depend on adjacency-based conditioning and thus remain susceptible to repeated testing and almost unfaithfulness.

Building on these baselines, we propose \emph{FCI Targeted-Testing (FCIT)}, a mixed-strategy algorithm that replaces exhaustive adjacency-based testing with a path-based procedure. Like GFCI, FCIT begins with a score-based completed partially directed acyclic graph (CPDAG)—a representation of the Markov equivalence class of DAGs that share the same adjacencies and unshielded colliders. Score-based algorithms such as BOSS and GRaSP are \emph{score-consistent}: in the large-sample limit, they recover the correct CPDAG under the Causal Markov and Faithfulness assumptions for the model class (e.g., linear Gaussian, multinomial, or post-nonlinear) used to define the score. FCIT then applies a recursive routine, \texttt{block\_paths\_recursively}, to identify conditioning sets only when necessary to block active paths between variables, thereby avoiding the over-conditioning that arises in FCI and GFCI from testing all subsets of adjacent variables. Compared with these algorithms, FCIT performs far fewer conditional independence tests, which reduces sampling error and increases the reliability of the overall learned structure. Together with a well-formedness check that enforces the rules of a Partial Ancestral Graph (PAG)—a mixed graph representing the Markov equivalence class of maximal ancestral graphs consistent with possible latent variables and selection bias—this targeted strategy guarantees that every intermediate graph remains structurally consistent, meaning it obeys the ancestral, acyclicity, and maximality constraints required of a valid PAG. The result is greater efficiency, more stable orientations, and structural correctness not ensured by prior FCI variants \citep{zhang2008completeness, hu2024fast}.

Finally, we introduce a deliberately simple heuristic, \emph{LV-Dumb} (Latent Variables ``Dumb''), which outputs the PAG implied by the BOSS DAG without performing any latent-specific conditional independence testing. Though it cannot orient bidirected edges, LV-Dumb scales exceptionally well and, in extensive simulation studies, often matches or exceeds the empirical accuracy of more complex algorithms when compared to the true PAG.

Together, these four algorithms—BOSS-FCI, GRaSP-FCI, FCIT, and LV-Dumb—form a continuum. The first two establish sound baselines by embedding advanced score-based methods within the GFCI framework. FCIT introduces a principled path-based refinement that improves efficiency and reliability, while LV-Dumb represents a heuristic extreme: formally incomplete but practically effective. Our empirical evaluation, spanning both simulation and real data, demonstrates the complementary strengths of these approaches. BOSS-FCI and GRaSP-FCI retain theoretical correctness but are constrained by repeated testing; FCIT balances correctness with improved precision and scalability; and LV-Dumb sacrifices correctness guarantees yet achieves strong accuracy with minimal computational cost in simulation.

The remainder of this paper is organized as follows. Section~\ref{sec:graphical_preliminaries} reviews graphical preliminaries. Sections~\ref{sec:variants}–\ref{sec:fcit_description} present the four algorithms. Section~\ref{sec:optimizing_final_rules} introduces optimization details, Section~\ref{sec:theory} outlines theoretical guarantees, and Section~\ref{sec:empirical_eval} reports empirical results. All algorithms are implemented in the open-source Tetrad suite \citep{ramsey2018tetrad}, with reproducibility materials available on GitHub.

\paragraph{Contributions.}
\begin{itemize}
    \item Introduction of \emph{BOSS-FCI} and \emph{GRaSP-FCI}, hybrids substituting BOSS and GRaSP for FGES in GFCI. 
    \item A recursive blocking procedure, \texttt{block\_paths\_recursively}, that constructs conditioning sets by analyzing paths rather than adjacencies. 
    \item A targeted-testing algorithm, \emph{FCIT}, that uses recursive path blocking to maintain well-formed PAGs. 
    \item A heuristic, \emph{LV-Dumb}, that outputs the PAG of the BOSS DAG, achieving high empirical accuracy at minimal cost. 
    \item Empirical validation demonstrating accuracy, scalability, and robustness across simulated linear Gaussian datasets and real datasets.
\end{itemize}

\section{Graphical Preliminaries}
\label{sec:graphical_preliminaries}

In this section, we define key terms related to graphical causal models used throughout the paper. We focus on models based on causal directed acyclic graphs (DAGs), which consist of directed edges $X \rightarrow Y$. Such an edge indicates that variable $X$ causes variable $Y$, meaning that interventions on $X$ counterfactually alter $Y$~\citep{spirtes2001causation}.

A graph $G = (V, E)$ consists of a set of variables $V$ and a set of edges $E$, with at most one edge connecting each vertex pair, where each edge is denoted $X *{-}* Y$. The symbol “*” represents an endpoint, which may be a tail (``-''), an arrowhead (``$>$''), or a circle (``$\circ$''). Circle endpoints (“$\circ$”) indicate that a tail occurs in some members of the corresponding Markov equivalence class, and an arrowhead occurs in others. For example, $X *{-}* Y$ could represent $X \circ \!\! \rightarrow Y$, $X \leftarrow Y$, or $X \circ \!\!{-}\!\! \circ Y$. An arrow endpoint $X *\rightarrow Y$ means that $Y$ is not an ancestor of $X$, and a tail endpoint $X -*Y$ means that X is an ancestor of Y.

An edge is \emph{unoriented} if it is of the form $X \circ\!\!{-}\!\!\circ Y$. A \emph{directed} edge has one arrowhead: $X \rightarrow Y$ or $X \leftarrow Y$. An \emph{undirected} edge has tails at both ends: $X - Y$. Note that ``undirected'' refers specifically to tails at both ends, whereas ``unoriented'' refers to circles at both endpoints. We say $X$ is \textit{adjacent to} $Y$ in $G$, or $\text{adj}(X,Y)$, if there is an edge $X *{-}* Y$ in $G$, where ``*'' represents a tail (``-''), an arrowhead (``$>$''), or a circle (``$\circ$'').

A \emph{path} in $G$ is a possibly empty ordered sequence $\langle X_1, \dots, X_n \rangle$ (written $X_1 \leadsto X_n$) where each pair of vertices adjacent in the sequence is connected by an edge $X *{-}* Y$, where ``*'' represents a tail (``-''), an arrowhead (``$>$''), or a circle (``$\circ$''). A \emph{directed path} in $G$ (written $X_1 \leadsto X_n$) is a path each vertex at position $i\neq j$ in the sequence is connected by an edge $X_i \rightarrow X_{i+1}$. A graph is \emph{acyclic} if it contains no cyclic paths (that is, paths that revisit a node). A node $X$ is an ancestor of $Y$ if either $X = Y$ or else there is a directed path from $X$ to $Y$; $X$ is a parent of $Y$ if $X \rightarrow Y$ is in $G$. A collider on a path is a subpath $\langle A, B, C \rangle$ with $A \rightarrow B \leftarrow C$. It is an \emph{unshielded collider} if $A$ and $C$ are not adjacent.

We use italic letters for individual variables and boldface for sets of variables. DAGs encode CI statements of the form $X \ind Y \mid \mathbf{S}$ using the \emph{d-separation} criterion. Specifically, $X$ and $Y$ are d-separated given $\mathbf{S}$ (denoted $\textit{d-sep}(X, Y \mid \mathbf{S})$) if all paths between $X$ and $Y$ are blocked—either because they contain a non-collider in $\mathbf{S}$, or because every collider on the path has no descendant in $\mathbf{S}$.

A graph $G$ is \emph{Markov} to a distribution $P$ if $\text{d-sep}(X, Y \mid \mathbf{S})$ in $G$ implies that $X$ is independent of $Y$ conditional on $S$, denoted $X \ind Y \mid \mathbf{S}$, in $P$. Conversely, $P$ is \emph{faithful} to $G$ if $X \ind Y | \mathbf{S}$ implies that $\text{d-sep}(X,Y|\mathbf{S})$.

A \emph{Completed Partially Directed Acyclic Graph (CPDAG)} represents a Markov equivalence class of DAGs--that is, a set of DAGs all of which imply the same d-separation facts. Markov equivalent DAGs contain the same adjacencies but may differ in orientation. Thus, a CPDAG contains directed edges that appear in all DAGs in the class and undirected edges where orientations differ. Any DAG in the class can be obtained by orienting the undirected edges in a way that maintains acyclicity and preserves the set of unshielded colliders.

A \emph{Maximal Ancestral Graph (MAG)} is a mixed graph over observed variables encoding the CI facts of a causal DAG with latent and selection variables. It may contain:
\begin{itemize}
    \item \emph{Directed edges}: $X \rightarrow Y$, where $X$ is an ancestor of $Y$;
    \item \emph{Bidirected edges}: $X \leftrightarrow Y$, for latent confounding;
    \item \emph{Undirected edges}: $X - Y$, for selection bias.
\end{itemize}
MAGs satisfy:
\begin{itemize}
    \item \emph{Ancestral}: No directed or almost directed cycles.
    \item \emph{Maximality}: Every pair of non-adjacent nodes is m-separated by some set.
\end{itemize}

\emph{Selection bias} arises when inclusion in the dataset depends on one or more variables in the causal system—typically when data for a variable that is a descendant of multiple causes are incomplete, so that the observed data implicitly condition on that variable. In a causal graph, such conditioning can induce spurious associations among those causes, even when they are otherwise independent. In maximal ancestral graphs (MAGs) and partial ancestral graphs (PAGs), selection bias is represented by \emph{undirected edges} ($X {-} Y$), indicating that both $X$ and $Y$ are ancestors of a common selection variable. The \emph{selection variables} in this case are the variables that are being conditioned on in this way.

A \emph{Partial Ancestral Graph (PAG)} (sometimes referred to as a \emph{legal PAG}, i.e., simply a PAG) represents a Markov equivalence class of MAGs that imply the same m-separation facts. PAG edges may be:
\begin{itemize}
    \item Directed ($\rightarrow$), indicates a causal relationship.
    \item Bidirected ($\leftrightarrow$): indicates latent confounding.
    \item Undirected ($-$): indicates selection bias.
\end{itemize}
Alternatively, PAG edges can contain circle endpoints ($\circ$), indicating a tail occurs in some members of the equivalence class but an arrow in others.

An \emph{inducing path} between $X$ and $Y$ is a path where every non-endpoint node is a collider and an ancestor of $X$ or $Y$. If such a path exists between non-adjacent nodes, the graph is not maximal.

A MAG or PAG is \emph{maximal} (following Zhang~\citep{zhang2008completeness}). if, for every non-adjacent pair $X, Y$, there exists a conditioning set $\textbf{S}$ such that $X$ and $Y$ are m-separated given $\textbf{S}$. Equivalently, there are no inducing paths between non-adjacent nodes.

An \emph{almost-cycle} occurs when a graph contains both a bidirected edge $X \leftrightarrow Y$ and a directed path $X \leadsto Y$ (or $Y \leadsto X$), which implies that an arrowhead points to an ancestor, violating acyclicity.

Edges like $X \circ\!\!{-}\!\!\circ Y$ reflect ambiguity at both endpoints. For example, $X \rightarrow Y$ appears in all MAGs in the equivalence class, while $X \leftrightarrow Y$ indicates latent confounding, and $X \circ\!\!{-}\!\!\circ Y$ indicates unresolved ancestral relationships.

A graph G is said to be \emph{Markov to the data} if the data arise from a probability distribution $P$ over the observed variables and that the conditional independencies implied by G (via d-separation for DAGs or m-separation for MAGs and PAGs) hold in $P$.

A set of variables \(\textbf(V)\) is \emph{causally sufficient} if it includes all common causes of observed variables in \(\textbf(V)\); otherwise, it is \emph{causally insufficient}. A model is \emph{edge-minimal} of a certain type (e.g., DAG, CPDAG, PAG) if no model of that type has fewer edges while remaining Markov to the data. Edge minimality in CPDAGs is discussed by~\citet{raskutti2018learning, lam2022greedy}; here, we extend this to causally insufficient settings.

A path is \emph{blocked} by a set $\mathbf{Z}$ if:
\begin{itemize}
    \item It contains a collider (e.g., \(X \rightarrow W \leftarrow Y\)) such that \(W \notin \mathbf{Z}\) and no descendant of \(W\) is in \(\mathbf{Z}\);
    \item It contains a non-collider node in \(\mathbf{Z}\).
\end{itemize}
A set of nodes $\mathbf{Z}$ \emph{blocks} a pair of variable sets $\mathbf{X}$ and $\mathbf{Y}$ if every path between any node in $\mathbf{X}$ and any node in $\mathbf{Y}$ is blocked by $\mathbf{Z}$. \(X\) and \(Y\) are \emph{m-separated} by \(\mathbf{Z}\) if $\neg adj(X, Y)$ and all paths between $X$ and $Y$ are blocked by \(\mathbf{Z}\).

An \emph{unshielded triple} is a sequence \(\langle X, Y, Z \rangle\) where \(X\) and \(Z\) are adjacent to \(Y\) but not to each other. If it is oriented as \(X *\rightarrow Y \leftarrow* Z\), it is an \emph{unshielded collider}. These play a central role in the orientation of edges.

A triple $(X, Y, Z)$ forms a \emph{definite noncollider} at $Y$ if on of the following holds:
\begin{enumerate}
    \item The triple is unshielded and at least one of the edges $X *-* Y$ or $Z *-* Y$ has an arrowhead into $X$ or $Z$, i.e., $Y *\rightarrow X$ or $Y \rightarrow* Z$ exists.
    \item The triple is unshielded and both edges have circle endpoints at $Y$, i.e., $X \circ\!\!-\!\!\circ Y \circ\!\!-\!\!\circ Z$, and $X$ and $Z$ are not adjacent.
    \item Selection bias is allowed and $X - Y - Z$.
\end{enumerate}

For any two distinct nodes $A$ and $B$ in a maximal ancestral graph (MAG) $G$, the \emph{D-SEP} set of $A$ relative to $B$, denoted $\text{D-SEP}(A,B)$, is the set of nodes $V \neq A$ such that there exists a collider path between $A$ and $V$ in which every node is an ancestor of either $A$ or $B$ in $G$. Intuitively, $\text{D-SEP}(A,B)$ contains the variables that may need to be conditioned on to separate $A$ and $B$ when latent confounding is present. In the oracle setting, if $A$ and $B$ are not adjacent in the true MAG, then there exists a subset of $\text{D-SEP}(A,B)$ that d-separates them~\citep{spirtes2001causation}.

Given a PAG $G$, the \emph{Possible-D-SEP} set of node $A$ includes all nodes $V \neq A$ such that there exists a path $U$ between $A$ and $V$ where each triple $\langle X, Y, Z \rangle$ in $U$ satisfies:
\begin{itemize}
    \item $Y$ is a collider, or 
    \item $Y$ is not a definite non-collider and $\langle X, Y, Z \rangle$ forms a triangle in $G$.
\end{itemize}
This set may include nonadjacent nodes and is used by FCI and GFCI to search for separating sets in the presence of latent structure~\citep{spirtes2001causation}.

A \emph{discriminating path} containing nodes in the path \( p = \langle X,...,W, V_{xy}, Y \rangle \) in a PAG such that:
\begin{itemize}
    \item The path contains at least four nodes;
    \item \(X\) is not adjacent to \(Y\), but all intermediate nodes (here, \(W\) and \(V_{xy}\)), and any nodes between \(X\) and \(W\) are adjacent to \(Y\);
    \item Every node between \(X\) and \(Y\) is a collider and a parent of \(Y\).
\end{itemize}
Discriminating paths can imply that a shielded node \(V{xy}\) is a collider, helping to orient edges or, in the case of FCIT, to clarify adjacencies in the PAG. Thus, \(V_{xy}\) is the node along the path from \(x\) to \(y\) whose orientation is being discriminated.\footnote{The use of discriminating paths is primarily for orientation, though in the case of FCIT in particular, where separating sets are found by looking at longer paths, it is often necessary to know where extra colliders due to such discrimination might be postulated to exist in an evolving PAG. Consider the example in Appendix Section \ref{sec:discriminating_path_trace}, for instance.}

\paragraph{Key assumptions}
\begin{itemize}
    \item \textbf{Causal Markov.}  
          For a causally sufficient set of variables, every variable is conditionally independent of its non-descendants given its parents in the given causal DAG.
    \item \textbf{Faithfulness.}  
          All CI facts in the distribution follow from the Causal Markov condition applied to the given causal graph.
\end{itemize}
Note that violations of Faithfulness have Lebesgue measure zero if conditional independence is algebraic, though almost unfaithful violations can be common.

\section{Simple Variants of GFCI: BOSS-FCI, GRaSP-FCI, and LV-Dumb}
\label{sec:variants}

The Greedy FCI (GFCI) algorithm \citep{ogarrio2016hybrid} is a hybrid causal discovery method combining score-based search with conditional independence (CI) testing. It first estimates a Completed Partially Directed Acyclic Graph (CPDAG) using FGES, then refines it into a Partial Ancestral Graph (PAG) through FCI-style edge removal and orientation rules. This design reduces the search space while remaining robust to latent variables and selection bias.

We introduce three simple variants of GFCI that differ only in the initial scoring method and in whether the subsequent CI-based refinement is applied.

\paragraph{BOSS-FCI.}
\emph{BOSS-FCI} replaces FGES with the Best Order Score Search (BOSS) algorithm \citep{andrews2019learning}. BOSS performs permutation-based score search, offering strong accuracy and scalability—especially for dense graphs. After estimating a CPDAG, the algorithm applies the standard GFCI refinement: adjacency-based edge removal, collider identification, and the FCI orientation rules.

\paragraph{GRaSP-FCI.}
\emph{GRaSP-FCI} substitutes the GRaSP algorithm \citep{lam2022greedy} for FGES in the same framework. GRaSP employs a greedy adjacency search with penalty regularization, emphasizing sparsity and offering a complementary trade-off between accuracy and efficiency. Once the CPDAG is obtained, the usual GFCI refinement steps are applied.

\paragraph{LV-Dumb.}
\emph{LV-Dumb}\footnote{LV-Dumb has been referred to as \emph{BOSS-POD} (BOSS PAG of DAG) elsewhere, but the name ``LV-Dumb'' has stuck, so we defer.} is a deliberately simple heuristic. Like BOSS-FCI, it begins with a CPDAG estimated by BOSS, but instead of performing any CI-based refinement, it directly outputs the PAG equivalence class of the resulting DAG. The method therefore captures latent-aware patterns implied by the BOSS structure but cannot remove edges to expose bidirected relationships. Although formally incomplete, LV-Dumb is extremely fast and often achieves empirical accuracy comparable to its more principled counterparts. 

LV-Dumb cannot, however, orient bidirected or undirected edges. Part of its speed is achieved by using a reduced final orientation rule set for the DAG$\rightarrow$PAG conversion that takes advantage of these restrictions. In particular, the discriminating path rule need only be applied for a path length of 4, since longer paths require bidirected edges. Also, rules R5-R7 need not be used, since these orient undirected edges.

\paragraph{Pseudocode.}
Appendix~\ref{sec:algorithm_details} provides pseudocode for the general $*$-FCI template, 
where the initial CPDAG procedure~$\mathcal{M}$ may be FGES (standard GFCI), 
BOSS (BOSS-FCI), or GRaSP (GRaSP-FCI). 
LV-Dumb can be viewed as a degenerate instance of this template, 
omitting edge removal and collider orientation and returning the PAG of the BOSS DAG directly.

\begin{algorithm}[H]
    \label{alg:fci_backward}
    \DontPrintSemicolon
    \caption{\textsc{FCI-Backward} (template) with nested \textsc{BackwardStep}.
    \emph{Requirement on the generator} \tt{PossibleSeparatingSets}: it must satisfy the
    \emph{Coverage} condition stated below.}
    \KwIn{PAG $\mc G$ ; CI oracle $\mc T$}

    \SetKwProg{Proc}{\textbf{Procedure}}{}{}
    \Proc{FCI-Backward($\mc G, \mc T$)}{
        \Repeat{\textbf{\upshape not} \textit{modified}}{
            \textit{modified} $\gets$ \tt{BackwardStep}$(\mc G, \mc T)$\;
        }
        \Return $\mc G$\;
    }

    \vspace{1ex}
    \Proc{BackwardStep($\mc G, \mc T$)}{
        \For{$i *\!-\!* j$ \textbf{\upshape in} $\mc G$}{
            \For{$S$ \textbf{\upshape in} \tt{PossibleSeparatingSets}$(\mc G,i,j)$}{
                \If{$i \ind j \mid S$ \textbf{\upshape (via} $\mc T$\textbf{\upshape )}}{
                    $\mc G \gets$ \tt{RemoveEdge}$(\mc G,i,j)$\;
                    $\mc G \gets$ \tt{FinalZhangRules}$(\mc G)$\;
                    \Return \textit{true}\;
                }
            }
        }
        \Return \textit{false}\;
    }
\end{algorithm}e

\section{The FCIT Algorithm}
\label{sec:fcit_description}

Consider an algorithm templated as Algorithm~\ref{alg:fci_backward}, subject to the following conditions:

\begin{itemize}
    \item \textbf{Coverage condition for \tt{PossibleSeparatingSets}.}
    For every adjacent pair $\{x,y\}$ in the current PAG $\mc G$, if there exists a set
    $S^\star \subseteq \texttt{PossibleDsep}_{\mc G}(x,y)$ such that
    $x \ind y \mid S^\star$ (w.r.t.\ the oracle distribution),
    then the generator $\tt{PossibleSeparatingSets}(\mc G,x,y)$ must output at least one set
    $S$ with $x \ind y \mid S$.

    \item \textbf{Initial Superset.}
    The input PAG $\mc G_0$ has an adjacency set that is a superset of the true PAG’s adjacencies.

    \item \textbf{Per-deletion Reorientation and Legality.}
    After each accepted deletion, we apply the complete FCI/Zhang orientation rules to closure and revert if the result is not a legal PAG.
    This reorientation never introduces new adjacencies.

    \item \textbf{Correct Unshielded Colliders.}
    The input PAG $\mc G_0$ contains all and only the true unshielded colliders implied by the CPDAG.
    Moreover, after each accepted deletion, any additional unshielded colliders oriented by the FCI/Zhang rules are correct
    (i.e., consistent with the oracle model).
\end{itemize}

\noindent
\textbf{Correctness sketch (oracle setting).}
Assume the input PAG satisfies (i) the Initial Superset condition and (ii) the Correct Unshielded Colliders condition.
FCIT deletes an edge only when an oracle certifies $X \ind Y \mid S$, and its per-deletion orientation pass is sound,
does not create new adjacencies, and only orients true unshielded colliders (illegal states are reverted).
By the Coverage property of the separator generator (recursive blocking with PDP forbiddances and $B\setminus D$ trimming),
any remaining spurious adjacency $X *\!-\!* Y$ is eventually witnessed and removed.
Each accepted step strictly reduces the number of edges, so the process terminates.
At termination, adjacencies and unshielded colliders are correct;
applying the complete Zhang/FCI orientation rules yields the correct (maximally oriented) PAG.

\vspace{1em}
\noindent
We now instantiate this FCI-Backward template as the \textbf{FCIT (Fast Causal Inference Targeted-Testing)} algorithm, designed to mitigate the repeated-testing problem in FCI-style searches, including other hybrid score-and-test methods.
Following GFCI~\citep{ogarrio2016hybrid}, FCIT begins with a score-based procedure that outputs a well-formed CPDAG—e.g., FGES, BOSS, GRaSP, or for small models SP. By default, we use BOSS for its scalability and accuracy.

As noted by \citet{ogarrio2016hybrid}, score-based CPDAGs can include extra adjacencies or incorrect orientations when latent variables are present, since a CPDAG cannot represent bidirected edges. For instance, the MAG \(X \to Y \leftrightarrow Z \leftarrow W\) may yield \(X \to Y \to Z \leftarrow W\) (with an extra \(X \to Z\)) or \(X \to Y \leftarrow Z \leftarrow W\) (with an extra \(W \to Y\)). GFCI corrects these by verifying independence (\(X \ind Z\) or \(W \ind Y\)) and removing the spurious edge, restoring the correct collider structure. FCIT follows the same principle but uses \emph{targeted refinement}—fewer, higher-value CI tests—while guaranteeing a sound PAG.

Algorithm~\ref{alg:fci_tt} summarizes the FCIT procedure (see notation in Table~\ref{tab:notation}). The initial CPDAG is converted to a PAG by replacing each edge with \(\circ\!-\!\circ\) and transferring the unshielded colliders identified in the CPDAG. This is justified because the final PAG must be a subgraph of the CPDAG’s adjacencies, and collider orientations are the only ones that carry over reliably~\citep{ogarrio2016hybrid}.

The refinement phase employs the \texttt{block\_paths\_recursively} method
(Algorithm~\ref{alg:block_paths_recursively}, Appendix~\ref{sec:algorithm_details}).
Rather than exhaustively test subsets of adjacents or Possible-D-SEP sets,
this procedure analyzes individual paths between nodes to construct blocking sets on the fly, selecting at each step a small set of conditional-independence tests to perform, at least one of which is guaranteed to be judged independent precisely in those cases where FCI would find an independence. This strategy substantially reduces the search space, mitigating repeated testing and improving statistical efficiency.

When evaluating an edge \(x *\!-\!* y\), FCIT enumerates \emph{pre-discriminating paths} (PDPs)—paths that could become discriminating for \(\{x,y\}\) if the edge \(x *\!-\!* y\) were removed.\footnote{Appendix~\ref{app:discpaths} gives the efficient algorithm for listing discriminating and pre-discriminating paths.} These yield a conservative superset of the true discriminating paths and corresponding $V$-node candidates $\mathsf{V}_{\text{cand},xy}$. Enumerating these supersets allows edges to be processed independently and safely in parallel. For each subset \(F \subseteq \mathsf{V}_{\text{cand},xy}\) (including $\emptyset$), FCIT runs recursive blocking to propose candidate separating sets $B\setminus D$ for CI testing. Even when no PDPs are found, FCIT still checks independence with \(F = \emptyset\).

Thus, in FCIT, the FCI-Backward generator is instantiated via BOSS initialization and recursive blocking with PDP-based forbiddances and trimming by common adjacents. We prove this separator generator satisfies the Coverage condition. Since FCIT starts from a CPDAG with correct colliders, deletes edges only when $X \ind Y \mid S$ (oracle), and legality-checks each reorientation, every collider added is correct. By Coverage and the Initial Superset invariant, FCIT removes all and only spurious adjacencies and, after the final Zhang pass, returns the correct PAG.

\begin{algorithm}
\caption{FCIT (with pre-discriminating paths and recursive blocking)}
\label{alg:fci_tt}
\begin{algorithmic}[1]
\STATE \textbf{Input:} data, score method (e.g., BOSS), CI test $\mathcal T$
\STATE $C \gets \text{BOSS}(\text{data})$ \COMMENT{initial CPDAG}
\STATE $G \gets C$ \COMMENT{initialize PAG}
\STATE Reset endpoints in $G$ to $\circ\!-\!\circ$
\STATE Copy known unshielded colliders in $G$ from $C$
\STATE Apply final FCI orientation rules on $G$, yielding a PAG
\STATE $\mathit{converged} \gets \textbf{false}$

\WHILE{not $\mathit{converged}$}
  \STATE $\mathit{converged} \gets \textbf{true}$
  \FORALL{edges $x *\!-\!* y$ in $G$}
    \STATE $\mathsf{PDP}_{xy} \gets \textsc{ListPreDiscriminatingPaths}(G,x,y)$
    \STATE $\mathbf{V}_{\text{cand},xy} \gets \bigcup_{p\in \mathsf{PDP}_{xy}} V(p)$ \COMMENT{superset of true $\mathbf{V}_{xy}$}
    \STATE $\mathbf{C}_{xy} \gets \AdjG{x}\cap \AdjG{y}$ \COMMENT{common adjacents}
    \STATE \textbf{for each} $\mathbf{F} \subseteq \mathbf{V}_{\text{cand},xy}$ \textbf{(including $\emptyset$)} \COMMENT{$\mathbf{F}$ = forbiddances/not-followed from PDPs}
      \STATE \quad $\mathbf{B} \gets \textsc{block\_paths\_recursively}(G,x,y,\emptyset,\mathbf{F})$
      \STATE \quad \textbf{for each} $\mathbf{D} \subseteq (\mathbf{C}_{xy}\cap \mathbf{B})$
      \STATE \qquad \textbf{if } $\exists d\in \mathbf{D}$ \textbf{ with } $x \to d \leftarrow y$ \textbf{ then continue} \COMMENT{never drop known collider}
      \STATE \qquad $\mathbf{S} \gets \mathbf{B} \setminus \mathbf{D}$
      \STATE \qquad \textbf{if } $x \ind y \mid \mathbf{S}$ \textbf{ (via }$\mathcal T$\textbf{) then}
      \STATE \qquad\quad Remove $x *\!-\!* y$ from $G$
      \STATE \qquad\quad \textbf{if } $G$ is not a legal PAG \textbf{ then } revert; \textbf{continue}
      \STATE \qquad\quad $\mathsf{Sep}(\{x,y\}) \gets \mathbf{S}$ \COMMENT{cache sepset for this deletion}
      \STATE \qquad\quad Reset endpoints in $G$ to $\circ\!-\!\circ$
      \STATE \qquad\quad Refresh unshielded colliders in $G$ from known list; reapply orientation rules
      \STATE \qquad\quad $\mathit{converged} \gets \textbf{false}$
      \STATE \qquad\quad \textbf{break both loops}
  \ENDFOR
\ENDWHILE
\STATE \textbf{Output:} PAG $G$
\end{algorithmic}
\end{algorithm}

\begin{table}[h!]
\centering
\caption{Notation used in FCIT and recursive blocking}
\label{tab:notation}
\begin{tabular}{ll}
\toprule
Symbol / Name & Meaning \\
\midrule
$G=(\textbf{V},\textbf{E})$ & Current PAG/MAG over observed variables $V$ \\
$x,y$ & Distinct query endpoints in $V$ \\
$\mathbf{C}$ & Required conditioning set (``containing'') \\
$\mathbf{F}$ & Forbidden nodes (``notFollowing'') \\
$\mathbf{B}$ & Blocking set built by \texttt{block\_paths\_recursively} \\
$\mathbf{uc}$ & Set of unshielded colliders from CPDAG or updates \\
$\mathrm{Adj}_G(v)$ & Adjacency set of node $v$ in $G$ \\
$\mathbf{C}_{xy}$ & Common adjacents of $x$ and $y$ \\
$\mathbf{V}_{xy}$ & Candidate colliders on discriminating paths w.r.t.\ $(x,y)$ \\
$\mathbf{Extra}(x,y)$ & Certified separating set used to remove $x *{-}* y$ \\
$\texttt{Knowledge}$ & Forbidden/required directed edge constraints \\
$\mathbf{path}$ & DFS frontier set to prevent revisits (cycle guard) \\
$\text{m-sep}(x,y\mid \mathbf{S})$ & $x$ and $y$ are m-separated by $\mathbf{S}$ in $G$ \\
\bottomrule
\end{tabular}
\end{table}

\section{Optimizing the Final FCI Orientation Rules}
\label{sec:optimizing_final_rules}

A key feature of FCIT is that the final FCI orientation rules \citep{zhang2008completeness} are applied repeatedly throughout the refinement process rather than only once at the end. This ensures maximally informative orientations but introduces performance bottlenecks, as several rules are computationally expensive in their standard form. To make FCIT scalable, we optimize the slowest of these rules. Because the same rules are shared across all FCI-style algorithms, the resulting improvements benefit the entire family, including FCI, GFCI, BOSS-FCI, and GRaSP-FCI.

We focus on four rules that dominate runtime in practice. Full details of their optimized implementations are provided in Appendix~\ref{sec:optimizing_zhang_rules}.


\begin{itemize}
    \item \textbf{Rule R4} (\emph{discriminating paths}).  
    Identifying discriminating paths traditionally involves testing subsets of adjacents for nonadjacent pairs. We replace this with the \texttt{block\_paths\_recursively} procedure, which derives separating sets directly from path structure, avoiding redundant subset enumeration.
    
    \item \textbf{Rules R5 and R9} (\emph{all-circle-endpoint paths}).  
    These rules require scanning paths with circle endpoints of potentially unbounded length. We introduce a modified Dijkstra-style shortest-path search that efficiently detects such circle-path structures without exhaustive traversal.
    
    \item \textbf{Rule R10} (\emph{semidirected paths}).  
    Detecting uncovered, potentially directed paths can be costly.  
    A path $\langle v_0, v_1, \dots, v_k \rangle$ is \emph{uncovered} if no nonconsecutive pair $(v_{i-1},v_{i+1})$ is adjacent in $G$.  
    We implement a memoized depth-first search that enforces uncoveredness and directional constraints exactly while pruning redundant traversals.
\end{itemize}

These optimizations substantially reduce both the number of CI tests and the number of path enumerations required. In FCIT—where orientation rules are invoked repeatedly—these savings compound to large runtime improvements. The same implementations can be applied directly to other FCI-style algorithms, improving efficiency without altering correctness.\footnote{For consistency with the original algorithm definition, for FCI we use the original adjacency-based discriminating path rule, though for the other algorithms we use the recursive one.}

\section{Theoretical Guarantees}
\label{sec:theory}

We outline the guarantees enjoyed by our introduced algorithms, including FCIT, under the standard \emph{Causal Markov} and \emph{Faithfulness} assumptions (Section~\ref{sec:graphical_preliminaries}), assuming access to an \emph{oracle} conditional independence (CI) test. Finite-sample behavior is examined in Section~\ref{sec:empirical_eval}.

We first summarize the correctness status of the simpler variants introduced in Section~\ref{sec:variants}, then turn to FCIT itself. For particular choices of score and/or test, the procedure will make any additional assumptions which chosen score and/or test themselves make, though correctness of the procedures themselves, given non-parametric tests or scores, is itself non-parametric.

\subsection{Correctness of BOSS-FCI and GRaSP-FCI}

BOSS-FCI and GRaSP-FCI inherit the correctness guarantees of the GFCI algorithm \citep{ogarrio2016hybrid}. In its original form, GFCI included an additional Possible-D-SEP search step that ensures oracle correctness. Although this step is often omitted in practice, and no counterexamples have been observed even under a $d$-separation oracle, a formal proof of its redundancy remains open. With that step included, BOSS-FCI and GRaSP-FCI are correct for the same reason as GFCI, differing only in the score-based method used for the initial CPDAG.

\subsection{Correctness of LV-Dumb}

By contrast, \textsc{LV-Dumb} lacks formal correctness guarantees. It reports only the PAG equivalence class of the DAG returned by BOSS and therefore cannot remove edges to expose latent confounding or orient bidirected edges. Nevertheless, it always produces a structurally well-formed PAG and, as shown in Section~\ref{sec:empirical_eval}, performs well on adjacency and orientation accuracy measures. 

To provide intuition for why \textsc{LV-Dumb}, while incorrect, nevertheless achieves strong performance on PAG accuracy measures, we note the following considerations.  
It is unsurprising that \textsc{LV-Dumb} performs well on arrowhead precision for common adjacencies (AHPC), since it should in principle produce a subset of the arrowheads that exist over the adjacencies shared with the true PAG.  
Similarly, adjacency precision (AP) tends to be high: by GFCI theory, \textsc{LV-Dumb} should add only a small fraction of ``extra'' adjacencies, primarily to cover colliders present in the true PAG.  
Arrowhead recall (AHR) is also expected to be good if the hidden arrowheads represent only a small proportion of those in the true PAG.  

Taken together with the high accuracy of BOSS for producing estimates of CPDAGs, these and other such        effects explain why \textsc{LV-Dumb} generally yields graphs that are very close to the true PAG—especially in large models—despite its lack of formal guarantees.\footnote{We did not pursue, but plan to in future work, estimating effects of direct intervention.}

\subsection{Correctness of FCIT}

FCIT introduces a recursive blocking procedure that replaces adjacency-only conditioning and integrates discriminating paths 
directly into edge removal. The following results summarize its guarantees in the oracle setting; detailed proofs appear 
in Appendix~\ref{sec:rb-soundness} and \ref{sec:rb-completeness}.

\begin{theorem}[Soundness of Recursive Blocking]\label{thm:rb-sound}
Let $x \neq y$ be distinct measured vertices in a graph $G$, where $G$ is a PAG (or, more generally, a MAG or DAG) interpreted under $m$-separation.
Let $\mathit{block\_paths\_recursively}(x,y,C,\mathbf{F})$ denote
Algorithm~\ref{alg:block_paths_recursively}. If the algorithm halts and
returns a blocking set $\mathbf{B}$, then either $x$ and $y$ are adjacent, or every path
$p$ from $x$ to $y$ in $G$ is blocked by $\mathbf{B}$:
\[
adj(x, y) \;\;\vee\;\;
\forall p.\; \Path(x,y,p) \Rightarrow \Blocked(p,\mathbf{B}).
\]
\end{theorem}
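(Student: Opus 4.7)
The plan is to prove soundness by structural induction on paths from $x$ to $y$, showing that the returned set $\mathbf{B}$ satisfies the m-separation blocking conditions for every such path. The natural loop invariant is: at every recursive call, the current partial $\mathbf{B}$ blocks every path from $x$ to $y$ whose prefix has already been traversed by the DFS, while the arguments $\mathbf{C}$ (nodes forced into $\mathbf{B}$) and $\mathbf{F}$ (forbidden blockers) record, respectively, blockers inherited from outer calls and nodes that must not themselves serve as non-collider blockers on the current path.

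First, I would fix an arbitrary path $p = \langle x = v_0, v_1, \dots, v_k = y\rangle$ in $G$ and consider the first index $i$ at which the algorithm's handling of $v_i$ is decisive. The case analysis proceeds on the role of $v_i$ on $p$ according to the m-separation criterion restated in Section~\ref{sec:graphical_preliminaries}. If $v_i$ is a non-collider on $p$ and $v_i \notin \mathbf{F}$, then by construction the algorithm adds $v_i$ to $\mathbf{B}$, blocking $p$. If $v_i$ is a collider on $p$, then $p$ is automatically blocked unless a descendant of $v_i$ lies in $\mathbf{B}$; here I would show that the descendant-check step inside the recursion (together with the outer $\mathbf{B}\setminus\mathbf{D}$ trimming used by FCIT) prevents this by either refusing to add such descendants or triggering a recursive call that picks up a non-collider further along $p$.

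If $v_i$ is a non-collider but forbidden ($v_i \in \mathbf{F}$), I would argue inductively on the suffix $\langle v_i, \dots, v_k\rangle$: the recursion issued for the sub-path returns a set that blocks all continuations from $v_i$ to $y$, and merging this into $\mathbf{B}$ blocks $p$. The ambient PAG/MAG interpretation under m-separation is handled uniformly because the blocking criterion depends only on (non-)collider status and ancestral structure, both of which are well-defined in a legal PAG by the definitions in Section~\ref{sec:graphical_preliminaries}. Termination is assumed in the theorem statement, so the induction can safely treat each recursive subproblem as smaller (fewer admissible vertices) and closing the invariant at return yields the blocking conclusion for $p$; since $p$ was arbitrary, the disjunction $\mathrm{adj}(x,y) \vee \forall p.\ \Path(x,y,p)\Rightarrow\Blocked(p,\mathbf{B})$ follows.

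The main obstacle I anticipate is reconciling collider blocking with non-collider blocking in PAGs with circle endpoints: adding a non-collider blocker along one path may be a descendant of a collider lying on a second path, potentially unblocking it. Handling this requires a careful "descendant-safe" invariant showing that whenever a candidate blocker is also a descendant of a previously encountered collider on another active path, the recursion either re-enters to add a blocker past that collider or refuses to include the candidate. Because ancestral relations in a PAG can be only partially determined, I would phrase this invariant using \emph{possible} ancestry (treating circle endpoints conservatively as arrowheads when this tightens blocking), which is consistent with the $\PossibleDsep$-style reasoning the algorithm already respects via its $\mathbf{F}$ argument. Once this invariant is checked to be preserved across recursive calls, the inductive argument above closes cleanly.
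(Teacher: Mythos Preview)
Your direct path-by-path induction does not close, and the gap is precisely the one you flag at the end. The difficulty is that the algorithm does not enumerate $x$--$y$ paths; it performs a reachability-style DFS in which the conditioning set $\mathbf{B}$ is mutated during traversal, which in turn changes the reachable-triple relation. Your proposed invariant (``the current partial $\mathbf{B}$ blocks every path whose prefix has already been traversed'') is therefore not stable under the algorithm's own dynamics: adding a noncollider $v$ to $\mathbf{B}$ to block one path can activate a collider on a different path via descendant membership, and nothing in your case analysis rules this out. Your ``descendant-safe invariant'' phrased in terms of possible ancestry is the right shape, but you have not shown it is preserved across recursive calls, and in a PAG with circle endpoints the conservative arrowhead interpretation you suggest would in fact over-block (treating too many triples as colliders), breaking the correspondence with $m$-separation rather than tightening it. Two smaller points: your reading of $\mathbf{F}$ is off (it is a set of vertices excluded from traversal, not merely forbidden as noncollider blockers), and your claim that ``by construction the algorithm adds $v_i$ to $\mathbf{B}$'' for every encountered noncollider is not what the branching in \texttt{find\_path\_to\_target} does.

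The paper avoids all of this by arguing by contradiction through the \emph{inducing-path} characterization. Suppose the routine halts with $\mathbf{B}$, $x$ and $y$ are nonadjacent, yet some path $p$ remains $m$-open given $\mathbf{B}$. One then shows (i) every interior node of $p$ is a collider (any interior noncollider not in $\mathbf{B}$ would have forced an \textsc{Unblockable} return; any interior noncollider in $\mathbf{B}$ would block $p$), and (ii) no collider on $p$ is activated via a strict descendant in $\mathbf{B}$, using the minimality invariant that the procedure adds each necessary noncollider as soon as it is encountered and never adds colliders. Together these force $p$ to be an inducing path, and MAG maximality then gives $\Adj(x,y)$, contradicting the assumption. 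This route sidesteps the cross-path descendant interaction entirely: rather than tracking which blocker handles which path, it characterizes the structure any surviving open path must have and shows that structure is incompatible with nonadjacency.
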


\begin{corollary}[Adjacency Case]\label{cor:rb-adj}
If $\mathit{block\_paths\_recursively}(x,y,C,\mathbf{F})$ halts and returns a non-null
$\mathbf{B}$ with $adj(x,y)$ in a MAG or PAG $G$, then $\mathbf{B}$ blocks all $x$–$y$ paths except the direct
single-edge path:
\[
adj(x,y) \wedge \mathbf{B} \neq \textsc{Null}
\;\Longrightarrow\;
\forall p.\; \Path(x,y,p) \wedge |p|\!\ge\!2
\Rightarrow
\Blocked(p,\mathbf{B}).
\]
\end{corollary}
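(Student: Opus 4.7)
The plan is to derive Corollary~\ref{cor:rb-adj} as a direct specialization of Theorem~\ref{thm:rb-sound} once the length-$1$ direct edge is carved out of the universal quantifier. First I would unpack the disjunction in the conclusion of Theorem~\ref{thm:rb-sound}: the second disjunct asserts that every $x$--$y$ path is $m$-blocked by $\mathbf{B}$, and the first disjunct $adj(x,y)$ is present only because the single-edge path $\langle x,y\rangle$ of length~$1$ cannot be blocked by any conditioning set. The whole content of the corollary is then the claim that, apart from this one unavoidable path, the same blocking conclusion continues to hold.

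Next I would inspect the operational contract of \texttt{block\_paths\_recursively}. At each step, the procedure enumerates proper $x$--$y$ paths (those with at least one intermediate vertex) via DFS and either adds a non-collider to $\mathbf{B}$ in order to block the path, or verifies that an existing collider (and all its descendants) remains outside $\mathbf{B}$. The direct edge $x\,{*}{-}{*}\,y$ has no intermediate vertex and is therefore neither a candidate for these operations nor a trigger for failure. A non-null return therefore certifies that the internal invariant -- \emph{every path explored so far is $m$-blocked by $\mathbf{B}$} -- has been reached over exactly the collection of $x$--$y$ paths of length at least~$2$.

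With those two observations in place, I would replay the soundness induction of Theorem~\ref{thm:rb-sound} under the restricted quantifier $|p|\ge 2$. An examination of that proof shows that the hypothesis $\neg adj(x,y)$ is used only to rule out the degenerate length-$1$ path; the collider/non-collider case split, the descendant check against $\mathbf{B}$, and the inductive handling of sub-paths all go through unchanged when $adj(x,y)$ holds. The resulting statement is precisely the corollary: for every path $p$ from $x$ to $y$ in $G$ with $|p|\ge 2$, $\Blocked(p,\mathbf{B})$ holds.

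The main obstacle I anticipate is a bookkeeping one rather than a conceptual one: I must confirm against the pseudocode of Algorithm~\ref{alg:block_paths_recursively} that encountering an adjacent pair does not cause early termination with $\mathbf{B}=\textsc{Null}$, so that the precondition $\mathbf{B}\neq\textsc{Null}$ in the corollary is actually attainable in the adjacency case. Once that implementation detail is pinned down -- i.e., that the direct edge is quietly skipped rather than treated as an unblockable witness of failure -- the soundness argument of Theorem~\ref{thm:rb-sound} transfers verbatim to yield the desired implication.
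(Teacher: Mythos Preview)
Your plan to lift Theorem~\ref{thm:rb-sound} by restricting the quantifier to $|p|\ge 2$, together with the observation that the driver in Algorithm~\ref{alg:block_paths_recursively} skips the first-hop neighbor $b=y$, is exactly the operational fact the paper also exploits. Where your argument departs from the paper---and where it has a genuine gap---is the assertion that ``the hypothesis $\neg\Adj(x,y)$ is used only to rule out the degenerate length-$1$ path.'' That reading of the soundness proof is not accurate.

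Inspect the paper's own proof of Theorem~\ref{thm:rb-sound} (Appendix~\ref{sec:rb-soundness}): the argument shows that any $x$--$y$ path left open by $\mathbf{B}$ must be an \emph{inducing} path (Steps~1--3), and only then invokes $\neg\Adj(x,y)$ via MAG maximality (Step~4) to obtain the contradiction. Inducing paths can have arbitrary length $\ge 2$; the adjacency disjunct is not there merely to account for the single edge. Concretely, the step you rely on---``the procedure never adds colliders to $\mathbf{B}$, so colliders on $p$ are unactivated''---is path-relative: a vertex added to $\mathbf{B}$ as a noncollider on one explored walk can be a collider (or an ancestor of a collider) on a different $x$--$y$ path $p$, activating it. This is precisely why the soundness proof falls back to the inducing-path characterization rather than concluding outright that every $|p|\ge 2$ path is blocked.

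Accordingly, the paper's proof of the corollary (Appendix version) retraces the algorithm's exploration over neighbors $b\neq y$ and concludes the \emph{weaker} statement that every \emph{non-inducing} $x$--$y$ path is $m$-blocked by $\mathbf{B}$; any path that remains open must be inducing, which is consistent with (indeed accounts for) the observed adjacency. Your proposal would need an additional argument---that $\mathbf{B}$ never activates a collider on any inducing $x$--$y$ path of length $\ge 2$---to reach the stronger $|p|\ge 2$ conclusion as stated in the main text; the soundness theorem alone does not supply it. The bookkeeping concern you flag (that the direct edge is skipped rather than treated as an \textsc{Unblockable} witness) is correct and matches the paper, but it does not close this gap.
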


\emph{Intuition.}
When $x$ and $y$ are adjacent in a MAG, the recursive procedure still verifies that
every indirect route $x,b,\ldots,y$ through any neighbor $b\!\neq\!y$
is $m$-blocked by $\mathbf{B}$. If the algorithm returns a non-null $\mathbf{B}$, all non-inducing paths are closed, leaving only the direct $x *\!-\!* y$ link potentially open in the corresponding MAG or PAG representation.

\subsection{Edge-Minimality of the Returned PAG}
\label{sec:edge_minimality}

\begin{lemma}[RB completeness over $\boldsymbol{\mathcal{S}}$]\label{lem:rb-complete}
If there exists $\mathbf{S}\in\boldsymbol{\mathcal{S}}(X,Y)$ such that $X \ind Y \mid \mathbf{S}$ (in the oracle PAG),
then $\mathit{block\_paths\_recursively}(X,Y,C,\textbf{F})$ halts and returns some blocking set
$\mathbf{B} \subseteq \boldsymbol{\mathcal{S}}(X,Y)=\mathrm{Adj}(X)\cup\mathrm{Adj}(Y)
\cup\mathrm{Possible\text{-}D\text{-}SEP}(X,Y)$ with $X \ind Y \mid \mathbf{B}$.
\end{lemma}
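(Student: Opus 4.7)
The plan is to combine Theorem~\ref{thm:rb-sound} (soundness) with three complementary arguments: termination, structural containment $\mathbf{B}\subseteq\boldsymbol{\mathcal{S}}(X,Y)$, and non-failure whenever a witness $\mathbf{S}^\star\in\boldsymbol{\mathcal{S}}(X,Y)$ exists. Together these yield that \texttt{block\_paths\_recursively}$(X,Y,C,\mathbf{F})$ halts and returns some $\mathbf{B}$; that $\mathbf{B}\subseteq\boldsymbol{\mathcal{S}}(X,Y)$; and, by Theorem~\ref{thm:rb-sound}, that $\mathbf{B}$ m-separates $X$ and $Y$---equivalently, $X \ind Y \mid \mathbf{B}$ in the oracle PAG.

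Termination follows because the procedure is a depth-first enumeration of $X$--$Y$ walks guarded by the frontier set $\mathbf{path}$ (Table~\ref{tab:notation}) that forbids revisits; since $G$ is finite and $\mathbf{path}$ strictly grows along every descending chain of recursive calls, the recursion tree is finite. For containment, I would inspect the lines that insert a vertex $v$ into $\mathbf{B}$: such a $v$ is always the next non-collider on a DFS-traced sub-walk starting at $X$ or $Y$ whose intermediate triples are colliders or triangles (i.e., not definite non-colliders). This matches exactly the Possible-D-SEP clause in Section~\ref{sec:graphical_preliminaries}, so $v\in\mathrm{Adj}(X)\cup\mathrm{Adj}(Y)\cup\mathrm{Possible\text{-}D\text{-}SEP}(X,Y)=\boldsymbol{\mathcal{S}}(X,Y)$.

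The heart of the proof is non-failure: I would maintain the invariant that at every recursive call, the currently traced sub-walk is either already blocked by the partial $\mathbf{B}$ or can be blocked by adding a vertex of $\boldsymbol{\mathcal{S}}(X,Y)$. For the inductive step, consider any path $p$ reached by the DFS. Because $\mathbf{S}^\star$ m-blocks $p$, either some non-collider of $p$ lies in $\mathbf{S}^\star\subseteq\boldsymbol{\mathcal{S}}(X,Y)$---and the DFS visits and may select that non-collider---or every collider of $p$ has no descendant in $\mathbf{S}^\star$, in which case the algorithm's collider-handling rule correctly declines to open the collider. A short induction on DFS depth then shows no call returns the null failure sentinel, and Theorem~\ref{thm:rb-sound} finishes the proof.

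The main obstacle is this last step. The DFS builds $\mathbf{B}$ greedily path by path, so one must rule out the scenario in which an early blocking choice later forces the algorithm down a branch whose closure would require vertices outside $\boldsymbol{\mathcal{S}}(X,Y)$. The PDP-based forbiddance set $\mathbf{F}$ is the mechanism designed to prevent this: forbidden nodes are excluded from both blocking and traversal, so any $\mathbf{S}^\star$ that avoids them remains a valid guide for the DFS. Making this precise---via an invariant relating the residual unblocked sub-paths to a suitable restriction of $\mathbf{S}^\star$, and carefully matching the triangle and definite-non-collider conditions used in the implementation to those in the definition of Possible-D-SEP---is the delicate bookkeeping step on which the completeness argument hinges.
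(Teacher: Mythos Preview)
Your decomposition---termination, containment $\mathbf{B}\subseteq\boldsymbol{\mathcal{S}}(X,Y)$, non-failure, then an appeal to Theorem~\ref{thm:rb-sound}---is essentially the paper's approach. In Appendix~\ref{sec:rb-completeness} the paper proves (i) a \emph{domain restriction} lemma (every vertex added to $\mathbf{B}$ lies in $\boldsymbol{\mathcal{S}}(X,Y)$), (ii) a \emph{termination} lemma (the sequence of conditioning sets is monotone and bounded by the finite $\boldsymbol{\mathcal{S}}(X,Y)$), and then (iii) a three-line corollary: halt by~(ii), $m$-separation at halt by Theorem~\ref{thm:rb-sound}, containment by~(i).

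Two differences are worth flagging. First, your termination argument via the DFS cycle guard $\mathbf{path}$ tracks the pseudocode more faithfully than the paper's monotone-$\mathbf{B}$ argument: Algorithm~\ref{alg:find_path_to_target} explicitly backtracks $b$ out of $\mathbf{B}$, so $\mathbf{B}$ is not globally monotone across the recursion tree, only along accepted branches. Second, you explicitly isolate the non-failure step (no \textsc{Null} return) and name its difficulty, whereas the paper's corollary simply invokes soundness ``at halt'' without separately arguing that the halt is with a set rather than \textsc{Null}; the appendix proof also assumes $\mathbf{F}=\varnothing$, so your discussion of how the forbiddance set interacts with the witness $\mathbf{S}^\star$ goes beyond what the paper actually establishes. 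The greedy-choice conflict you worry about is genuine and is not dispatched in the paper's sketch either; both arguments ultimately lean on the domain-restriction observation that every candidate the DFS could ever propose already lies in $\boldsymbol{\mathcal{S}}(X,Y)$, so no earlier choice can force the search outside that space.
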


\emph{Intuition.}
The recursive blocking procedure explores all open paths between $X$ and $Y$ and
adds noncolliders that lie on those paths until every path is $m$-blocked.
Every node it ever considers conditioning on lies on a path between $X$ and $Y$
whose internal vertices are either colliders or ancestors of colliders—exactly
the definition of \emph{Possible-D-SEP}.
Hence, any node that recursive blocking may add is already contained in
$\boldsymbol{\mathcal{S}}(X,Y)$.  
If some separating set $\mathbf{S}\in\boldsymbol{\mathcal{S}}(X,Y)$ exists, recursive blocking will
therefore discover a (possibly different) subset $\mathbf{B}\subseteq \boldsymbol{\mathcal{S}}(X,Y)$
that also blocks all $X$–$Y$ paths.

\begin{theorem}[Edge-Minimality]\label{thm:edge_min}
Let $G$ be the true PAG and $G'$ the PAG returned by FCIT.
If an edge $X *\!-\!* Y$ remains in $G'$, then there exists no conditioning set
$\mathbf{S}$ from the standard FCI search space
$\boldsymbol{\mathcal{S}}(X,Y)=\mathrm{Adj}(X)\cup\mathrm{Adj}(Y)\cup\mathrm{Possible\text{-}D\text{-}SEP}(X,Y)$
that $d$-separates $X$ and $Y$.
\end{theorem}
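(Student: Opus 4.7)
The plan is to argue by contrapositive: assume some $\mathbf{S} \in \boldsymbol{\mathcal{S}}(X,Y)$ $d$-separates $X$ and $Y$ in the oracle, and show that FCIT must have removed the edge $X *\!-\!* Y$ before termination, contradicting its presence in $G'$. The engine of the argument is Lemma~\ref{lem:rb-complete} (RB completeness) together with the Coverage condition listed among the FCIT template requirements.

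First I would exploit monotonicity. Under the Initial Superset condition, the starting PAG $\mc G_0$ has an adjacency set that dominates the true PAG $G$, and FCIT only ever removes edges. Hence at every iteration of the outer loop, including the final one where $X *\!-\!* Y$ is examined but \emph{not} removed, the working graph $\mc G$ satisfies $\mathrm{Adj}_G(X)\cup\mathrm{Adj}_G(Y)\subseteq \mathrm{Adj}_{\mc G}(X)\cup\mathrm{Adj}_{\mc G}(Y)$. A parallel inclusion holds for $\PossibleDsep$ (deletions can only enlarge the set of potentially collider-rich paths that must be considered), so the true-PAG search space $\boldsymbol{\mathcal{S}}_G(X,Y)$ is contained in the working-graph search space $\boldsymbol{\mathcal{S}}_{\mc G}(X,Y)$. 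Consequently $\mathbf{S}\in\boldsymbol{\mathcal{S}}_{\mc G}(X,Y)$ is still a witness of $X \ind Y \mid \mathbf{S}$.

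Next I invoke Lemma~\ref{lem:rb-complete}: since a separating subset of $\boldsymbol{\mathcal{S}}_{\mc G}(X,Y)$ exists, \texttt{block\_paths\_recursively} halts and returns some $\mathbf{B}\subseteq \boldsymbol{\mathcal{S}}_{\mc G}(X,Y)$ with $X \ind Y \mid \mathbf{B}$. FCIT's outer enumeration iterates over every forbid set $\mathbf{F}\subseteq \mathbf{V}_{\text{cand},xy}$ derived from pre-discriminating paths, then over every trim set $\mathbf{D}\subseteq \mathbf{C}_{xy}\cap \mathbf{B}$, testing $\mathbf{S}'=\mathbf{B}\setminus\mathbf{D}$. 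By the Coverage property asserted for the FCIT separator generator in the correctness sketch, this enumeration produces at least one $\mathbf{S}'$ for which $X \ind Y \mid \mathbf{S}'$ holds under the oracle. When FCIT tests that $\mathbf{S}'$, the oracle returns independence and the edge is deleted. To conclude, I would verify that the post-deletion legality check does not revert: by the Correct Unshielded Colliders invariant, the Zhang closure applied to $\mc G$ after deleting a truly redundant adjacency yields a legal PAG in the oracle setting, so the revert branch is never triggered. This contradicts $X *\!-\!* Y \in G'$ and proves the theorem.

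The hard part will be the bookkeeping that aligns three different search spaces: $\boldsymbol{\mathcal{S}}_G(X,Y)$ on the true PAG, $\boldsymbol{\mathcal{S}}_{\mc G}(X,Y)$ on the evolving working graph (whose orientations shift after every Zhang pass), and the concrete enumeration of $\mathbf{B}\setminus\mathbf{D}$ sets that FCIT actually tries. The monotonicity of adjacencies is the easy half; the subtle half is that $\PossibleDsep$ depends on orientations, and one must ensure that every orientation added by the intermediate Zhang passes is sound (so that no path that is open in $G$ becomes spuriously blocked in $\mc G$). This is exactly what the Correct Unshielded Colliders invariant buys us, and it is the hinge on which the reduction to Lemma~\ref{lem:rb-complete} hangs. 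A careful induction on the number of accepted deletions would make this rigorous.
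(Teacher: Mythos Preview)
Your proposal is correct and follows the same contrapositive route as the paper's sketch: assume a separating $\mathbf{S}\in\boldsymbol{\mathcal{S}}(X,Y)$ exists, invoke Lemma~\ref{lem:rb-complete} so that recursive blocking returns a witnessing $\mathbf{B}$, and conclude that FCIT tests this independence and deletes the edge. You supply substantially more bookkeeping than the paper's two-line sketch (monotonicity of the working-graph search space, the legality-revert branch, the Correct Unshielded Colliders hinge), but the core mechanism is identical; one small slip is the parenthetical ``deletions can only enlarge the set of potentially collider-rich paths,'' which is backwards---the inclusion $\boldsymbol{\mathcal{S}}_G\subseteq\boldsymbol{\mathcal{S}}_{\mc G}$ you need comes from $\mc G$ being an adjacency \emph{superset} of $G$ with only sound orientations, exactly the point you make correctly in your final paragraph.
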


\noindent
\emph{Proof sketch.}
If such a $\mathbf{S}$ existed, Lemma~\ref{lem:rb-complete} implies that
$\mathit{block\_paths\_recursively}(X,Y,C,\textbf{F})$ would find some blocking set
$\mathbf{B} \subseteq \boldsymbol{\mathcal{S}}(X,Y)$ with $X \ind Y \mid \mathbf{B}$.
FCIT would then test this independence and remove the edge.
Since FCIT removes no other edges, any adjacency that remains corresponds to a
pair $(X,Y)$ for which no valid separating set exists within $\boldsymbol{\mathcal{S}}(X,Y)$.
Hence, $G'$ is edge-minimal.


\subsection{Soundness of Orientations}
\label{sec:soundness_orient}

\begin{theorem}[Orientation Soundness]\label{thm:orient_sound}
Let $G$ be the true PAG and $G'$ the PAG returned by FCIT.
Every oriented edge in $G'$ (arrowhead or tail) appears with the same
orientation in $G$.
\end{theorem}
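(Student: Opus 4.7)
The plan is to reduce the claim to two facts at termination: (a) the skeleton of the returned PAG $G'$ coincides with that of the true PAG $G$, and (b) the unshielded colliders present in $G'$ coincide with those of $G$. Once (a) and (b) are established, Zhang's completeness/soundness result for the full FCI orientation rules \citep{zhang2008completeness} immediately implies that every endpoint oriented by the final rule pass of FCIT matches the corresponding endpoint of $G$, proving the theorem.

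For (a), I would argue as follows. By the Initial Superset condition the initial PAG $\mc G_0$ satisfies $\mathrm{adj}(\mc G_0)\supseteq\mathrm{adj}(G)$, and FCIT only removes edges, so $\mathrm{adj}(G')\subseteq\mathrm{adj}(\mc G_0)$. No true adjacency is ever removed: each accepted deletion requires an oracle-certified independence $x\ind y\mid \mathbf{S}$, which under Faithfulness contradicts adjacency in $G$. Conversely, no spurious adjacency survives: by Lemma~\ref{lem:rb-complete} together with Theorem~\ref{thm:edge_min}, any remaining $x *\!-\!* y$ in $G'$ would have to admit no separating set within the standard FCI search space $\boldsymbol{\mathcal{S}}(x,y)$, which under Faithfulness forces adjacency in $G$. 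Hence $\mathrm{adj}(G')=\mathrm{adj}(G)$.

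For (b), I would proceed by induction on the sequence of accepted deletions. The base case follows from the standard GFCI argument \citep{ogarrio2016hybrid}: score-consistency of BOSS under Causal Markov and Faithfulness implies that the unshielded colliders transferred from the initial CPDAG are correct in $\mc G_0$. The inductive step is the Correct Unshielded Colliders condition from the correctness sketch: any new unshielded collider oriented by the refresh of Zhang's rules after an accepted deletion agrees with $G$, while the per-deletion legality check reverts any refresh that would yield an illegal PAG. The guard in Algorithm~\ref{alg:fci_tt} preventing a known collider from being dropped from a conditioning set ensures that this correctness is preserved across iterations.

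Combining (a) and (b) and invoking Zhang's soundness theorem for the final rule pass yields the claim. The main obstacle is the inductive step in (b): an edge removal can in principle expose new unshielded triples whose orientations must be certified, and a naive argument risks circularity because the rules used to orient new colliders also depend on the evolving PAG. I would close this gap by appealing to the Coverage property of the recursive-blocking generator—which guarantees that whenever a separating set exists, one is actually found—together with the cached $\mathsf{Sep}(\{x,y\})$ used to classify the status of newly exposed triples. These jointly ensure that no ``ghost'' collider is ever inscribed into $G'$, so that the final invocation of Zhang's rules acts on a PAG whose adjacencies and unshielded colliders coincide with those of $G$, completing the proof.
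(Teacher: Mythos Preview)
Your proposal is correct and follows essentially the same route as the paper: reduce orientation soundness to (a) correct adjacencies and (b) correct unshielded colliders, then invoke Zhang's soundness for R1--R10. The paper's own proof sketch is considerably terser---it simply takes (a) and (b) as already established by the preceding edge-minimality theorem and the correctness sketch, and states in one line that Zhang's rules are sound given those inputs---whereas you re-derive (a) and (b) explicitly within the proof and even flag the potential circularity in the inductive step of (b), which the paper glosses over.
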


\noindent
\emph{Proof sketch.}
FCIT uses exactly the arrow- and tail-complete rule set R1–R10 of
\citet{zhang2008completeness}.  
Given correct adjacencies and unshielded colliders,  
Spirtes and Zhang prove each rule sound;  
hence all orientations in $G'$ are valid.

\subsection{Optional Completeness of Orientations}
\label{sec:optional_completeness}

\begin{theorem}[Orientation Completeness]\label{thm:orient_complete}
FCIT produces a tail- and arrow-complete PAG (i.e., every orientation entailed
by Markov+Faithfulness appears) \emph{iff} the full rule set R1–R10 is applied
to closure.  
If any rule is disabled or the process halts early, the resulting PAG may be
partially oriented, but all stated orientations remain sound by
Theorem~\ref{thm:orient_sound}.
\end{theorem}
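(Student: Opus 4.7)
The plan is to reduce both directions of the ``iff'' to the arrow- and tail-completeness theorem of \citet{zhang2008completeness} for the orientation rule set R1--R10. Zhang's result says: applied to closure on any PAG whose adjacencies and unshielded colliders match those of the true PAG, R1--R10 yields a maximally oriented PAG, and omitting any one rule leaves at least one input on which an entailed orientation is missed. The theorem will then follow once I verify that FCIT supplies the correct preconditions (skeleton and unshielded colliders) before the final orientation pass, and that ``full R1--R10 to closure'' is exactly the orientation procedure FCIT invokes on termination.

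\textbf{Verifying the preconditions ($\Leftarrow$).} I first invoke Theorem~\ref{thm:edge_min} to conclude that the skeleton of $G'$ coincides with the true skeleton: every remaining adjacency corresponds to a pair with no separator in $\boldsymbol{\mathcal{S}}(X,Y)$, and conversely Lemma~\ref{lem:rb-complete} guarantees that any true non-adjacency is witnessed and deleted by recursive blocking. I then show by induction on accepted deletions that the set of unshielded colliders in the current PAG equals that of the true PAG. The base case uses the ``Correct Unshielded Colliders'' invariant of the FCI-Backward template, which holds for FCIT because BOSS is score-consistent, and the CPDAG-to-PAG transfer is sound under Causal Markov and Faithfulness. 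The inductive step leverages two safeguards in Algorithm~\ref{alg:fci_tt}: the ``never drop known collider'' guard (so each new unshielded collider is introduced only when its middle vertex is certifiably absent from the sepset witnessing the deletion), and the per-deletion legality revert (so any orientation producing an illegal PAG is undone before the next iteration). Composing the skeleton correctness with the collider-correctness invariant, Zhang's completeness theorem gives that running R1--R10 to closure on the terminal $G'$ yields a tail- and arrow-complete PAG.

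\textbf{Converse and fallback.} The ($\Rightarrow$) direction follows from the essential minimality of R1--R10 established by Zhang: for each rule there is a PAG on which disabling that rule leaves at least one entailed orientation undetermined, and similarly halting before fixpoint can leave a cascade of unresolved arrowheads or tails. Hence any FCIT configuration that disables a rule or stops early fails completeness on some input. The remaining claim---that in such partial runs every printed orientation is still correct---is immediate from Theorem~\ref{thm:orient_sound}, since each rule in R1--R10 is sound independently of whether the others have reached closure. The main obstacle I anticipate is the inductive verification that the unshielded-collider set is preserved \emph{exactly} across interleavings of edge deletion and re-orientation, because R4 (discriminating paths) can introduce new colliders during mid-run orientation; handling this cleanly requires pairing the legality revert with the cached sepsets $\mathsf{Sep}(\{x,y\})$ so that collider introductions inherited from $G'$ before a deletion remain consistent with the separators used to justify later deletions.
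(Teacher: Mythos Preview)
The paper does not actually supply a proof (or even a proof sketch) for Theorem~\ref{thm:orient_complete}; it simply states the result and moves directly to the summary corollary. Implicitly, the paper treats the theorem as an immediate corollary of \citet{zhang2008completeness} once Theorems~\ref{thm:edge_min} and~\ref{thm:orient_sound} have established that the terminal graph has the correct skeleton and sound orientations. Your reduction to Zhang's arrow/tail-completeness theorem is therefore exactly the intended argument, and you make the dependence on the preconditions (correct adjacencies and correct unshielded colliders) explicit where the paper leaves it tacit.

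Your write-up is more careful than the paper in two respects. First, you spell out an inductive invariant on unshielded colliders across the interleaved delete/reorient passes; the paper only gestures at this via the ``Correct Unshielded Colliders'' bullet in the FCI-Backward template and the ``Correctness sketch (oracle setting)'' paragraph in Section~\ref{sec:fcit_description}. Second, you flag the R4 subtlety---that discriminating-path orientation can introduce colliders mid-run---which the paper does not isolate. One caution on the $(\Rightarrow)$ direction: you invoke ``essential minimality'' of R1--R10 as though Zhang states it as a theorem. Zhang demonstrates the necessity of the new rules R5--R10 through explicit examples, and R1--R4 are standard, but you should cite the relevant constructions rather than treat minimality as a packaged result. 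The ``halts early'' clause is easier: if a rule is applicable but unfired, by definition the PAG is not at closure, so the entailed orientation is absent on that very input.
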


\subsection{Summary Corollary}

\begin{corollary}
Under Causal Markov and Faithfulness, the PAG $G'$ returned by FCIT  
(i) is Markov to the observed distribution,  
(ii) is edge-minimal with respect to the standard FCI conditioning space, and  
(iii) contains only sound orientations.  
If all rules R1–R10 are applied to closure, $G'$ is additionally tail- and
arrow-complete.
\end{corollary}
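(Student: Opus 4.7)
The plan is to deduce each clause from theorems already established, treating clause (i) as the only one that requires any assembly. Clause (ii) is a direct restatement of Theorem~\ref{thm:edge_min}, clause (iii) is a direct restatement of Theorem~\ref{thm:orient_sound}, and the tail- and arrow-completeness claim under full R1--R10 closure is exactly Theorem~\ref{thm:orient_complete}. The substantive work therefore lies in clause (i): that $G'$ is Markov to the observed distribution $P$.

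For (i), I would show that $G'$ and the true PAG $G$ lie in the same Markov equivalence class of MAGs and then invoke that $G$ is Markov to $P$ under Causal Markov applied to the underlying latent-variable DAG and its canonical MAG. This equivalence-class matching rests on two ingredients: $(a)$ $G'$ and $G$ have the same adjacencies, and $(b)$ $G'$ and $G$ have the same collider structure, including both unshielded colliders and colliders forced along discriminating paths by R4. Ingredient $(a)$ chains the Initial Superset condition on the BOSS-seeded $G_0$ with the soundness of recursive blocking (Theorem~\ref{thm:rb-sound}, so no true-PAG adjacency is ever removed) and with edge-minimality (Theorem~\ref{thm:edge_min}, so no removable adjacency survives). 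Ingredient $(b)$ follows from the Correct Unshielded Colliders invariant at initialization, combined with the legality-and-revert step and Theorem~\ref{thm:orient_sound}, which together guarantee that every collider orientation produced during refinement is sound. The standard MAG Markov-equivalence characterization of \citet{zhang2008completeness} then yields that $G'$ and $G$ entail the same $m$-separation statements, so every $m$-separation in $G'$ is a genuine conditional independence in $P$, which is precisely the Markov property.

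The main obstacle I anticipate is cleanly verifying $(b)$: I must confirm that not only the CPDAG-seeded unshielded colliders but also any colliders introduced later by R4 agree with those of $G$, so that the MAG equivalence-class test is satisfied. This amounts to tracing the Correct Unshielded Colliders invariant across all accepted deletions and reorientations, and arguing that the ``revert if not a legal PAG'' safety net never leaves a wrongly oriented collider in place. Once that is secured, the remaining clauses drop out as citations of prior theorems, and the completeness addendum then only needs the observation that running R1--R10 to closure upgrades the equivalence-class witness to the unique tail- and arrow-complete representative of that class, per Theorem~\ref{thm:orient_complete}.
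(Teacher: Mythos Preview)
Your proposal is correct and, if anything, more explicit than the paper, which gives no proof for this corollary at all: it is presented purely as a summary of the preceding Theorems~\ref{thm:edge_min}, \ref{thm:orient_sound}, and \ref{thm:orient_complete} together with the oracle-setting correctness sketch in Section~\ref{sec:fcit_description}. Your clauses (ii), (iii), and the completeness addendum are exact citations, as you say, and your assembly of clause~(i) via ``$G'$ and $G$ share adjacencies and collider structure, hence the same Markov equivalence class'' is the natural way to make explicit what the paper leaves implicit in its correctness sketch.

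One small sharpening: in ingredient~$(a)$, the reason no true-PAG adjacency is ever removed is not really Theorem~\ref{thm:rb-sound} but rather Faithfulness plus the oracle CI test---if $x$ and $y$ are adjacent in the true MAG, no separating set exists, so the oracle never certifies $X \ind Y \mid S$ and FCIT never deletes the edge. Theorem~\ref{thm:rb-sound} guarantees that when recursive blocking \emph{does} return a set it is a valid blocker, which is the ingredient you need for the other direction (removing spurious edges, i.e., Lemma~\ref{lem:rb-complete} feeding into Theorem~\ref{thm:edge_min}). With that citation swapped, your argument for $(a)$ is clean, and your worry about $(b)$ is already handled by the paper's Correct Unshielded Colliders invariant and legality-revert step, exactly as you outline.
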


\section{Well-formedness of PAGs and Markov}

One observes that FCIT and LV-Dumb always return graphs that are well-formed, in the sense that they satisfy strictly the definition of a PAG. (For a simulation comparison of how well these various algorithms recover PAGs, consider Figure \ref{fig:pag20} and related figures in the Appendix.) We can determine this independently by checking whether the identified Zhang MAG calculated from estimated graph \citep{zhang2008completeness} satisfies the definition of a MAG and whether when converted back to a PAG the orientations of the estimated PAG are recovered precisely. That is, we check the following

\begin{itemize}
  \item \textbf{No Almost-Cycles or Directed Cycles.}
  We check in the Zhang MAG whether any bidirected edge $A \leftrightarrow B$ has a directed path $A \leadsto B$ or $B \leadsto A$ or whether there are any directed cycles from nodes to themselves. If not, the final graph cannot exhibit an almost-cycle or a directed cycle in the induced MAG.

  \item \textbf{Maximality.}
  Next, we check to make sure the graph does not contain any inducing paths from nodes $A$ to $B$ in the graph while $\neg adj(A, B)$, in the Zhang MAG of the estimated graph. If not, then the MAG is maximal.

  \item \textbf{Orientation Rule Compliance.}
  Finally, we check whether converting the Zhang MAG back into the equivalence class PAG it is a member of yields a graph that has the same endpoint markings as the original estimated graph. If not, then the graph lies somewhere between a MAG and a PAG.
\end{itemize}

These three conditions are individually necessary and jointly sufficient for a singly connected graph over measured variables alone to be a structurally well-formed PAG; we can therefore verify well-formedness by checking these conditions directly.\footnote{The Tetrad suite of software contains such a check.} By construction, FCIT and LV-Dumb always return structurally well-formed PAGs, whereas empirically we find that other FCI-style procedures do not always do so.

In particular:

\begin{itemize}
  \item \textbf{FCIT.}  
  By construction, FCIT checks every proposed update for structural validity, ensuring that the algorithm always moves among well-formed PAGs. Consequently, its output is guaranteed to be structural well-formed regardless of unfaithfulness or finite-sample noise.

  \item \textbf{LV-Dumb.}  
  LV-Dumb trivially returns a well-formed PAG by construction, since it outputs the PAG equivalence class of the BOSS output, which is guaranteed to be a DAG (as BOSS is an algorithm that infers a DAG from a particular permutation of the variables).

  \item \textbf{BOSS-FCI GRaSP-FCI, and GFCI.}  
  Under unfaithfulness, these algorithms can return malformed graphs in practice.

  \item \textbf{FCI.}
  Under unfaithfulness, FCI can also produce a graph that is not a well-formed PAG.
\end{itemize}

Note that each of these algorithms in Oracle mode is guaranteed to produce a well-formed PAG.

We next compare FCIT with FCI, GFCI, BOSS-FCI, GRaSP-FCI, and the heuristic LV-Dumb using simulations implemented in Tetrad \citep{ramsey2018tetrad},  focusing on accuracy and efficiency in the presence of latent variables. 

\section{Empirical Evaluation}
\label{sec:empirical_eval}

\subsection{Simulation Results}

\subsubsection{Simulation Setup}

We simulated data from linear Gaussian structural equation models generated from random forward DAGs with latent variables.\footnote{As suggested in an earlier footnote, this is a modeling choice in Tetrad. Scores and tests are also available for the multinomial case, the mixed multinomial/Gaussian case, and the mixed multinomial/post-nonlinear case (see \citep{ramsey2025scalablecausaldiscoveryrecursive}. Also, it should be noted that linear Gaussian scores and tests also work well in the linear non-Gaussian regime \citep{andrews2019learning}). Tests are also available for the general non-parametric case, though not scores currently; these are less scalable.} The data were produced using a standard linear Gaussian simulation method (\texttt{SemSimulation} in Tetrad) with the following design:

\begin{itemize}
  \item \textbf{Graph size and density.} 
    $20$ measured variables with average degree $2$, $4$, or $6$. 
    For larger regimes we used $100$ measured variables (avg. deg. $6$, $10$ latents) and a single stress test with $200$ variables (avg. deg. $6$, $20$ latents).
  \item \textbf{Latent structure.} 
    Each $20$-node run included $0$, $4$, or $8$ latent variables in addition to the measured ones. 
    Latent variables were not saved out but contributed to inducing dependence among observed variables.
  \item \textbf{Coefficients.} 
    Edge coefficients were drawn uniformly from $[-1, 1]$, allowing both positive and negative effects. (Note that we do not use an interval about zero.)
  \item \textbf{Variances.} 
    Independent noise variances were sampled from $U(1,3)$; no additional measurement noise was added.
  \item \textbf{Sample sizes.} 
    $N = 200, 500, 1000, 5000$ for the $20$-node grids. 
    The $100$- and $200$-node settings used $N = 1000$.
  \item \textbf{Replicates.} 
    Each condition was repeated $20$ times with randomized column order.
\end{itemize}

Throughout, the score was the linear Gaussian BIC score with penalty discount $2.0$, and the conditional independence test was Fisher’s $Z$-test at $\alpha = 0.01$. These parameters did not vary across conditions. Graphs were not required to be connected, and maximum degree limits were set sufficiently high to avoid constraining the generated structures.

\subsubsection{Evaluation Metrics}

We report standard adjacency and orientation measures, along with three path-aware precision statistics that compare estimated endpoints to directed reachability in the \emph{true} graph.

\begin{itemize}
  \item \textbf{Adjacency}: 
    precision and recall for adjacency recovery (AP, AR).
  \item \textbf{Arrowheads and tails}: 
    precision and recall for arrow correctness (AHP, AHR); precision and recall for arrow correctness for adjacencies in common with the true graph (AHPC, AHRC).
  \item \textbf{Arrow Path Precision} (\emph{no back-reach}): 
    proportion of estimated edges $X *\!\to Y$ such that there is \emph{no} directed path $Y \leadsto X$ in the true DAG ($\text{*->-Prec}$)
  \item \textbf{Tail Path Precision} (\emph{forward reach}): 
    proportion of estimated edges $X \to Y$ such that there \emph{is} a directed path $X \leadsto Y$ in the true DAG. ($\text{-->-Prec}$)
  \item \textbf{Bidirected Latent Precision}: 
    proportion of estimated $X \leftrightarrow Y$ edges for which a latent variable $L$ exists with $X \leftarrow L \rightarrow Y$ in the generating DAG ($\text{<->-Lat-Prec}$)
  \item \textbf{Well-formed PAGs}: 
    proportion of discovered graphs that satisfy the structural definition of a valid PAG.
\end{itemize}

For AP, AR, AHP, AHR, AHPC, and AHRC, estimated graphs are compared to the true PAG; for $\text{*->Prec}$, $\text{-->-Prec}$, and $\text{<->-Lat-Prec}$, graphs are compared to the true DAG (with latents).

We also report \textbf{runtime} (E-CPU) to assess computational efficiency.

\subsubsection{20 nodes.}
Results for the 20-node case with average degree 4 are shown in Figures~\ref{fig:ap20}–\ref{fig:cpu20}. Each plotted point is the average of 20 runs per condition, with sample sizes $N \in \{200, 500, 1000, 5000\}$ (log-scaled on the x-axis) and averaged across 0, 4, and 8 latent common causes. All metrics here compare each estimated graph to the true PAG.

All methods achieve very high adjacency precision. \textsc{FCIT} and \textsc{LV-Dumb} consistently lead most orientation-precision metrics (AP/AHP, AHPC, arrow-path precision) and always return valid PAGs. Tail-path precision is highest for \textsc{BOSS-FCI} and \textsc{GFCI} at large $N$, with \textsc{FCIT} close behind. \textsc{FCIT} attains the best bidirected latent-path precision at large samples, while \textsc{LV-Dumb}—by design—does not orient bidirected edges. In runtime, \textsc{LV-Dumb} (and typically \textsc{BOSS-FCI}) are fastest; \textsc{GFCI} is slowest; \textsc{FCIT} and \textsc{GRaSP-FCI} fall in between. Results for average degrees 2 and 6 (see Appendix) show qualitatively similar patterns, with denser graphs highlighting the tail-path strengths of \textsc{BOSS-FCI}/\textsc{GFCI} and the robustness of \textsc{FCIT}/\textsc{LV-Dumb} on PAG validity.

\subsubsection{100 nodes.}
Results for the 100-node case are summarized in Table~\ref{tab:100node}. These models have 10 latent variables, average degree 10, and $N=1000$, using the same score and test as in the 20-node experiments; values are averaged over 20 runs. Because DAG$\rightarrow$PAG conversion is computationally expensive at this scale, we do not evaluate against the true PAG. Instead, we report only statistics that compare each algorithm’s estimated PAG to the ground-truth DAG (i.e., *->-Prec, -->-Prec, and <->-Lat-Prec), along with CPU time (ms) and whether the returned graph is a legal PAG. To keep runtimes reasonable, we capped the maximum conditioning-set size (“depth”) in all conditional independence tests at 7.

At this larger scale, \textsc{LV-Dumb} and \textsc{FCIT} both maintain exceptionally high arrow-path precision ($\approx0.99$) and consistently produce structurally valid PAGs, though their tail-path precision is somewhat lower ($\approx0.93$). In contrast, \textsc{BOSS-FCI} and \textsc{GRaSP-FCI} display the opposite trade-off—achieving very high tail-path precision ($\approx0.97$) but lower arrow-path precision ($\approx0.89$). Their PAG validity, however, drops to roughly 0.10–0.25. \textsc{GFCI} performs comparably on both arrow and tail paths but with reduced latent-edge precision and moderate PAG validity (0.20), whereas \textsc{FCI} shows the weakest precision scores overall but remains by far the fastest method. Bidirected-edge precision remains low for all methods at this scale, suggesting that in large models such edges should be interpreted cautiously or reserved for qualitative guidance. Overall, these results mirror the smaller-graph findings: \textsc{FCIT} offers the best balance of accuracy, scalability, and structural validity, while \textsc{LV-Dumb} remains the most efficient and structurally stable baseline.

\subsubsection{Takeaway.}
Across sparsity levels, \textsc{FCIT} and \textsc{LV-Dumb} remain the most accurate and stable overall. Both consistently produce valid PAGs, sustain near-perfect arrow- and tail-path precision, and recover adjacencies reliably even under latent confounding. \textsc{LV-Dumb} matches \textsc{FCIT}’s orientation accuracy while running substantially faster, making it a strong baseline when latent structure need not be modeled explicitly.
\textsc{BOSS-FCI} and \textsc{GRaSP-FCI} maintain high directional precision but occasionally yield malformed PAGs and incur heavier computational cost at scale. \textsc{GFCI} continues to perform reasonably well on directed edges yet shows moderate PAG validity and weaker latent-edge precision, while standard \textsc{FCI} remains the fastest but the least accurate overall.
In sum, \textsc{FCIT} offers the best balance of correctness, scalability, and structural validity among algorithms that consistently return legal PAGs across graph densities.

\begin{figure}
    \centering
    \includegraphics[width=0.5\linewidth]{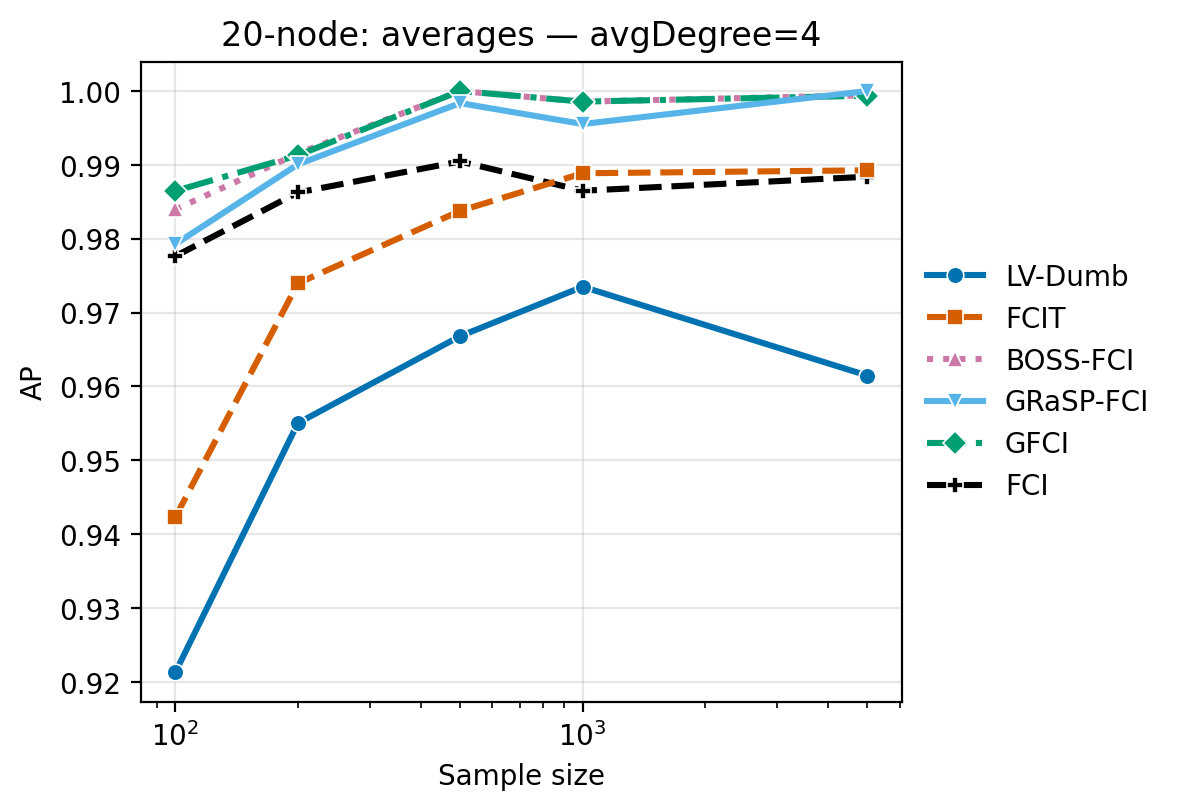}
    \caption{Adjacency Precision (AP) for 20-node graphs with average degree 4. 
    All algorithms maintain very high adjacency precision across sample sizes. LV-Dumb trails slightly.
    \textsc{LV-Dumb} is slightly lower than the others but remains acceptable.}
    \label{fig:ap20}
\end{figure}

\begin{figure}
    \centering
    \includegraphics[width=0.5\linewidth]{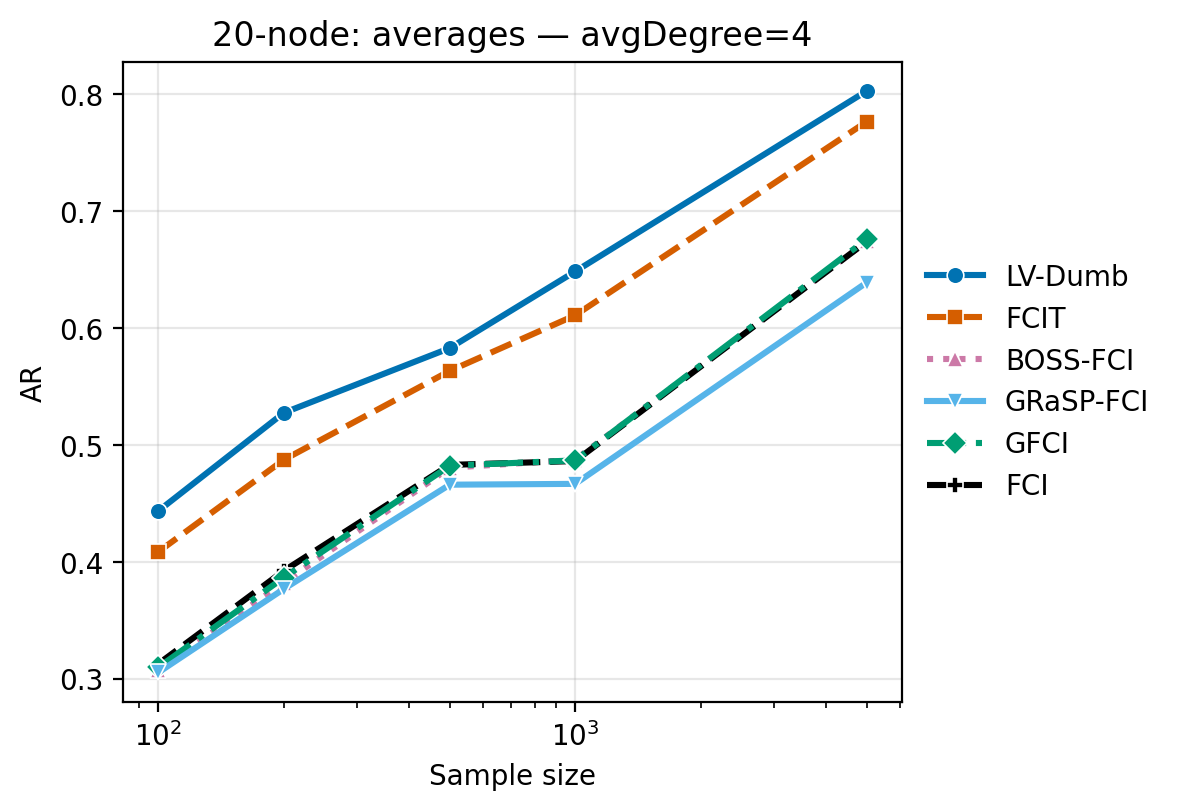}
    \caption{Adjacency Recall (AR) for 20-node graphs with average degree 4. 
    \textsc{LV-Dumb} and \textsc{FCIT} achieve the highest recall across all sample sizes, 
    reflecting their ability to recover true adjacencies even in the presence of latent confounding. 
    \textsc{BOSS-FCI}, \textsc{GRaSP-FCI}, \textsc{GFCI}, and \textsc{FCI} improve steadily with $N$. 
    Results are averaged over graphs with 0, 4, and 8 latent common causes.}
    \label{fig:ar20}
\end{figure}

\begin{figure}
    \centering
    \includegraphics[width=0.5\linewidth]{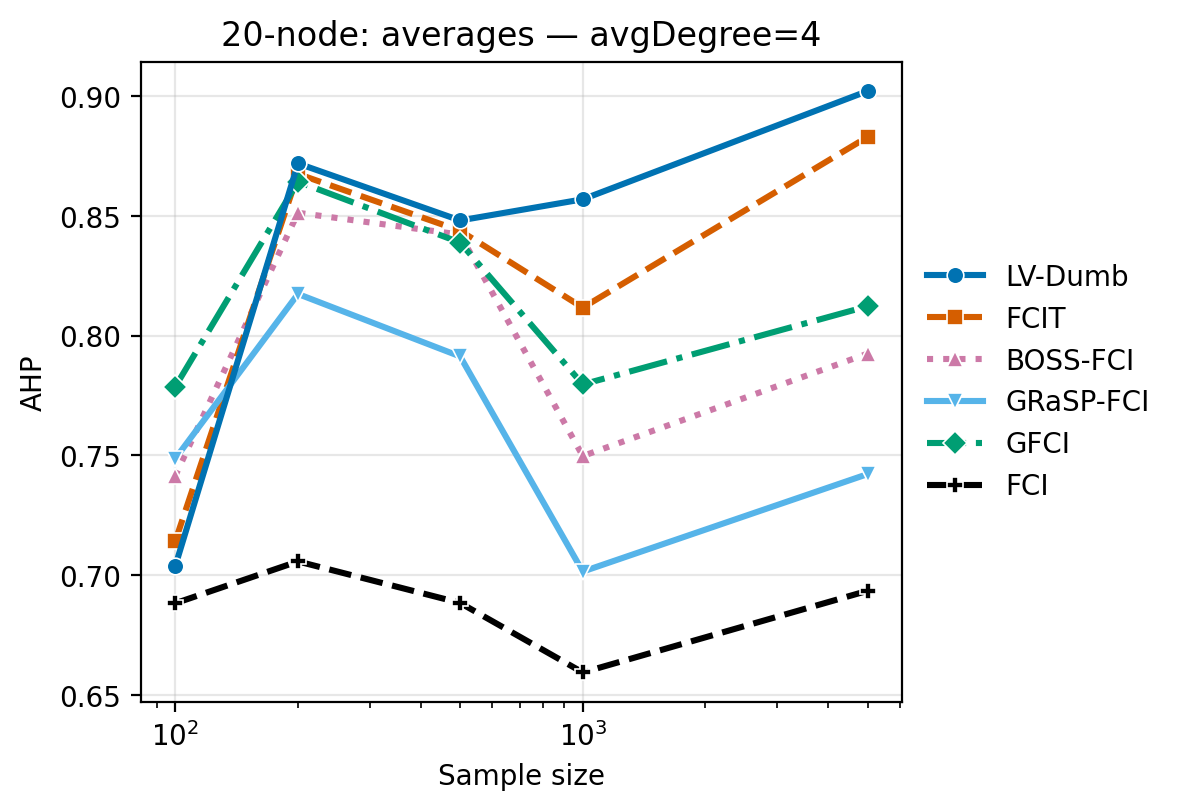}
    \caption{Arrowhead Precision (AHP) for 20-node graphs with average degree 4. 
    \textsc{LV-Dumb} and \textsc{FCIT} maintain the highest precision, followed by \textsc{BOSS-FCI} and \textsc{GRaSP-FCI}. 
    \textsc{FCI} remains lowest throughout, while \textsc{GFCI} improves moderately at large $N$. 
    Results are averaged over graphs with 0, 4, and 8 latent common causes.}
    \label{fig:ahp20}
\end{figure}

\begin{figure}
    \centering
    \includegraphics[width=0.5\linewidth]{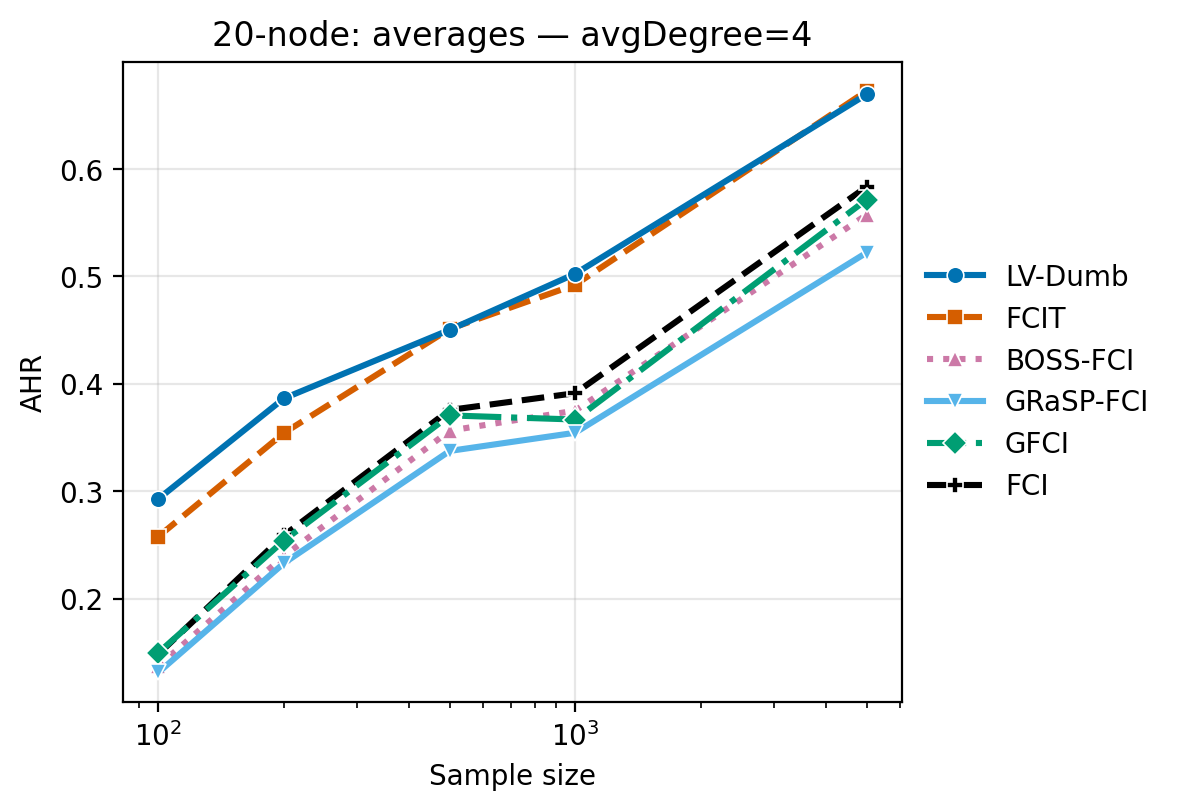}
    \caption{Arrowhead Recall (AHR) for 20-node graphs with average degree 4. 
    \textsc{FCIT} and \textsc{LV-Dumb} show the strongest growth in recall with increasing $N$, approaching 0.6 at large $N$. 
    \textsc{BOSS-FCI}, \textsc{GFCI}, and \textsc{GRaSP-FCI} track closely, while \textsc{GFCI} remains lower overall. 
    Results are averaged over graphs with 0, 4, and 8 latent common causes.}
    \label{fig:ahr20}
\end{figure}

\begin{figure}
    \centering
    \includegraphics[width=0.5\linewidth]{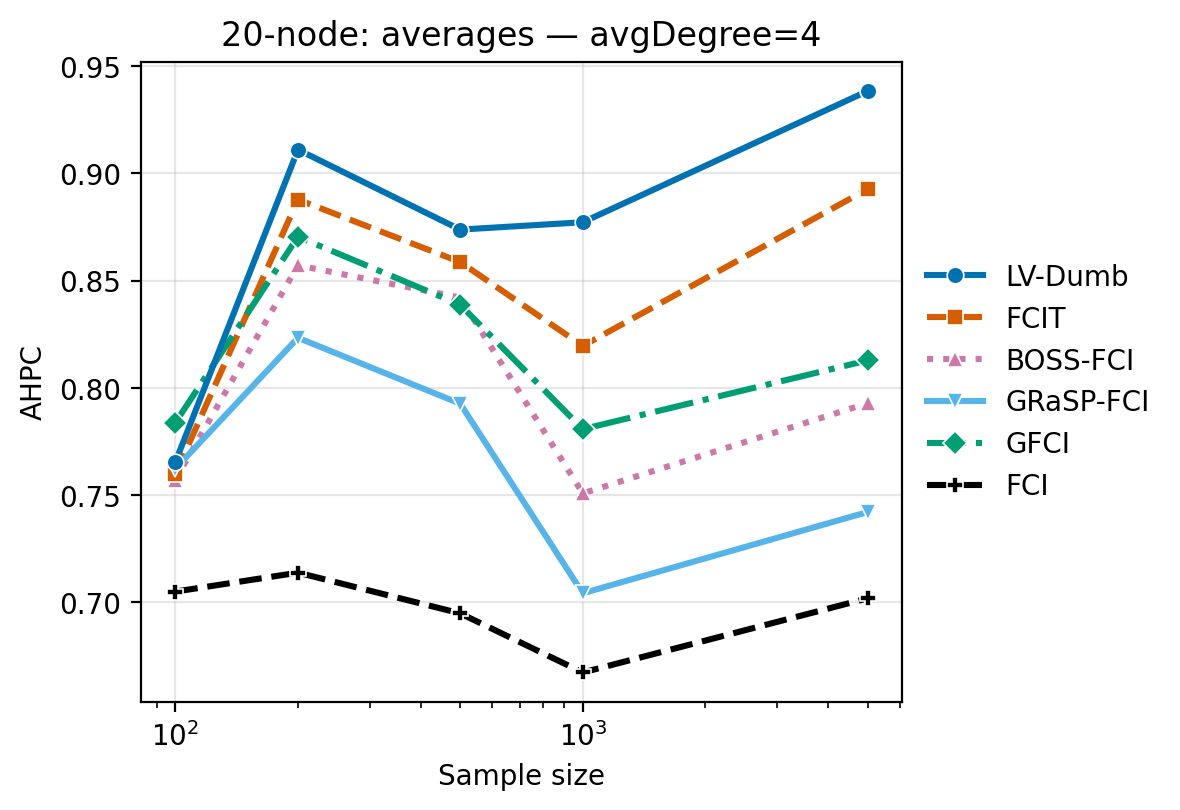}
    \caption{Arrowhead Precision for Common Adjacencies (AHPC) for 20-node graphs with average degree 4. 
    \textsc{LV-Dumb} and \textsc{FCIT} achieve the highest arrowhead precision, with \textsc{LV-Dumb} reaching 0.92 at large $N$. 
    \textsc{BOSS-FCI} and \textsc{GRaSP-FCI} show similar trends, while \textsc{GFCI} and especially \textsc{FCI} remain lower. 
    Results are averaged over graphs with 0, 4, and 8 latent common causes.}
    \label{fig:ahpc20}
\end{figure}

\begin{figure}
    \centering
    \includegraphics[width=0.5\linewidth]{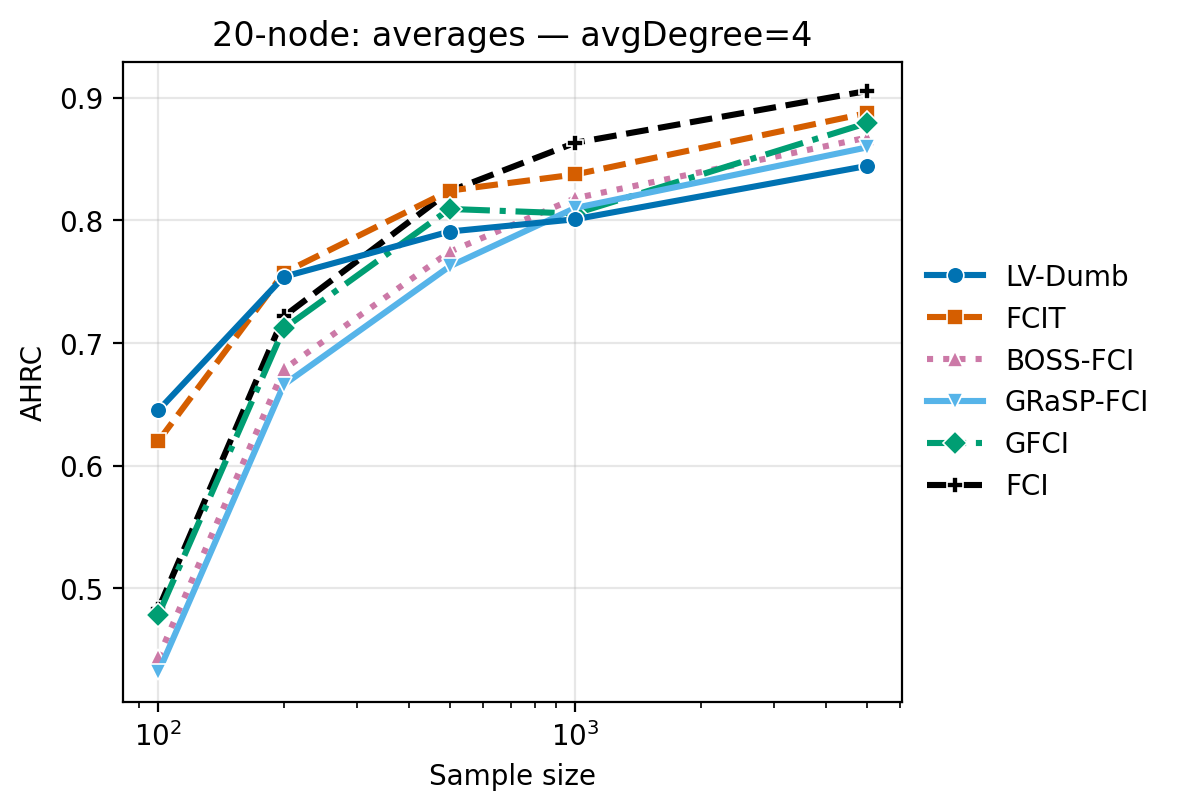}
    \caption{Arrowhead Recall for Common Adjacencies (AHRC) for 20-node graphs with average degree 4. 
    All algorithms improve steadily with increasing $N$, with \textsc{FCI}, \textsc{GRaSP-FCI}, and \textsc{BOSS-FCI} converging near 0.9. 
    \textsc{FCIT} levels off slightly below these, while \textsc{LV-Dumb} stabilizes around 0.85. 
    \textsc{GFCI} performs comparably to the best methods at large samples. 
    Results are averaged over graphs with 0, 4, and 8 latent common causes.}
    \label{fig:ahrc20}
\end{figure}

\begin{figure}
    \centering
    \includegraphics[width=0.5\linewidth]{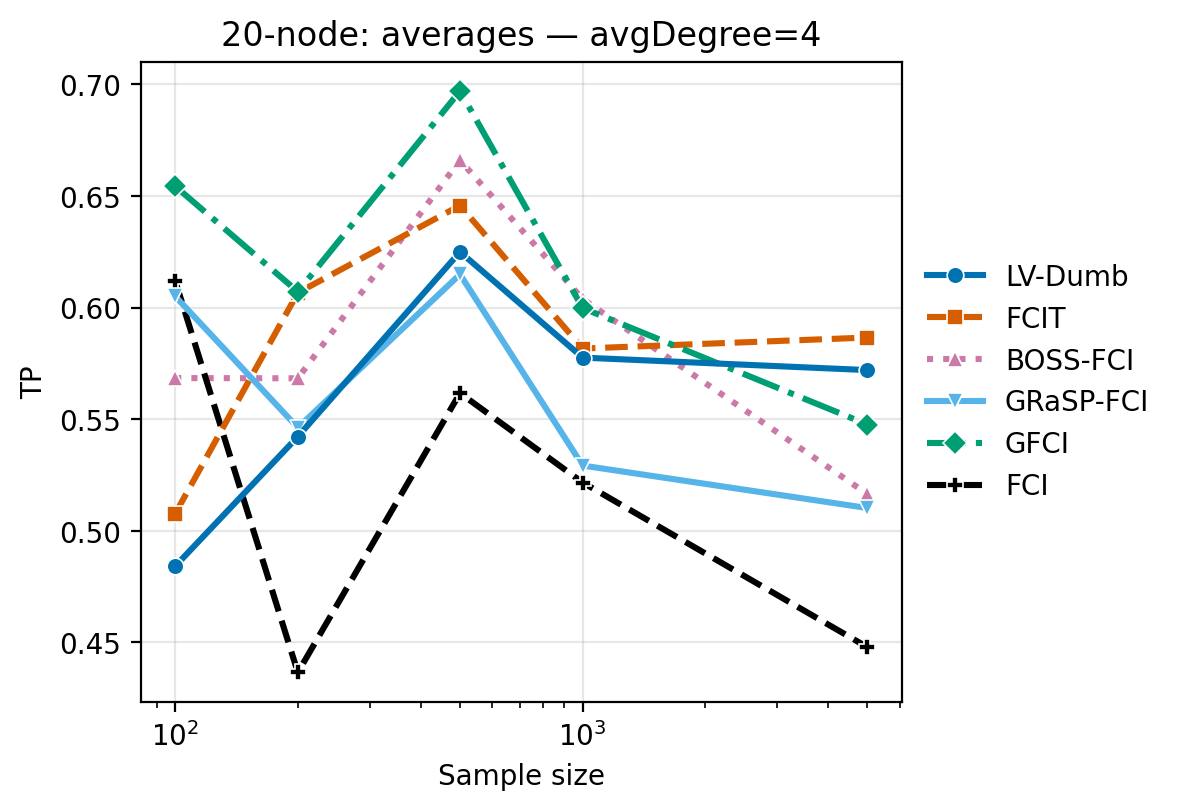}
    \caption{Tail Precision (TP) for 20-node graphs with average degree 4. 
    \textsc{GFCI}, \textsc{LV-Dumb}, and \textsc{LV-Dumb} lead here, with other algorihtms lagging behind.
    This metric measures tail precision relative to the true PAG; for a comparison to the directed paths in the true DAG, see Tail Path Precision.}
    \label{fig:tp20}
\end{figure}

\begin{figure}
    \centering
    \includegraphics[width=0.5\linewidth]{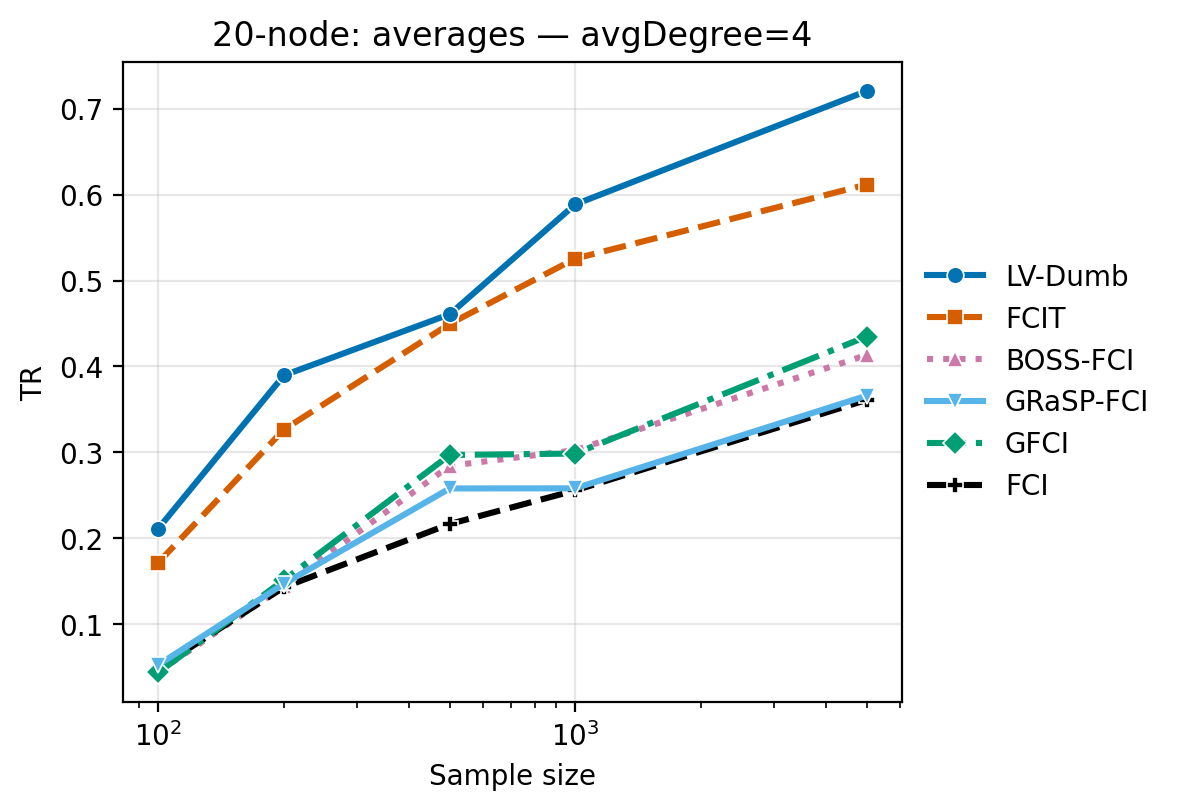}
    \caption{Tail Recall (TR) for 20-node graphs with average degree 4. 
    \textsc{LV-Dumb} and \textsc{FCIT} achieve the highest recall across all sample sizes, reflecting their ability to recover correct tail orientations even in the presence of latent confounding. 
    The other algorithms are grouped substantially lower. 
    Results are averaged over graphs with 0, 4, and 8 latent common causes.}
    \label{fig:tr20}
\end{figure}

\begin{figure}
    \centering
    \includegraphics[width=0.5\linewidth]{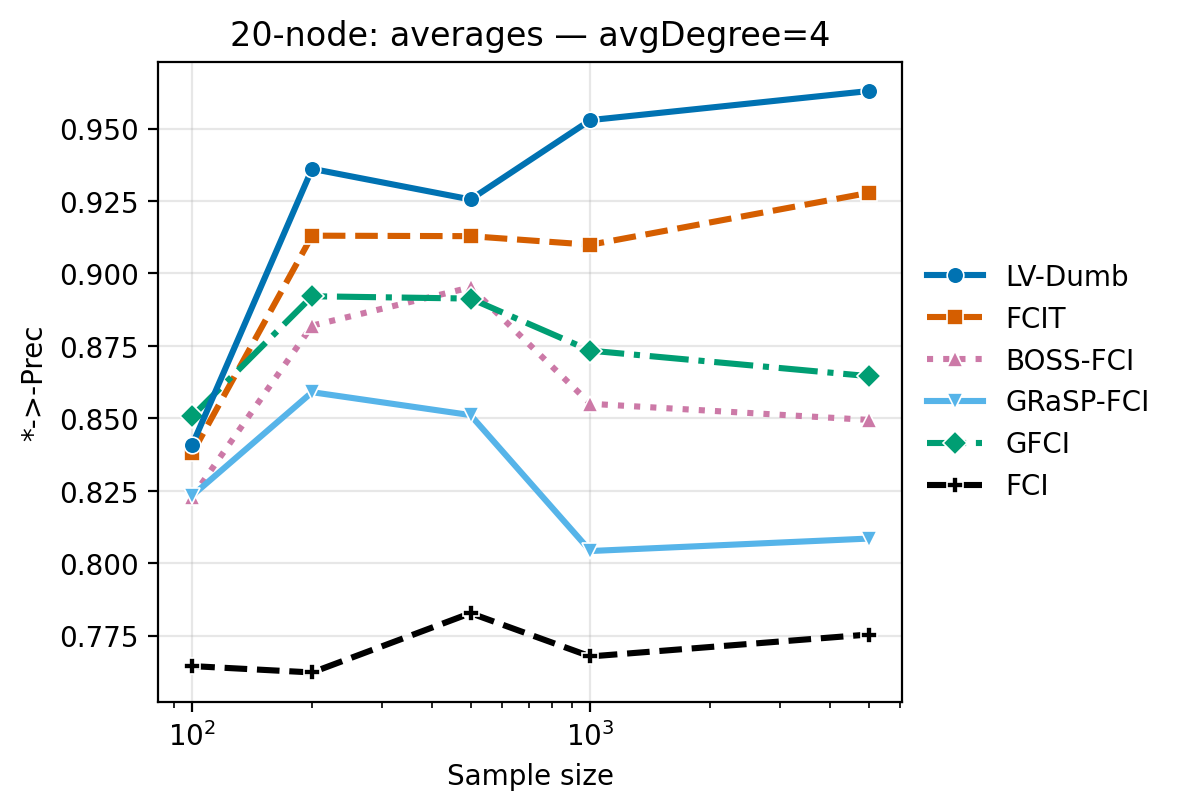}
    \caption{Arrow Path Precision for 20-node graphs with average degree 4. 
    \textsc{LV-Dumb} and \textsc{FCIT} again lead in directional precision, followed closely by \textsc{BOSS-FCI}, \textsc{GRaSP-FCI}, and \textsc{GFCI}. 
    \textsc{FCI} trails across sample sizes. 
    Results are averaged over graphs with 0, 4, and 8 latent common causes.}
    \label{fig:arrow20}
\end{figure}

\begin{figure}
    \centering
    \includegraphics[width=0.5\linewidth]{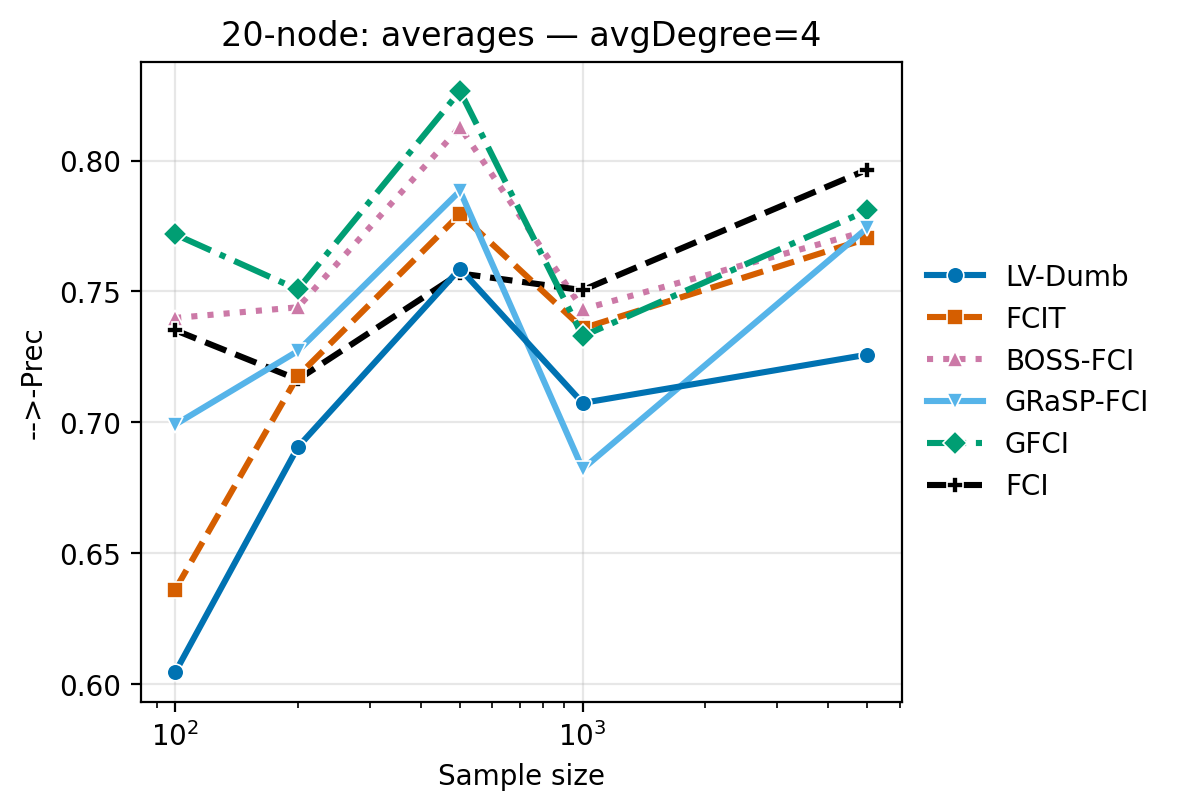}
    \caption{Tail Path Precision for 20-node graphs with average degree 4. 
    The algorithms are quite competitive on this measure, with \textsc{FCI} lagging somewhat behind.
    Results are averaged over graphs with 0, 4, and 8 latent common causes.}
    \label{fig:tail20}
\end{figure}

\begin{figure}
    \centering
    \includegraphics[width=0.5\linewidth]{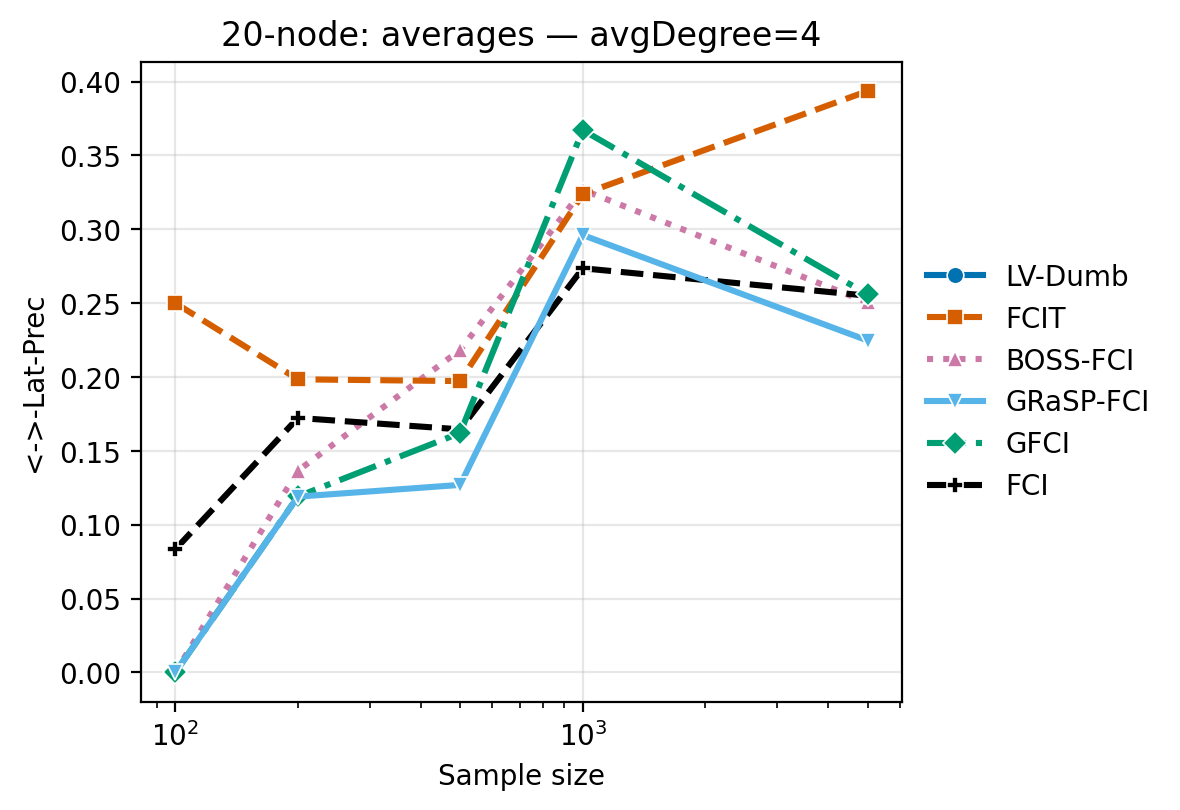}
    \caption{Bidirected Latent Path Precision for 20-node graphs with average degree 4. 
    Averaged over the number of latent variables, bidirected edge precision is somewhat variable across methods. 
    \textsc{FCIT} is the most consistent across sample sizes. 
    Results are averaged over graphs with 0, 4, and 8 latent common causes.}
    \label{fig:bidirected20}
\end{figure}

\begin{figure}
    \centering
    \includegraphics[width=0.5\linewidth]{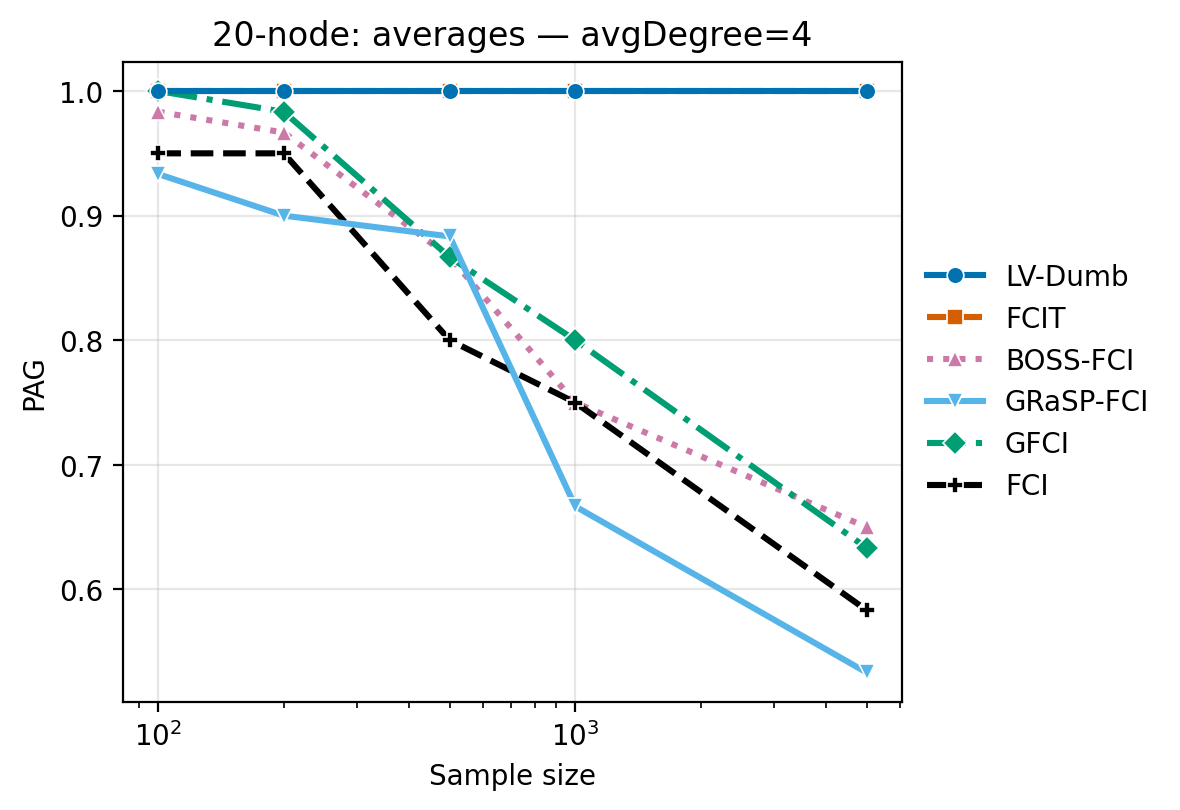}
    \caption{Proportion of well-formed PAGs for 20-node graphs with average degree 4. 
    As theoretically required, \textsc{LV-Dumb} and \textsc{FCIT} achieve perfect PAG validity, whereas other algorithms fall off—sometimes dramatically. 
    Results are averaged over graphs with 0, 4, and 8 latent common causes.}
    \label{fig:pag20}
\end{figure}

\begin{figure}
    \centering
    \includegraphics[width=0.5\linewidth]{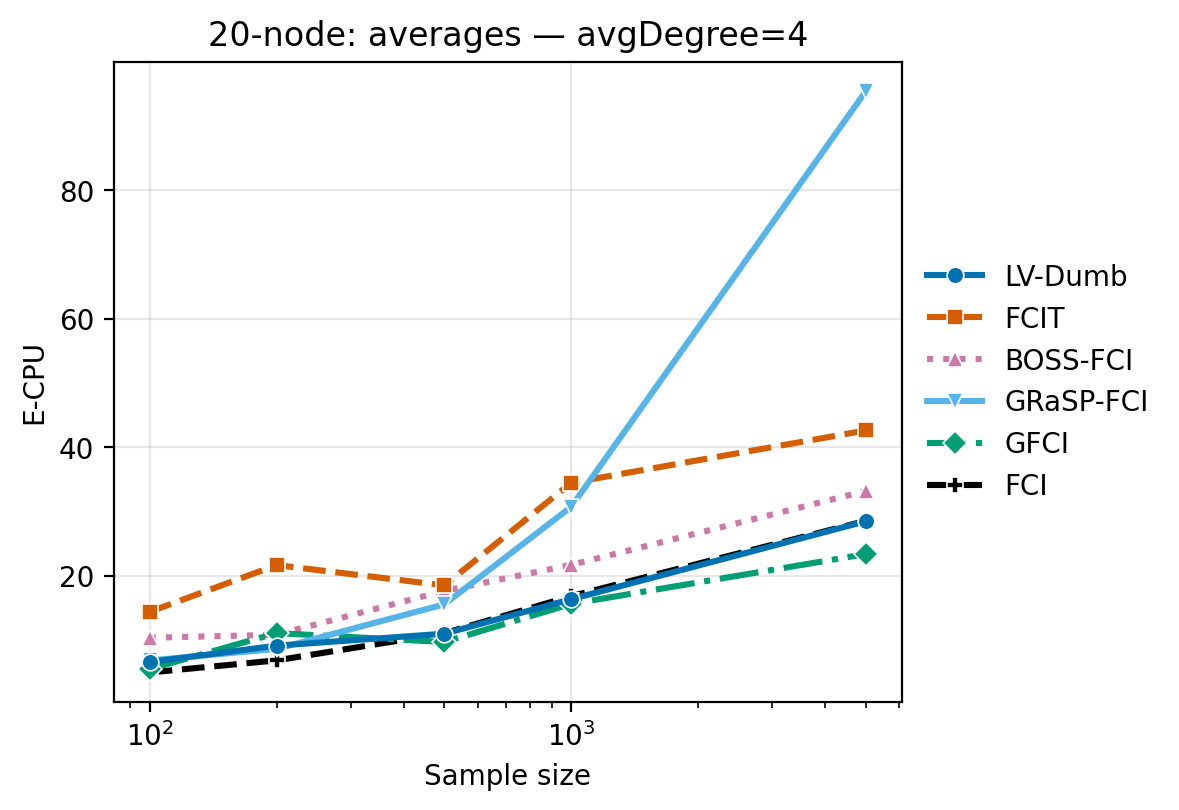}
    \caption{Runtime (CPU milliseconds) for 20-node graphs with average degree 4.  
    All algorithms are quite fast in this regime, with the exception of \textsc{GFCI}, which slows at $N{=}5000$. 
    \textsc{LV-Dumb} lags behind \textsc{LV-Dumb}.
    Results are averaged over graphs with 0, 4, and 8 latent common causes.}
    \label{fig:cpu20}
\end{figure}

\begin{table}[htbp]
\centering
\caption{Comparison of algorithms on precision and PAG metrics.}
\begin{tabular}{lccccc}
\toprule
\textbf{Algorithm} & \textbf{*->-Prec} & \textbf{-->-Prec} & \textbf{<->-Lat-Prec} & \textbf{E-CPU} & \textbf{PAG} \\
\midrule
LV-Dumb    & 0.9939 & 0.9320 & *      & 21850.5654 & 1.0000 \\
FCIT       & 0.9937 & 0.9319 & 0.0000 & 24430.5061 & 1.0000 \\
BOSS-FCI   & 0.8971 & 0.9763 & 0.0961 & 23183.7960 & 0.1000 \\
GRaSP-FCI  & 0.8889 & 0.9693 & 0.0966 & 92879.1827 & 0.2500 \\
GFCI       & 0.7861 & 0.8812 & 0.0604 & 54695.2840 & 0.2000 \\
FCI        & 0.6519 & 0.8200 & 0.0442 & 1253.2596  & 0.0500 \\
\bottomrule
\end{tabular}
\label{tab:100node}
\end{table}

\subsection{Real-Data Analysis}

We analyzed the Algerian Forest Fire dataset from the UCI Machine Learning Repository \citep{algerian_forest_fires_547}. Because we have not discussed nonlinearity or determinism, we removed the six deterministic Canadian Forest Fire System indices—BUI (Build-Up Index), DC (Drought Code), DMC (Duff Moisture Code), FFMC (Fine Fuel Moisture Code), and FWI (Fire Weather Index). We also excluded \emph{Day} (cyclical variable) and \emph{Year} (constant at 2012 and therefore non-causal). This left the variables \emph{Region} (binary), \emph{Month}, \emph{Relative Humidity (RH)}, \emph{Rain}, \emph{Temperature}, \emph{Wind Speed (Ws)}, and \emph{Fire} (binary occurrence of fire).

Because the dataset mixes discrete and continuous variables, we used the Degenerate Gaussian score and test \citep{andrews2019learning}, with penalty discount $1.0$ and $\alpha = 0.05$. This method models discrete variables as multinomial and continuous variables as linearly related, enabling hybrid analysis under a unified likelihood framework.

The resulting FCIT model (Figure~\ref{fig:algerian}) closely follows the initial BOSS CPDAG while clarifying uncertain endpoints in the inferred PAG. Notably, \emph{Fire} has two direct causes—\emph{Temperature} and \emph{Relative Humidity}—with \emph{Rain} as a possible third, a pattern that is meteorologically plausible. \emph{Month} and \emph{Region} are correctly identified as exogenous, and the remaining edges are sensible.

For comparison, LV-Dumb returns a PAG nearly identical to the BOSS CPDAG, without introducing bidirected edges. BOSS-FCI and GRaSP-FCI produces similar but slightly denser graphs. 

In \citet{ramsey2025scalablecausaldiscoveryrecursive}, we conducted a complementary analysis using ChatGPT-4 to provide “expert” judgments of edge and orientation correctness for this dataset. This included models where the degenerate Gaussian score was applied without the Canadian Fire System variables (i.e., the same configuration analyzed here), in cases where no domain expert was available. In that study, the FCIT graph was judged to offer the most accurate overall summary, whereas alternative models exhibited spurious adjacencies or misoriented edges. FCIT preserved the accuracy of the BOSS initialization while refining orientations in a manner consistent with PAG semantics, highlighting its advantage—when paired with the degenerate Gaussian test and score—for real-world data containing mixtures of continuous and discrete variables.

Beyond our own benchmark datasets, FCIT and the related nonlinear basis-function test/score \citep{ramsey2025scalablecausaldiscoveryrecursive} have recently been applied to astrophysical data at scale.
\citep{desmond2025causalstructuregalacticastrophysics} applied FCIT to a highly nonlinear, non-Gaussian sample of roughly $5\times10^5$ low-redshift galaxies from the NASA Sloan Atlas. Using the Basis-Function LRT for independence testing and Basis-Function BIC for scoring, they recovered a causal graph revealing hierarchical, mass-driven relations among galaxy properties while distinguishing them from observational selection effects, which in this note was judged to be quite plausible, recovering known relationships among the studied variables. This study, submitted to MNRAS, demonstrates that FCIT with an appropriate test and score scales to genuinely astronomical sample sizes and provides interpretable and plausible causal hypotheses for physical mechanisms—marking the first published scientific deployment of FCIT.

\begin{figure}
    \centering
    \includegraphics[width=0.3\linewidth]{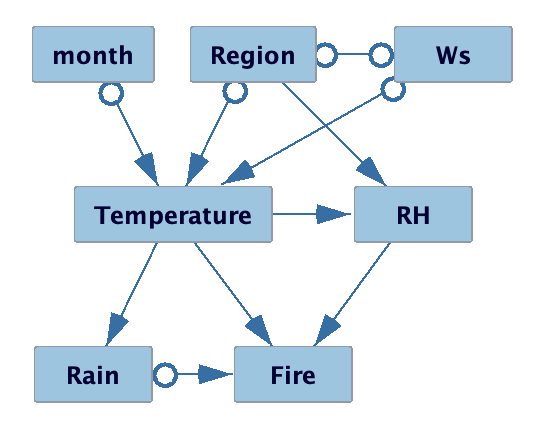}
    \caption{FCIT result for Algerian Forest Fire data using the Degenerate Gaussian score and test, excluding deterministic indices.}
    \label{fig:algerian}
\end{figure}

\section{Discussion and Conclusion}

We have introduced a family of score-guided, mixed-strategy algorithms for causal discovery under latent variables and selection bias. \emph{BOSS-FCI} and \emph{GRaSP-FCI} extend the GFCI framework by substituting BOSS or GRaSP for FGES in the initial CPDAG search. \emph{LV-Dumb} provides a simple yet effective heuristic alternative that returns the PAG equivalence class of the BOSS DAG, scaling extremely well but lacking the ability to remove edges indicative of latent confounding. Finally, \emph{FCIT} introduces a targeted-testing strategy that addresses a key statistical limitation of FCI-style methods.

A central innovation of FCIT is its integration of recursive path blocking into edge removal, in conjunction with discriminating paths, allowing separating sets to be identified using full path structure rather than adjacency subsets alone. Each update is checked for structural validity, guaranteeing that FCIT always produces a well-formed PAG. Our theoretical results establish the soundness of recursive blocking, the edge-minimality of the returned PAG, and the soundness of all orientations in the large-sample limit. Empirically, FCIT achieves consistently high adjacency and orientation precision, scales effectively to large models, and outperforms existing FCI-style methods in both accuracy and efficiency. Although not formally correct, \emph{LV-Dumb} remains a remarkably fast and accurate heuristic for very large problems.

These contributions open several directions for future work. First, automated strategies for jointly tuning score penalties and test thresholds could adaptively balance Type~I and Type~II errors in practice. Second, extending FCIT to dynamic or time-series data is a natural next step, as are variants that incorporate interventional or mixed observational–interventional data. Finally, while FCIT infers PAGs that reflect latent confounding, it does not explicitly model the latent variables themselves; integrating latent-variable identification within the same framework remains an important challenge.

Finally, we would be remiss not to note an important opportunity for future software development relevant to all algorithms in this paper that begin with a BOSS initialization—\emph{LV-Dumb}, \emph{BOSS-FCI}, and \emph{FCIT}—for the linear Gaussian or linear non-Gaussian cases. Our implementations have focused on Java for portability and integration with the Tetrad platform, which already provides strong performance advantages over many alternatives. However, a recent C-language implementation, \emph{CBOSS}, available in the \texttt{causal-get} GitHub package (\url{https://github.com/bja43/causal-get}, due to Bryan Andrews), achieves substantial speed improvements.\footnote{The speed improvements are largely due to low-level linear algebra optimizations for calculating likelihoods. The practical extension to the linear non-Gaussian case is benchmarked in \citet{andrews2023fast}.} For problems with 100 nodes and average degree 10—the “large” case profiled above—CBOSS completes in under a second with performance identical to our Java-based BOSS and scales efficiently to much larger problems. This opens the possibility of integrating CBOSS as an external module for the BOSS phase of our algorithms. In Java, such integration poses some engineering challenges, but a hybrid approach—running the BOSS stage in C and importing the resulting graph into Java for subsequent processing—would be straightforward. Alternatively, the remaining steps of our BOSS-based algorithms could themselves be reimplemented in C. Either direction represents a practical and worthwhile programming goal for future work.

In summary, FCIT provides a practical and statistically efficient alternative to existing FCI-style algorithms, consistently yielding well-formed PAGs with improved accuracy and scalability. Together with \emph{BOSS-FCI}, \emph{GRaSP-FCI}, and the heuristic \emph{LV-Dumb}, it expands the toolkit for causal discovery under latent variables—bridging the gap between correctness, scalability, and practical usability.

\begin{ack}
Thanks to Clark Glymour for helpful comments.

JR was supported by the US Department of Defense under Contract Number FA8702-15-D-0002 with Carnegie Mellon University for the operation of the Software Engineering Institute. The content of this paper is solely the responsibility of the authors and does not necessarily represent the official views of these funding agencies. BA was supported by the US National Institutes of Health under the Comorbidity: Substance Use Disorders and Other Psychiatric Conditions Training Program T32DA037183. The authors have no conflict of interest to report. PS was supposed by NIH Award Number: 1043252, ``Interpretable graphical models for large multi-model COPD data'' and NSF award number 2229881, ``AI Institute for Societal Decision Making.''
\end{ack}

\bibliographystyle{apalike}
\bibliography{refs.bib} 

\appendix

\section{Algorithm Details}
\label{sec:algorithm_details}

\subsection{GFCI Variants, Find Path to Target, Recursive Blocking}

The structure of GFCI, BOSS-FCI and GRaSP-FCI is shown in Algorithm \ref{alg:Star-FCI}, which we refer to using the templated name, ``Star-FCI''. Note that we do not include the possible d-sep step in this code which was included in the published version of GFCI, following our conjecture that it is not needed.

For FCIT, The crucial algorithm \textit{find\_path\_to\_target} is given in Algorithm \ref{alg:find_path_to_target}, along with helper methods in Algorithm \ref{alg:getReachableNodes}. This is used in the algorithm, \textit{block\_paths\_recursively}, Algorithm  \ref{alg:block_paths_recursively}, to construct sets \(\textbf{B}\) on the fly to block all paths between variables \(X\) and \(Y\). This uses the notion of reachability given in \citep{geiger1990d}. The optimization of the discriminating path rule R4 is given in Algorithm \ref{alg:ddp_recursive}.
Section \ref{app:discpaths} gives the procedure we use to list discriminating and pre-discriminating paths.

As an intuition, the recursive blocking procedure underlying FCIT is closely related to the standard depth-first search algorithm for determining d-separation between variables $X$ and $Y$ given a conditioning set $\mathbf{Z}$ \citep{geiger1990d}. Its main innovation is to construct the conditioning set dynamically: as the algorithm explores paths from $X$ to $Y$, it incrementally builds a blocking set $\mathbf{B}$ by adding nodes that close open trails—“in a clever way.”\footnote{We borrow this turn of phrase from \citet{zhang2008completeness}.} When the procedure halts and $X$ and $Y$ are not adjacent, it returns a final set $\mathbf{B_{final}}$ such that $\text{d-sep}(X,Y \mid \mathbf{B}_{final})$, or \textsc{Null} if no such separating set exists. The same logic applies to DAGs (directed acyclic graphs), CPDAGs (completed partially directed acyclic graphs), MAGs (mixed ancestral graphs), and PAGs (partial ancestral graphs). If $X$ and $Y$ are adjacent, the procedure instead returns a set that blocks all non-inducing paths between them—a property that FCIT directly exploits.

\begin{algorithm}
\caption{\texttt{getReachableNodes} and \texttt{reachable}}
\label{alg:getReachableNodes}
\textbf{Goal:} Determine which adjacent nodes \(c\) of \(b\) can be traversed, 
given a prior node \(a\) and a conditioning set \(\mathbf{B}\).

\vspace{1em}
\textbf{Procedure:} \(\texttt{getReachableNodes}(\textit{graph}, a, b, \mathbf{B})\)
\begin{enumerate}
  \item Initialize an empty list \(\textit{reachable}\).
  \item For each node \(c \in \textit{graph.getAdjacentNodes}(b)\):
  \begin{enumerate}
    \item If \(c = a\), \textbf{continue}.
    \item If \(\texttt{reachable}(\textit{graph}, a, b, c, \mathbf{B})\) is true, append \(c\) to \(\textit{reachable}\).
  \end{enumerate}
  \item \textbf{return} \(\textit{reachable}\).
\end{enumerate}

\vspace{1em}
\textbf{Helper:} \(\texttt{reachable}(\textit{graph}, a, b, c, \mathbf{B})\)
\begin{enumerate}
  \item Let \(\textit{collider} \gets \textit{graph.isDefCollider}(a,b,c)\).
  \item \textbf{If} \((\lnot \textit{collider} \;\lor\; \textit{graph.isUnderlineTriple}(a,b,c))\) 
        \textbf{and} \(b \notin \mathbf{B}\), \textbf{return} \(\text{true}\).
  \item \textbf{Else} \textbf{return} \(\textit{collider} \wedge \textit{isAncestorOfAnyB}(b,\mathbf{B})\).
\end{enumerate}
\end{algorithm}

\begin{algorithm}
\caption{Recursive Blocking Procedure: \texttt{block\_paths\_recursively}}
\label{alg:block_paths_recursively}
\begin{algorithmic}[1]
\STATE \textbf{Goal:} Build a candidate conditioning set $\mathbf{B}$ that blocks all blockable $m$-connecting paths from $x$ to $y$, ignoring the direct edge $x\!-\!y$ on the first hop. If no valid blocking set exists, return \textsc{Null}.
\STATE \textbf{Input:} PAG $G$; nodes $x,y$; initial conditioning set $\textit{containing}$ (typically $\varnothing$).
\STATE Set \(\mathbf{B} \gets \textit{containing}\).
\STATE Initialize \(\textbf{P} \gets \langle x \rangle\).
\FORALL{nodes \(b \in \mathrm{Adj}_G(x)\)}
  \IF{\(b = y\)}
    \STATE \textbf{continue} \COMMENT{Ignore the direct edge \(x\text{--}y\) on the first hop}
  \ENDIF
  \STATE \(r \gets \texttt{find\_path\_to\_target}(G,\, a{=}x,\, b,\, y,\, \textbf{P},\, \mathbf{B})\)
  \IF{\(r = \textsc{Unblockable}\) \textbf{or} \(r = \textsc{Indeterminate}\)}
    \STATE \textbf{return} \textsc{Null} \COMMENT{No valid graphical sepset exists}
  \ENDIF
\ENDFOR
STATE \textbf{return} $\mathbf{B}$ \COMMENT{All explored branches were blocked; may be $\varnothing$}
\end{algorithmic}
\end{algorithm}

\begin{algorithm}
\caption{*-FCI Algorithm (``Star-FCI'')}
\label{alg:Star-FCI}

\KwIn{
    Dataset $\mathcal{D}$ with $n$ variables \\
    Significance level $\alpha$ \\
    Edge-minimal Markovian CPDAG procedure $\mathcal{M}$ for initial graph estimation \\
    Conditional independence oracle
}

\KwOut{Estimated PAG $\mathcal{G}$}

\BlankLine
\textbf{Step 1: Initial Graph Estimation} \\
Obtain an initial graph $\mathcal{G}_{\text{CPDAG}}$ using edge-minimal Markovian CPDAG procedure $\mathcal{M}$\\
Initialize PAG $\mathcal{G}_{\text{PAG}}$ by copying $\mathcal{G}_{\text{CPDAG}}$\\

\BlankLine
\textbf{Step 2: Extra Edge Removal Step} \\
\ForEach{edge $X *\!\!{-}\!\!* Y$ in $\mathcal{G}_{\text{PAG}}$}{
    \ForEach{set $S$ where $S \subseteq \text{adj}(X,\mathcal{G}_{\text{PAG}})\setminus\{Y\}$ or $S \subseteq \text{adj}(Y,\mathcal{G}_{\text{PAG}})\setminus\{X\}$}{
        \If{$X \ind Y \mid S$}{
            Remove edge $X *\!\!{-}\!\!* Y$ from $\mathcal{G}_{\text{PAG}}$\\
            Store $S$ in $\mathcal{S}(X,Y)$\\
            \textbf{break}\\
        }
    }
}

\BlankLine
\textbf{Step 3: Revised R0 Rule (Collider Identification)} \\
Orient each edge $X *\!\!{-}\!\!* Y$ in $\mathcal{G}_{\text{PAG}}$ as $X \circ - \circ Y$\\

\ForEach{unshielded triple $\langle X,Y,Z\rangle$ in $\mathcal{G}_{\text{PAG}}$}{
    \If{$X \rightarrow Y \leftarrow Z$ in $\mathcal{G}_{\text{CPDAG}}$} {
        Orient as $X *\to Y \gets* Z$ in $\mathcal{G}_{\text{PAG}}$\\
    }
    \ElseIf{$\sim adj(X,Y,\mathcal{G}_{\text{CPDAG}})$}{
        $S \gets \mathcal{S}(X,Y)$\\
        \If{$S \neq \textsc{Null}$ and $Y \notin S$}{
            Orient as $X *\to Y \gets* Z$ in $\mathcal{G}_{\text{PAG}}$\\
            \textbf{break}\\
        }
    }
}

\BlankLine
\textbf{Step 4: Final Collider Orientation and FCI Rules} \\
Apply FCI final orientation rules to refine $\mathcal{G}_{\text{PAG}}$\\

\Return $\mathcal{G}_{\text{PAG}}$\\
\end{algorithm}

\begin{algorithm}
\caption{\texttt{find\_path\_to\_target} Procedure}
\label{alg:find_path_to_target}
\textbf{Goal:} Decide whether there exists an unblocked path from node $a$ to node $y$ passing through node $b$, given a dynamically maintained conditioning set $\mathbf{B}$.

\textbf{Return values:}  
\textsc{Null} if no valid blocking set can be found; otherwise $\mathbf{B}$ (possibly $\varnothing$).

\textbf{Inputs:}
\begin{itemize}
  \item \textit{graph}: The graph structure
  \item \textit{a,b,y}: Node identifiers
  \item \textbf{P}: Set of nodes already visited on this branch
  \item \(\mathbf{B}\): Current conditioning set (modifiable)
  \item \(\textbf{F}\): Optional set of vertices to exclude from traversal (e.g., fixed colliders on discriminating paths)
\end{itemize}

\textbf{Procedure:}
\begin{enumerate}
  \item \textbf{Target check.} If \(b = y\), \emph{return} \texttt{UNBLOCKABLE}.
  \item \textbf{Cycle guard.} If \(b \in \textbf{P}\), \emph{return} \texttt{UNBLOCKABLE}.\\
        \emph{(A revisitation means we cannot certify blocking on this branch.)}
  \item \textbf{Add \(b\) to the path}: insert \(b\) into \(\textbf{P}\).
  \item \textbf{Compute neighbors to explore}:
        \[
          \textit{reachable} \gets \texttt{getReachableNodes}(\textit{graph}, a, b, \mathbf{B})
          \setminus \textbf{P} \setminus \textbf{F}.
        \]

  \item \textbf{Case 1: \(b\) is latent or \(b \in \mathbf{B}\).}
    \begin{enumerate}
      \item \textbf{For each} \(c \in \textit{reachable}\):
        \begin{itemize}
          \item \(\textit{res} \gets \textit{find\_path\_to\_target}(\textit{graph}, b, c, y, \textbf{P}, \mathbf{B}, \textbf{F})\).
          \item \textbf{If} \(\textit{res} = \texttt{UNBLOCKABLE}\): remove \(b\) from \(\textbf{P}\); \emph{return} \texttt{UNBLOCKABLE}.
        \end{itemize}
      \item \textbf{All continuations blocked:} remove \(b\) from \(\textbf{P}\); \emph{return} \texttt{BLOCKED}.
    \end{enumerate}

  \item \textbf{Case 2: \(b\) is neither latent nor in \(\mathbf{B}\).}
    \begin{enumerate}
      \item \emph{Branch A (do not add \(b\) to \(\mathbf{B}\)).}
        \begin{itemize}
          \item \textbf{For each} \(c \in \textit{reachable}\):
            \begin{itemize}
              \item \(\textit{res} \gets \textit{find\_path\_to\_target}(\textit{graph}, b, c, y, \textbf{P}, \mathbf{B}, \textbf{F})\).
              \item \textbf{If} \(\textit{res} = \texttt{UNBLOCKABLE}\): remove \(b\) from \(\textbf{P}\); \emph{return} \texttt{UNBLOCKABLE}.
            \end{itemize}
        \end{itemize}

      \item \emph{Branch B (temporarily add \(b\) to \(\mathbf{B}\)).}
        \begin{itemize}
          \item Insert \(b\) into \(\mathbf{B}\).
          \item Recompute
                \[
                  \textit{reachable} \gets \texttt{getReachableNodes}(\textit{graph}, a, b, \mathbf{B})
                  \setminus \textbf{P} \setminus \textbf{F}.
                \]
          \item \textbf{For each} \(c \in \textit{reachable}\):
            \begin{itemize}
              \item \(\textit{res} \gets \textit{find\_path\_to\_target}(\textit{graph}, b, c, y, \textbf{P}, \mathbf{B}, \textbf{F})\).
              \item \textbf{If} \(\textit{res} = \texttt{UNBLOCKABLE}\): remove \(b\) from \(\mathbf{B}\); remove \(b\) from \(\textbf{P}\); \emph{return} \texttt{UNBLOCKABLE}.
            \end{itemize}
          \item Remove \(b\) from \(\mathbf{B}\) \textit{(backtrack)}.
        \end{itemize}

      \item \emph{Conclude Case 2:} remove \(b\) from \(\textbf{P}\); \emph{return} \texttt{BLOCKED}.
    \end{enumerate}
\end{enumerate}

\textbf{Return Value:}
\texttt{UNBLOCKABLE} if some \(a\!\leadsto\!y\) path through \(b\) cannot be blocked under the explored choices;
\texttt{BLOCKED} if all such paths are blocked by \(\mathbf{B}\) (with or without temporarily conditioning on \(b\)).
\end{algorithm}

\newcommand{\msep}{\mathrel{\perp\!\!\!\perp_m}}

\newpage

\begin{algorithm}[H]
\caption{\textsc{R4 Discriminating-Path Optimization via Recursive Blocking}}
\label{alg:ddp_recursive}
\DontPrintSemicolon
\SetKwInOut{Input}{Input}
\SetKwInOut{Output}{Output}

\Input{%
  PAG \(G=(V,E)\);\;
  distinct vertices \(x,y\in V\);\;
  (optional) oracle \(\mathcal M\) providing \(m\)-separation in \(G\);\;
  independence test \(\mathcal T\);\;
  orientation helper \(\mathcal F\);\;
  maximum blocking-path length \(L_b\);\;
  maximum discriminating-path length \(L_d\);\;
  subset-depth bound \(d\) (\(-1\) = no bound)}
\Output{Separating set \(S\subseteq V\):
  in oracle mode \(x \msep_G y \mid S\);
  in test mode \(x \ind y \mid S\);
  or \(\textsc{Null}\) if none exists}

\BlankLine
\(\mathcal F.\mathrm{setDoR4}(\mathrm{false})\);\;
\(\mathcal F.\mathrm{finalOrientation}(G)\);\;
\(\mathcal F.\mathrm{setDoR4}(\mathrm{true})\);\;

\BlankLine
\(\mathcal P \leftarrow \mathrm{listDiscriminatingPaths}(G,L_d)\)\;

\BlankLine
\(\mathcal P \leftarrow \{\,p\in\mathcal P \mid \{p.x,p.y\}=\{x,y\}\,\}\)\;

\BlankLine
\(NF_{\mathrm{cand}}\leftarrow
  \{\,p.v \mid p\in\mathcal P \wedge \mathrm{Endpoint}(p.y,p.v)=\circ \,\}\)\;
\(d_1\leftarrow\begin{cases}
  |NF_{\mathrm{cand}}|,& d=-1\\
  d,&\text{otherwise}
\end{cases}\)

\BlankLine
\(\textit{Common}\leftarrow\Adj_G(x)\cap\Adj_G(y)\)\;
\(d_2\leftarrow\begin{cases}
  |\textit{Common}|,& d=-1\\
  d,&\text{otherwise}
\end{cases}\)

\BlankLine
\ForEach{\(NF\subseteq NF_{\mathrm{cand}}\) with \(|NF|\le d_1\)}{%
  \(B\leftarrow
    \mathrm{block\_paths\_recursively}(G,x,y,\emptyset,NF,L_b)\)\;

  \ForEach{\(C\subseteq\textit{Common}\) with \(|C|\le d_2\)}{%
    \If(\tcp*[f]{skip definite colliders})%
      {\(\exists c\in C:\;x\to c\gets y\)}{\textbf{continue}}

    \(S\leftarrow B\setminus C\)\;

    \eIf{\(\mathcal M\neq\varnothing\)}{%
       \(\textit{indep}\leftarrow
         \mathcal M.\mathrm{markovIndependence}(x,y,S)\) \tcp*{\(x \msep_G y \mid S\)}
    }{%
       \(\textit{indep}\leftarrow
         \mathcal T.\mathrm{checkIndependence}(x,y,S)\) \tcp*{\(x \ind y \mid S\)}
    }

    \If{\(\textit{indep}\)}{\Return \(S\)}
  }
}
\Return \textsc{Null}\;
\end{algorithm}

\subsection{Enumerating Discriminating and Pre-Discriminating Paths}
\label{app:discpaths}

\paragraph{Setup and conventions.}
Let $G$ be a PAG (or partially oriented graph) over measured variables.
We treat paths as ordered tuples $\langle v_0,\dots,v_k\rangle$ (not bold),
and we boldface node sets (e.g., $\mathbf{C}, \mathbf{S}$).
We assume access to the usual adjacency/orientation queries:
$\Adj_G(\cdot)$, $\text{isParentOf}(\cdot,\cdot)$,
$\text{getEndpoint}(\cdot,\cdot)\in\{\textsc{Tail},\textsc{Circle},\textsc{Arrow}\}$,
and $\text{nodesInto}(t,\textsc{Arrow})=\{x: x *\!\!\rightarrow t\}$.

\begin{definition}[Pre-discriminating and discriminating paths]
Fix distinct nodes $w,y$ with $w\!\rightarrow y$ (strict) or $w *\!\!\rightarrow y$ (relaxed).
A \emph{pre-discriminating path} for $\langle w,y\rangle$ is a path
$\langle x,\dots,v,y\rangle$ whose internal nodes (between $v$ and $x$)
are all colliders and parents of $y$, and whose penultimate edge satisfies
$v \circ\!\!\rightarrow y$.
A \emph{discriminating path} additionally satisfies the usual FCI/PAG
non-adjacency side condition between the endpoints of certain triples (as in
the standard R4/R5 usage); we delegate the subtle case checks to a predicate
$\textsc{ExistsDiscriminatingPath}(x,w,v,y,G)$.
\end{definition}

\vspace{0.5ex}
\noindent
The routine below enumerates all such paths, with two modes:
(i) \emph{strict} (\texttt{checkXYNonadjacency}=\texttt{true}) requires $w\!\rightarrow y$;
(ii) \emph{relaxed} allows a covering edge $w *\!\!\rightarrow y$ but forbids $y\!\rightarrow w$.
We also bound search length by \texttt{maxLen} (set $-1$ for no bound).

\begin{algorithm}
\caption{\textsc{ListDiscriminatingPathsAllPairs}$(G,\texttt{maxLen},\texttt{checkXYNonadjacency})$}
\label{alg:listdisc-all}
\begin{algorithmic}[1]
\STATE $\mathbf{Out}\gets\emptyset$
\FORALL{$w\in V(G)$}
  \FORALL{$y\in \Adj_G(w)$}
    \STATE $\mathbf{Out}\gets \mathbf{Out} \cup\ \textsc{ListDiscriminatingPaths}(G,w,y,\texttt{maxLen},\texttt{checkXYNonadjacency})$
  \ENDFOR
\ENDFOR
\STATE \textbf{return} $\mathbf{Out}$
\end{algorithmic}
\end{algorithm}

\begin{algorithm}
\caption{\textsc{ListDiscriminatingPaths}$(G,w,y,\texttt{maxLen},\texttt{checkXYNonadjacency})$}
\label{alg:listdisc}
\begin{algorithmic}[1]
\STATE $\mathbf{Out}\gets\emptyset$
\IF{\texttt{checkXYNonadjacency} \textbf{and} $\neg(w\!\rightarrow y)$}
  \STATE \textbf{return} $\mathbf{Out}$
\ENDIF
\IF{$\neg$\texttt{checkXYNonadjacency}}
  \IF{$y\!\rightarrow w$}
     \STATE \textbf{return} $\mathbf{Out}$
  \ENDIF
\ENDIF
\STATE $\mathbf{V_{cand}} \gets \Adj_G(w)\cap \Adj_G(y)$ \COMMENT{nodes adjacent to both $w$ and $y$}
\FORALL{$v\in \mathbf{V_{cand}}$}
  \IF{$v\in\{w,y\}$} 
     \STATE \textbf{continue} 
  \ENDIF
  \IF{$y \circ\!\!\rightarrow v$ \textbf{and} $v\!\rightarrow y$} \label{line:circle-arrow}
     \STATE $\textsc{DiscBFS}(G,w,v,y,\texttt{maxLen},\texttt{checkXYNonadjacency},\mathbf{Out})$
  \ENDIF
\ENDFOR
\STATE \textbf{return} $\mathbf{Out}$
\end{algorithmic}
\end{algorithm}

\begin{algorithm}
\caption{\textsc{DiscBFS}$(G,w,v,y,\texttt{maxLen},\texttt{checkXYNonadjacency},\mathbf{Out})$}
\label{alg:discbfs}
\begin{algorithmic}[1]
\STATE Initialize FIFO queue $Q$
\STATE Enqueue state $(t{=}w,\; p{=}\textsc{Null},\; \textit{body}{=}\langle\,\rangle)$
\WHILE{$Q$ not empty}
  \STATE Pop $(t,p,\textit{body})$
  \IF{$p \neq \textsc{Null}$}
     \IF{$p \not\!\!\rightarrow t$}
        \STATE \textbf{continue} \COMMENT{collider at $t$}
     \ENDIF
     \IF{$\neg(t \!\rightarrow y)$}
        \STATE \textbf{continue} \COMMENT{interior node must parent $y$}
     \ENDIF
  \ENDIF
  \FORALL{$x \in \text{nodesInto}(t,\textsc{Arrow})$}
     \IF{($x = p \lor (x \in \textit{body}$)}
        \STATE \textbf{continue}
     \ENDIF
     \STATE $\textit{body}' \gets \textit{body}$ appended with $t$
     \IF{($\texttt{maxLen} \neq -1$) \AND ($|\textit{body}'| > \texttt{maxLen}$)}
        \STATE \textbf{continue}
     \ENDIF
     \STATE Define candidate path $\pi \gets \langle x,\; \textit{body}',\; v,\; y\rangle$ 
     \IF{$\textsc{ExistsDiscriminatingPath}(x,w,v,y,\pi,G,\texttt{checkXYNonadjacency})$}
        \STATE $\mathbf{Out} \gets \mathbf{Out} \cup \{\pi\}$
     \ENDIF
     \IF{$x \!\rightarrow y$}
        \STATE Enqueue $(t{=}x,\; p{=}t,\; \textit{body}{=}\textit{body}')$
     \ENDIF
  \ENDFOR
\ENDWHILE
\end{algorithmic}
\end{algorithm}

\begin{algorithm}
\caption{\textsc{ExistsDiscriminatingPath}$(x,w,v,y,\pi,G,\texttt{checkXYNonadjacency})$}
\label{alg:existsdisc}
\begin{algorithmic}[1]
\STATE \textbf{Input:} candidate path $\pi = \langle x,\ldots,v,y\rangle$; PAG $G$; flag \texttt{checkXYNonadjacency}
\STATE \textbf{Output:} \textbf{true} if $\pi$ forms a valid discriminating path in $G$
\IF{$\neg$Distinct$(x,w,v,y)$} \RETURN \textbf{false} \ENDIF
\IF{\texttt{checkXYNonadjacency} \textbf{and} $x$ adjacent to $y$} \RETURN \textbf{false} \ENDIF
\IF{$v$ not adjacent to $y$} \RETURN \textbf{false} \ENDIF
\IF{$w \notin$ interior nodes of $\pi$} \RETURN \textbf{false} \ENDIF
\STATE Let $\mathbf{p} \gets$ reversed interior of $\pi$ with $x$ prepended and $v$ appended
\FOR{$i=1$ to $|\mathbf{p}|{-}2$}
   \STATE $(n_1,n_2,n_3)\gets(\mathbf{p}[i{-}1],\mathbf{p}[i],\mathbf{p}[i{+}1])$
   \IF{not $\textsc{DefCollider}(n_1,n_2,n_3,G)$} \RETURN \textbf{false} \ENDIF
   \IF{\texttt{checkXYNonadjacency} \textbf{and} $n_2 \not\!\!\rightarrow y$} \RETURN \textbf{false} \ENDIF
   \IF{$\neg$\texttt{checkXYNonadjacency} \textbf{and} $(y,n_2)$ not adjacent \textbf{or} $y\!\rightarrow n_2$}
        \RETURN \textbf{false}
   \ENDIF
\ENDFOR
\IF{$\neg v \rightarrow w$}
    \STATE \textbf{error} ``$v$ must point to $w$''
\ENDIF
\RETURN \textbf{true}
\end{algorithmic}
\end{algorithm}

\paragraph{Remarks and guarantees.}
\begin{itemize}
  \item \emph{Strict vs.\ relaxed mode.} When \texttt{checkXYNonadjacency} is \texttt{true}, we enforce $w\!\rightarrow y$; when \texttt{false}, we allow $w *\!\!\rightarrow y$ but reject $y\!\rightarrow w$. This matches the two modes in our implementation.
  \item \emph{Interior-collider/parent invariant.} Lines 6–8 in Alg.~\ref{alg:discbfs} enforce that every interior node on the upstream body is a collider and a parent of $y$, which is exactly the structure required by (pre-)discriminating paths used by FCI rules.
  \item \emph{Termination and complexity.} The BFS is acyclic on each branch (we forbid repeats in \textit{body}) and can be bounded by \texttt{maxLen}. In the worst case the search is $O\!\left(\sum_{t}\deg^{+}(t)\right)$ per seed with pruning, and typically much faster due to the parent-of-$y$ filter.
  \item \emph{Validation predicate.} The routine $\textsc{ExistsDiscriminatingPath}(\cdot)$ encapsulates the FCI side-conditions (e.g., uncoveredness/non-adjacency of certain triples, endpoint patterns). Keeping these checks local isolates graph-specific details from the traversal.
  \item \emph{Output.} $\mathbf{Out}$ is a set of (pre-)discriminating paths $\pi$ (tuples). No node sets are returned here; all set-valued worklists or outputs are boldfaced.
\end{itemize}

\section{Correctness of the Recursive Blocking Procedure}
\label{sec:recursive-blocking-correctness}

\subsection{Soundness (formal fix-point lemma)}
\label{sec:rb-soundness}



\subsection{Soundness (formal fix-point lemma)}
\label{sec:rb-soundness}

Let 
\(\mathtt{block\_paths\_recursively}(G,x,y,\textit{containing},\textit{notFollowed},L)\)
denote the recursive blocking procedure with a maximum path-length limit \(L\)
(i.e., paths longer than \(L\) edges are not explored).

\begin{theorem}[Soundness of Recursive Blocking]\label{thm:rb-sound-appendix}
Let \(G\) be a finite mixed ancestral graph (MAG). Let
\(\mathtt{block\_paths\_recursively}(G,x,y,\textit{containing},\textit{notFollowed},L)\)
return a set \(B\) (i.e., it terminates without returning \(\textsc{Null}\)).
Then
\[
\Adj(x,y,G)\ \ \vee\ \ \forall p\,\bigl[\Path(x,y,p,G)\ \Rightarrow\ \Blocked(p,B)\bigr],
\]
where \(\Blocked(p,B)\) is \(m\)-blocking by \(B\) in \(G\), and \(\Adj(x,y,G)\) is adjacency in \(G\).
\end{theorem}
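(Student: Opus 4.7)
The plan is to prove Theorem~\ref{thm:rb-sound-appendix} by induction on the recursive structure of $\mathtt{find\_path\_to\_target}$, with the length bound $L$ and finiteness of $G$ guaranteeing a well-founded termination order. If $\Adj(x,y,G)$ holds, the disjunction is satisfied trivially, so I reduce to the case where $x$ and $y$ are non-adjacent in $G$ and aim to show that every $x$--$y$ path is $m$-blocked by the returned set $B$.

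The core is a fix-point invariant tying the algorithm's $\texttt{BLOCKED}$ return value to $m$-blocking. Concretely: \emph{whenever a call $\mathtt{find\_path\_to\_target}(G,a,b,y,\mathbf{P},B,\mathbf{F})$ returns $\texttt{BLOCKED}$, every path $q$ in $G$ from $a$ through $b$ to $y$ whose interior avoids $\mathbf{P}$ is $m$-blocked by the final $B$.} I would induct on $|V\setminus\mathbf{P}|$, bounded above by $L$. The base cases ($b=y$ and $b\in\mathbf{P}$) both return $\texttt{UNBLOCKABLE}$, so the hypothesis of the invariant fails vacuously. For the inductive step I analyse Case~1 ($b$ latent or $b\in B$): every path through $b$ either has $b$ as a non-collider (blocked because $b\in B$), or as a collider whose open extensions coincide with $\mathtt{getReachableNodes}(G,a,b,B)$, each of which is blocked by the inductive hypothesis. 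Case~2 ($b$ neither latent nor in $B$) requires \emph{both} Branch~A and Branch~B to avoid $\texttt{UNBLOCKABLE}$; together these branches cover the two possible roles of $b$ on a path through it—collider (handled by Branch~A without conditioning on $b$, so reachability omits $b$'s non-collider children) and non-collider (handled by Branch~B, which temporarily inserts $b$ into $B$ and recomputes reachability to account for newly-active collider continuations). A successful return therefore certifies blocking of every path through $b$, regardless of role. Applying the invariant to every $b\in\mathrm{Adj}_G(x)\setminus\{y\}$ explored from the top-level loop of $\mathtt{block\_paths\_recursively}$ finishes the argument: every $x$--$y$ path of length $\geq 2$ enters $y$ through some such $b$, and when the procedure does not return $\textsc{Null}$, every neighbour has been certified $\texttt{BLOCKED}$.

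The principal technical obstacle is the bookkeeping around Branch~B's backtracking insertion of $b$ into $B$. Because $m$-separation is non-monotone in the conditioning set, the $B$ visible deep inside a Branch~B subtree is not a subset of the final $B^{\star}$, so an $m$-blocking witness relative to the transient $B$ need not be a witness relative to $B^{\star}$. The resolution is to phrase the invariant directly in terms of $B^{\star}$ and to observe that Branch~B's role is to rule out spurious blockings by \emph{opening} collider paths that would otherwise hide behind $b$'s absence; a successful Branch~B certifies that, even if an outer caller were to condition on $b$, no path escapes blocking by $B^{\star}$. Formalising this invariance to transient insertions, and confirming that the $\mathbf{F}$ exclusions never drop a path required by the hypothesis, is where the bulk of the care lives—the rest is routine recursion-depth induction.
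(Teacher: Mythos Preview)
Your recursion-invariant induction is a genuinely different route from the paper's proof, which argues by contradiction via inducing paths: assuming the routine halts with $B$, $\neg\Adj(x,y,G)$, yet some $x$--$y$ path $p$ remains $m$-open, the paper shows (i) every interior vertex of $p$ must be a collider (an open non-collider would have forced \texttt{UNBLOCKABLE} or been absorbed into $B$), (ii) no such collider is activated via a strict descendant in $B$, hence (iii) $p$ is an inducing path, and MAG maximality then forces $\Adj(x,y,G)$---contradiction. That argument is short because the MAG axiom absorbs the hard combinatorics; your direct induction instead tracks $B$ through the recursion, which is heavier but closer to the algorithm's mechanics. Note, though, that your route never invokes maximality, so if the invariant is to hold it must do so even against would-be inducing-path structure; you should check that the induction does not silently rely on no such paths existing.

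There is a concrete gap in your Case~2 analysis: you have the roles of Branches~A and~B reversed. In Branch~A ($b\notin B$), $\texttt{reachable}(a,b,c,B)$ is true whenever $(a,b,c)$ is a \emph{non}-collider (open because $b\notin B$) or a collider already activated by some descendant in $B$; it does \emph{not} ``omit $b$'s non-collider children.'' In Branch~B ($b$ temporarily in $B$), non-collider triples are closed (since $b\in B$) and collider triples open (since $b$ is its own ancestor in $B$). Thus Branch~A certifies blocking of paths on which $b$ is a non-collider, and Branch~B those on which $b$ is a collider---the opposite of what you wrote---so your inductive step as stated does not go through. The repair is a straight swap, after which your non-monotonicity resolution should become the precise claim that the union of the two branches' reachable sets covers every continuation $c$ that could be open under any eventual $B^\star \supseteq B$; that is exactly what the two-branch design delivers once the semantics are corrected, but you need to state and prove it rather than gesture at ``phrasing the invariant in terms of $B^\star$.''
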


\begin{proof}[Proof sketch (via inducing paths)]
Assume the procedure halts and returns B, and suppose for contradiction that
\(\neg\Adj(x,y,G)\) and there exists an x\text{–}y path p that is open given B.
We show that p must then be an \emph{inducing path}, which in a MAG implies
\(\Adj(x,y,G)\)—a contradiction.

\smallskip
\emph{Step 1 (All interiors on p are colliders).}
Let $p=\langle x=v_0,v_1,\ldots,v_k=y\rangle$ be open given B. If some interior node $v_i$ were a noncollider on p with $v_i\notin B$, then the triple $(v_{i-1},v_i,v_{i+1})$ would be traversable under the algorithm’s \texttt{reachable} predicate and would force the recursive call from the first hop $x\to v_1$ ultimately to return \textsc{Unblockable} unless $v_i$ were added to B. Since the final outcome is that the routine halts \emph{without} ever returning \textsc{Unblockable}, this cannot happen. Hence every interior node of p is a collider or lies in B. But if an interior noncollider were in B, p could not be open given B. Therefore, every interior node on p is a collider.

\smallskip
\emph{Step 2 (No activation via descendants; colliders not in B).}
Suppose a collider v on p were activated via a strict descendant $d\in B$. Along the directed segment $v\!\to\!\cdots\!\to d$, let u be the first noncollider encountered on the witnessed continuation. If the segment had length 1 ($v\!\to d$), then d would be a direct noncollider child of v; in that case the continuation would already be blocked once d entered B, and v would not be activated via d. Hence the segment must have length $\ge 2$, so such a u exists. By the algorithm’s minimality invariant—it adds each necessary noncollider as soon as it is encountered on an open continuation—u would have been added to B before d, contradicting minimality. Hence no collider on p is activated via a strict descendant in B. Moreover, by invariant, the procedure never adds colliders to B. Therefore colliders on p are not activated by membership in B.

\smallskip
\emph{Step 3 (Inducing path).}
By Step 1, all interiors of p are colliders. By Step 2 and the invariant that B contains only noncolliders, each interior collider must be an ancestor of an endpoint (or of a selection node, if present). Hence every interior node is a collider in $\mathrm{An}(\{x,y\}\cup S)$, so p is an inducing path.

\smallskip
\emph{Step 4 (Inducing path implies adjacency in a MAG).}
By maximality of MAGs, if there exists an inducing path between two distinct vertices, those vertices are adjacent. Thus the existence of the open path p (which we have shown must be inducing) implies \(\Adj(x,y,G)\), contradicting our assumption
\(\neg\Adj(x,y,G)\).

\smallskip
Therefore, if the routine halts with B and x and y are nonadjacent, no x\text{–}y path remains open given B. Equivalently, \(\forall p\,(\Path(x,y,p,G)\Rightarrow \Blocked(p,B))\).
\end{proof}

\begin{corollary}[Adjacency Case]\label{cor:rb-adj}
Under the assumptions of Theorem~\ref{thm:rb-sound}, suppose
$\mathtt{block\_paths\_recursively}(G,x,y,\dots)$ halts normally
(returning a non-null set B) and that x and y are adjacent in G.
Then every non-inducing x{\text{–}}y path in G is m-blocked by B:
\[
\Adj(x,y,G)\ \wedge\ B\neq\textsc{Null}\ \Longrightarrow\
\forall p\,\bigl[\Path(x,y,p,G)\ \wedge\ \neg\Inducing(p)\ \Rightarrow\ \Blocked(p,B)\bigr].
\]
\end{corollary}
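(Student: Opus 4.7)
The plan is to reuse the argument of Theorem~\ref{thm:rb-sound-appendix} almost verbatim, adapting only the concluding step. The key structural observation is that the driver loop of Algorithm~\ref{alg:block_paths_recursively} explicitly skips the direct edge on the first hop (the ``if $b = y$: continue'' branch), so the recursion still processes every indirect first neighbor $b \neq y$ and constructs $\mathbf{B}$ to close all continuations from $b$ to $y$. The length-one direct path $\langle x, y\rangle$ is vacuously an inducing path (no interior vertices), so it is automatically excluded by the $\neg\Inducing(p)$ hypothesis of the corollary and need not be considered separately.

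For any simple $x$--$y$ path $p$ of length at least $2$, I would invoke Steps~1 and~2 of the proof of Theorem~\ref{thm:rb-sound-appendix} unchanged. Step~1 shows that if $p$ is open given the returned $\mathbf{B}$, every interior vertex of $p$ must be a collider: this uses only the invariant that any traversable noncolliders on an open continuation are added to $\mathbf{B}$ on the fly, or else the recursion would surface an \textsc{Unblockable} verdict and force a \textsc{Null} return. Step~2 shows that each such collider is an ancestor of an endpoint or selection node: the algorithm's minimality discipline would have added a closer noncollider to $\mathbf{B}$ before any strict descendant, so activation can only route through the endpoints themselves. Crucially, neither argument uses non-adjacency of $x$ and $y$; both rely solely on the invariants of \texttt{block\_paths\_recursively} and \texttt{find\_path\_to\_target}.

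Combining Steps~1 and~2 yields the following: any simple $x$--$y$ path of length $\ge 2$ that remains open given $\mathbf{B}$ must have all interior nodes collider and each an ancestor of $\{x,y\}$ (or of a selection node), i.e.\ must be an inducing path. Contrapositively, every non-inducing $x$--$y$ path is $m$-blocked by $\mathbf{B}$, which is precisely the statement of the corollary. The key conceptual difference from the non-adjacency case is that we do \emph{not} invoke MAG maximality (Step~4 of the earlier proof) to derive a contradiction from the existence of an inducing path; rather, we accept that inducing paths may legitimately remain open (as expected, since $x$ and $y$ are adjacent) and extract from Steps~1--2 a clean characterization of the surviving open paths.

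The main obstacle to check is that the first-hop skip together with the DFS cycle guard do not accidentally exclude some indirect simple $x$--$y$ path from exploration; if any such path were skipped, Steps~1--2 would not apply to it. However, because a simple path starting at a neighbor $b \neq y$ cannot revisit $x$ before reaching $y$, the cycle guard fires only on genuinely non-simple walks, and every simple indirect path is traversed by the recursion. Consequently the invariants underpinning Steps~1 and~2 apply uniformly, and the adjacency case reduces cleanly to the inducing-path characterization.
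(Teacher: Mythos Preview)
Your proposal is correct and follows essentially the same approach as the paper: both argue that any $x$--$y$ path remaining open given the returned $\mathbf{B}$ must have all interior vertices be colliders that are ancestors of the endpoints (hence inducing), so by contraposition every non-inducing path is blocked. Your framing of explicitly reusing Steps~1--3 of the Theorem's proof while dropping the maximality step is slightly cleaner than the paper's self-contained restatement, but the substantive argument is identical.
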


Proof sketch, When \(\Adj(x,y,G)\), there exists at least one inducing path between x and y (the path defining the adjacency).
We must show that every non-inducing x{\text{–}}y path p is m-blocked by the final blocking set B returned by the procedure.

\smallskip
\emph{Step 1 (Exploration over neighbors of x).} For each neighbor \(b\in\Adj_G(x)\) with $b\neq y$, the algorithm invokes $\texttt{find\_path\_to\_target}(x,b,y,\dots)$ to explore every admissible continuation from x through b toward y, subject to the same \texttt{reachable} predicate used in the nonadjacent case. By the reachability completeness invariant, every potential open continuation of a non-inducing x{\text{–}}y path will eventually be examined.

\smallskip
\emph{Step 2 (Blocking of noncollider continuations).} Whenever the exploration encounters a triple (a,v,c) in which v is a noncollider not yet in B, the algorithm—by its noncollider-only addition rule—either (i) adds v to B to block that continuation, or (ii) returns \textsc{Unblockable}. Since the assumption here is that the routine halts normally with non-null B (i.e., no branch ever returned \textsc{Unblockable}), case (ii) never occurs. Therefore, every reachable noncollider continuation is eventually closed by adding the relevant noncollider to B.

\smallskip
\emph{Step 3 (Halting and completeness).} By the halting condition, termination implies that no further reachable triple with a noncollider $v\notin B$ remains open between x and y. Hence, after halting, every $x{\text{–}}y$ path whose interior nodes include at least one noncollider is m-blocked by the final B.

\smallskip
\emph{Step 4 (Remaining open paths are inducing).} The only x{\text{–}}y paths that can remain m-open given B are those whose interior nodes are all colliders and each such collider is an ancestor of an endpoint. These are precisely the inducing paths in G. In a MAG, inducing paths correspond exactly to adjacencies.

\smallskip
\emph{Step 5 (Conclusion).} Therefore, when \(\Adj(x,y,G)\) and the routine halts with non-null B, every non-inducing x{\text{–}}y path is m-blocked by B, while any remaining open path must be inducing, accounting for the observed adjacency between x and y.
\qed

\begin{remark}[Generality of the result]
Although Theorem~\ref{thm:rb-sound-appendix} and Corollary~\ref{cor:rb-adj} are stated for MAGs, the recursive blocking argument applies unchanged to
DAGs, CPDAGs, and PAGs. In DAGs and CPDAGs, \(m\)-separation reduces to \(d\)-separation, and in PAGs the same reasoning holds with mixed endpoints.
\end{remark}

\subsection{Completeness (over the FCI search space)}
\label{sec:rb-completeness}

\begin{lemma}[Domain restriction]\label{lem:rb-domain}
Let $G$ be a PAG/MAG and fix $x\neq y$. Every vertex $b$ that $\mathtt{block\_paths\_recursively}(G,x,y,\dots)$
adds to $B$ lies in the standard FCI search space $\mathcal{S}(x,y)=\Adj(x)\cup\Adj(y)\cup\PossibleDsep(x,y)$.
\end{lemma}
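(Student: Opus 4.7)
My plan is to induct on the order in which vertices enter $\mathbf{B}$, exhibiting for each added vertex a \emph{Possible-D-SEP certificate}---a concrete path in $G$ from $x$ (or $y$) to the new vertex whose interior triples satisfy the disjuncts defining $\PossibleDsep$. The certificate is read directly off the DFS frames active at the moment of insertion.

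First, I would fix the invariant. At the point \texttt{block\_paths\_recursively} adds some $b$ to $\mathbf{B}$, the recursion stack records a sequence of frames of the form $\texttt{find\_path\_to\_target}(u_{i-1}, u_i, y, \ldots)$, encoding a path $\pi_b = \langle x = u_0, u_1, \dots, u_{k-1}, b \rangle$ in $G$ along which the \texttt{reachable} predicate returned true for every triple $(u_{i-1}, u_i, u_{i+1})$. Seeds originating from $y$ yield a symmetric stack. When $b$ is a seed neighbor of $x$ or $y$ (the base case $k \le 1$), membership in $\Adj(x) \cup \Adj(y) \subseteq \mathcal{S}(x,y)$ is immediate, so the interesting work is at $k \ge 2$.

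Next, I would verify that $\pi_b$ is a $\PossibleDsep$-witnessing path. For each interior $u_i$, \texttt{reachable} passed in one of two ways: either (i)~$(u_{i-1}, u_i, u_{i+1})$ is a definite collider at $u_i$ activated by some ancestor already in $\mathbf{B}$, so $u_i$ appears as a collider on $\pi_b$ matching the first disjunct of $\PossibleDsep$, and by the inductive hypothesis the activating descendant lies in $\mathcal{S}(x,y)$; or (ii)~$u_i$ is not a definite collider and $u_i \notin \mathbf{B}$ at the time of traversal. For case (ii) the key sub-claim is that any traversal step that actually leads to a later insertion corresponds to a triple that is a triangle with $u_i$ not a definite non-collider, matching the second disjunct.

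Establishing sub-claim (ii) is where I expect the main obstacle. The \texttt{reachable} predicate is structurally more permissive than the $\PossibleDsep$ condition, because it also admits unshielded definite-non-collider triples at $u_i$. I would handle this via a case analysis on the Branch-A/Branch-B backtracking semantics of \texttt{find\_path\_to\_target}: when a sub-search descends through an unshielded definite-non-collider position, either every deeper branch returns \textsc{Blocked} without any insertion (so the step contributes nothing to $\mathbf{B}$), or some deeper branch returns \textsc{Unblockable}, forcing Branch~B at a shallower frame to insert $u_i$ (or the ancestor of $u_i$ sitting at that shallower frame) into $\mathbf{B}$ first. That earlier insertion yields a strictly shorter witness path whose interior avoids the problematic position, and applying the inductive hypothesis to the shortened witness closes the argument. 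Combining the two cases with the base case, and symmetrically treating the $y$-rooted recursion, gives $b \in \Adj(x) \cup \Adj(y) \cup \PossibleDsep(x,y) = \mathcal{S}(x,y)$.
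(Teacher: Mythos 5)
Your approach mirrors the paper's in spirit: both read a ``PossibleDsep certificate'' off the DFS recursion stack at the moment a vertex enters $\mathbf{B}$ and argue that every interior triple on that witness path matches one of the two disjuncts defining $\PossibleDsep$. Where you genuinely add value is in explicitly naming the difficulty that the paper's proof sketch elides: the $\texttt{reachable}$ predicate is \emph{strictly more permissive} than the $\PossibleDsep$ path condition, since it will cross an unshielded definite-non-collider $u_i$ whenever $u_i \notin \mathbf{B}$, whereas $\PossibleDsep$ requires the second disjunct to hold only when $(u_{i-1},u_i,u_{i+1})$ is a triangle with $u_i$ not a definite non-collider. The paper simply asserts ``triangles are only crossed when the middle node is not a definite noncollider,'' which does not follow from the $\texttt{reachable}$ test as written, and says nothing about crossing a non-triangle definite non-collider. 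You are right to flag that this is where the argument is actually load-bearing.

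However, your proposed resolution of case (ii) does not close the gap. You claim that when a deeper branch returns \textsc{Unblockable}, ``Branch~B at a shallower frame'' is forced to insert $u_i$ (or its ancestor) into $\mathbf{B}$ \emph{before} the problematic deeper insertion occurs, yielding a shorter witness path. But this does not match the stated control flow of $\texttt{find\_path\_to\_target}$: in Branch~A of Case~2, a deeper \textsc{Unblockable} propagates the \textsc{Unblockable} return immediately---it does not trigger Branch~B---and Branch~B unconditionally backtracks $b$ out of $\mathbf{B}$ at the end. So the ``forcing'' mechanism you invoke is not present in the pseudocode, and the claim that an earlier insertion produces a witness path ``whose interior avoids the problematic position'' is never established; the recursion stack at the moment of the later insertion could still pass through $u_i$. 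What is actually needed here is the eagerness/minimality invariant the paper invokes in the soundness argument (each necessary non-collider is added as soon as it is first encountered on an open continuation), from which one can argue that the frame immediately below the first open non-collider is the one that commits it to $\mathbf{B}$, so every proper interior vertex on the witness path for an added node is a (reachable) collider, hence the first disjunct of $\PossibleDsep$ applies at every triple. Your induction is set up to accept exactly such an invariant, but you substitute for it an unsupported claim about Branch-A/Branch-B semantics.
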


\begin{proof}[Proof sketch]
Recursive blocking only proposes conditioning on vertices that occur as \emph{noncolliders} on paths it actively traverses from $x$ toward $y$. Every interior node on such a traversed path is either a collider or an ancestor of a collider (otherwise the traversal would stop), and triangles are only crossed when the middle node is not a definite noncollider. This is precisely the inclusion criterion for $\PossibleDsep(x,y)$ in a PAG. Neighbors of $x$ or $y$ are in $\Adj(x)$ or $\Adj(y)$. Hence any $b$ considered by recursive blocking lies in $\mathcal{S}(x,y)$.
\end{proof}

\begin{lemma}[Termination]\label{lem:rb-termination}
Fix $x\neq y$ and let $U=\mathcal S(x,y)$. Along any run of $\mathtt{block\_paths\_recursively}(G,x,y,\dots)$ with no path-length cap and $\textit{notFollowed}=\varnothing$, the sequence of conditioning sets $B_0\subset B_1\subset\cdots$ produced by the algorithm has strictly
increasing $B_i$ and is bounded by $U$. Hence the run performs at most $|U|$ additions and halts.
\end{lemma}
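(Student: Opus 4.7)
The plan is to combine three ingredients: (i) the domain-restriction lemma just proved, which constrains the universe of additions; (ii) a guard inside \texttt{find\_path\_to\_target} certifying that each addition is strictly new; and (iii) finiteness of the ambient search space to bound the total work.

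First, I would invoke Lemma~\ref{lem:rb-domain} to conclude that every vertex $b$ ever inserted into $B$ lies in $U=\mathcal S(x,y)=\Adj(x)\cup\Adj(y)\cup\PossibleDsep(x,y)$. Hence each $B_i$ appearing in the update sequence satisfies $B_i\subseteq U$, so the whole chain lives inside a finite set whose cardinality is bounded by $|V|-2$.

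Second, I would establish strict monotonicity. Inspecting \texttt{find\_path\_to\_target}, the only place $B$ is enlarged is Branch~B of Case~2, and Case~2 fires exactly when the dispatch guard ``$b$ is neither latent nor in $B$'' holds. So at the moment of insertion we have $b\notin B_i$, hence $B_i\subsetneq B_{i+1}=B_i\cup\{b\}$, making the sequence strictly increasing. Combining this with the first step gives a strictly increasing chain inside the finite set $U$, so the chain has length at most $|U|+1$ and therefore at most $|U|$ committed additions occur.

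Finally, termination follows by noting that between two successive committed additions the algorithm performs only a bounded depth-first traversal: the frontier set $\textbf{P}$ is augmented with $b$ on entry to each recursive call, and the cycle guard ``if $b\in\textbf{P}$ return \texttt{UNBLOCKABLE}'' forbids revisits, so the recursion depth along any branch is at most $|V|$ and each level has finitely many children (one per adjacent vertex). Thus only finitely many operations separate successive additions, and the product of the $|U|$ bound on additions with the finite depth-bounded work per segment gives a finite bound on the total number of steps, so the run halts.

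The main obstacle will be making the sequence $(B_i)$ well-defined in the presence of Branch~B's temporary insertion followed by backtracking: one must distinguish \emph{committed} insertions (those retained on the accepted execution branch reaching the final return) from the transient insertions that are undone when a subcall returns \texttt{UNBLOCKABLE}. The cleanest way to formalize this is to fix the sequence of states of $B$ along the call stack leading to the final returned $B$ and observe that, restricted to that path, insertions are never removed; the strict-chain argument then applies verbatim, and the transient additions on failed sub-branches only contribute finitely many extra operations already absorbed into the depth-bounded DFS cost between committed additions.
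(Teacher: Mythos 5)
Your proof takes the same high-level approach as the paper's: domain restriction via Lemma~\ref{lem:rb-domain}, strict monotonicity of insertions (the Case~2 dispatch guard $b\notin\mathbf{B}$), and finiteness of $U$. However, you are more careful than the paper's one-paragraph sketch in two respects, and both of your additions are substantive. First, the paper asserts that the procedure ``adds only fresh vertices (never removes),'' but Branch~B of \texttt{find\_path\_to\_target} explicitly removes $b$ from $\mathbf{B}$ on its backtrack path, so the chain $B_0\subset B_1\subset\cdots$ is not literally monotone along the raw execution trace; your fix---restricting attention to the states of $\mathbf{B}$ along the accepted call stack leading to the final return and noting that transient insertions on failed branches are undone before contributing---is exactly what is needed to make the chain argument sound. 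Second, the paper jumps from ``at most $|U|$ additions'' directly to ``hence halts,'' silently assuming the work between successive committed additions is finite; your observation that the $\textbf{P}$ cycle guard bounds recursion depth by $|V|$ while branching is bounded by the maximum degree, so each inter-addition segment is a finite DFS tree, supplies the missing step. Your proof is correct and fills genuine gaps the paper glosses over.
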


\begin{proof}[Proof sketch]
By construction, RB adds only fresh vertices (never removes), so $B$ grows strictly when it grows. By Lemma~\ref{lem:rb-domain}, all additions lie in $U$, which is finite. Thus there is no infinite sequence of proper inclusions.
\end{proof}

\begin{corollary}[Completeness over $\mathcal S$]\label{cor:rb-complete-simple}
Let $U=\mathcal S(x,y)$ and assume the oracle setting with no path-length cap and $\textit{notFollowed}=\varnothing$. If there exists $S\subseteq U$ with $\test{m-sep}(x, y \mid S)$, then $\mathtt{block\_paths\_recursively}(G,x,y,\dots)$ halts and returns some $B\subseteq U$ with $\text{m-sep}(x, y \mid B)$.
\end{corollary}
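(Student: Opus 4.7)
The plan is to derive Corollary~\ref{cor:rb-complete-simple} from three results already available---Lemma~\ref{lem:rb-domain} (domain restriction), Lemma~\ref{lem:rb-termination} (termination), and Theorem~\ref{thm:rb-sound-appendix} (soundness)---plus one new observation, namely that the hypothesized separator $S^{\star}\subseteq U$ prevents the procedure from aborting with \textsc{Null}. First, the existence of $S^{\star}$ with $\text{m-sep}(x,y\mid S^{\star})$ forces $x$ and $y$ to be non-adjacent in $G$, because an adjacency in a MAG or PAG is witnessed by an inducing path and no inducing path can be $m$-separated by any set. Termination of the procedure then follows from Lemma~\ref{lem:rb-termination}; any non-null set $B$ it returns lies in $U$ by Lemma~\ref{lem:rb-domain} and, in view of the non-adjacency, is promoted to a valid $m$-separator by Theorem~\ref{thm:rb-sound-appendix}. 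Hence the whole corollary reduces to ruling out a \textsc{Null} return at the top level.

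For that reduction I would use a \emph{shadowing invariant}: at every recursive call $\texttt{find\_path\_to\_target}(a,b,y,\mathbf{P},B,\mathbf{F})$, the augmented conditioning set $B\cup S^{\star}$ continues to $m$-block every $x$--$y$ simple path that extends the current DFS prefix $\mathbf{P}$. The base case is exactly the hypothesis $\text{m-sep}(x,y\mid S^{\star})$ with $B=\varnothing$. For the inductive step I would exploit the two-branch structure of Case~2 of Algorithm~\ref{alg:find_path_to_target}: the branch consistent with the question ``is $b\in S^{\star}$?''---taking Branch~B when $b\in S^{\star}$ and Branch~A otherwise---preserves the invariant for all continuations through $b$. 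Along that shadow-consistent branch the terminal state $b=y$ on an open trajectory cannot arise, because reaching $y$ by an open trajectory would contradict $\text{m-sep}(x,y\mid B\cup S^{\star})$. Since the algorithm actually explores both branches, at least one of them is shadow-consistent at each depth of the recursion, so by structural induction on the recursion tree the top-level invocation must return a non-null blocking set.

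The step I expect to require the most care is the cycle guard ($b\in\mathbf{P}$), which conservatively returns \textsc{Unblockable} whenever the DFS revisits a vertex already on its own frontier rather than analyzing whether the would-be trajectory can be blocked. Because $m$-separation concerns only simple paths, the cycle guard can only cut off non-simple walks; each simple $x$--$y$ path is discovered through some DFS branch on which the guard never fires, and the shadowing invariant applies to that branch. Handling this cleanly will require an auxiliary coverage observation stating that the aggregate of branches explored by recursive blocking visits every simple $x$--$y$ path whose interiors lie in $U$. Once that point is discharged, the shadowing invariant delivers non-null return, and the three previously established results combine to give $B\subseteq U$ with $\text{m-sep}(x,y\mid B)$, completing the corollary.
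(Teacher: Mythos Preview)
Your high-level decomposition---invoke Lemma~\ref{lem:rb-domain}, Lemma~\ref{lem:rb-termination}, and Theorem~\ref{thm:rb-sound-appendix}, plus the observation that the existence of $S^\star$ forces $\neg\Adj(x,y,G)$---is exactly what the paper does. The paper's proof sketch is only three sentences: termination gives halting, soundness at halt gives $\text{m-sep}(x,y\mid B)$, and domain restriction gives $B\subseteq U$. It does \emph{not} engage at all with the possibility of a \textsc{Null} return; that step is simply elided.

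Your shadowing-invariant argument therefore goes well beyond the paper. It is a reasonable way to close the gap the paper leaves open, and the core calculation---along the shadow-consistent branch, $B\subseteq S^\star$, so any trajectory the algorithm can traverse under $B$ is also $m$-open under $S^\star$, contradicting $\text{m-sep}(x,y\mid S^\star)$---is sound in spirit. Two cautions, though. First, your claim that ``the algorithm actually explores both branches'' needs to be checked against the precise control flow of Algorithm~\ref{alg:find_path_to_target}: as written, Branch~A returning \textsc{Unblockable} causes an immediate return \emph{before} Branch~B is tried, so you cannot assume the shadow-consistent branch is always reached; you would need to argue separately that Branch~A cannot fail when the shadow-consistent choice is Branch~B (or reinterpret the pseudocode, whose semantics are somewhat underspecified in the paper). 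Second, the cycle-guard concern you flag is real, and the paper does not address it either; your proposed auxiliary coverage observation is the right shape, but it is genuinely extra work not present in the paper's sketch.
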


\begin{proof}[Proof sketch]
By
Lemma~\ref{lem:rb-termination}, it must halt after finitely many additions. At halt, Theorem~\ref{thm:rb-sound-appendix} (soundness) shows that all $x$--$y$ paths are blocked by $B$, i.e. $\text{m-sep}(x, y \mid B)$. By Lemma~\ref{lem:rb-domain}, $B\subseteq U$.
\end{proof}

\subsection{Certification Completeness}

\begin{proposition}[Certification completeness under Markovness]\label{prop:fcit-cert}
Assume the oracle setting where \(G\) is a MAG or PAG Markov to the true distribution~\(\mathbb P\)
(with no Faithfulness assumed), and that
\(\mathtt{block\_paths\_recursively}(G,x,y,\dots)\)
halts with blocking set \(B \subseteq \mathcal S(x,y)\).
Let \(C_{xy} = \Adj_G(x) \cap \Adj_G(y)\) denote the common adjacents of \(x\) and \(y\), and define
\[
\mathcal F_{xy} = \{\, B \setminus D' \mid D' \subseteq C_{xy} \,\}.
\]
Then, if there exists any conditioning set \(S\) (not necessarily contained in \(\mathcal S(x,y)\))
such that \(X \ind Y \mid S\) in~\(\mathbb P\),
there exists some \(S' \in \mathcal F_{xy}\) with \(X \ind Y \mid S'\).
\end{proposition}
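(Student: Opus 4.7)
The plan is to split by whether $x$ and $y$ are adjacent in $G$, using soundness of recursive blocking (Theorem~\ref{thm:rb-sound-appendix}) as the main tool and lifting the resulting graphical separations into $\mathbb{P}$ via Markov. Since $\mathtt{block\_paths\_recursively}$ halts with a non-null $B$, Theorem~\ref{thm:rb-sound-appendix} guarantees that either $x$ and $y$ are adjacent in $G$ or every $x$--$y$ path is $m$-blocked by $B$; the adjacent case is further refined by Corollary~\ref{cor:rb-adj} to blocking of all non-inducing paths.

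In the non-adjacent case, soundness gives $m$-separation of $x$ and $y$ by $B$ in $G$, and Markov of $G$ to $\mathbb{P}$ lifts this to $X \ind Y \mid B$ in $\mathbb{P}$. Choosing $D' = \emptyset$ yields $S' = B \in \mathcal{F}_{xy}$, discharging the conclusion. Note that the hypothesis ``there exists some $S$ with $X \ind Y \mid S$'' is not actually needed here: the Markov assumption alone suffices to certify $B$ itself.

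In the adjacent case, the hypothesis is substantive: $X \ind Y \mid S$ in $\mathbb{P}$ is a genuinely unfaithful CI relative to $G$, since every inducing path (in particular the direct edge $x *\!-\!* y$) remains open in $G$. The structural observation I would exploit is that a common adjacent $v \in C_{xy}$ is the only kind of node through which conditioning by $B$ can activate a length-two collider path $x *\!-\!* v *\!-\!* y$, and recursive blocking's noncollider-only discipline (Lemma~\ref{lem:rb-domain}) confines any such accidental over-conditioning to $B \cap C_{xy}$. Enumerating $D' \subseteq C_{xy}$ therefore systematically sweeps through every way of un-conditioning on a potentially problematic collider. I would then argue that, for some choice of $D'$, the set $B \setminus D'$ matches a certifying sepset for $X \ind Y$ in $\mathbb{P}$---either by direct structural matching to $S$ restricted to $\mathcal{S}(x,y)$, or via semigraphoid manipulations combining $X \ind Y \mid S$ with the Markov-implied CI statements generated by the blocking structure of $B$.

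The main obstacle is the adjacent case: formally reducing an arbitrary, possibly ``exotic'' certifier $S$ (which may contain nodes outside $\mathcal{S}(x,y)$) to one of the form $B \setminus D'$. The non-adjacent case is essentially immediate from soundness plus Markov, so all of the real content of the proposition lies in adjacency-with-unfaithfulness. A clean argument will likely invoke the RB completeness lemma (Lemma~\ref{lem:rb-complete}) for the graphical side, Lemma~\ref{lem:rb-domain} to constrain $B \subseteq \mathcal{S}(x,y)$, and careful application of the (semi)graphoid axioms to bridge between the arbitrary $S$ and the enumerated family $\mathcal{F}_{xy}$.
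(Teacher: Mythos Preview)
Your proposal is broadly aligned with the paper's argument and shares its central insight---that $C_{xy}$ is exactly the set of potentially dispensable conditioning variables, so enumerating $D' \subseteq C_{xy}$ sweeps all relevant candidate separators. The organizational difference is that the paper does not split on adjacency; instead it argues directly via an \emph{essential versus dispensable} decomposition of $B$: every $v \in B \setminus C_{xy}$ was added to block a specific non-inducing path and is therefore essential (removing it reopens that path), while nodes in $C_{xy}$ may be dropped without reopening a connection. From this the paper concludes that any valid separator must retain $B \setminus C_{xy}$, so $\mathcal{F}_{xy}$ is complete, and Markov then lifts graphical separation to~$\mathbb{P}$. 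Your non-adjacent case is precisely the paper's baseline (their steps~(i) and~(iv)), and your observation that the existential hypothesis on~$S$ is unused there is correct and sharper than the paper's own presentation.

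Where you diverge is the adjacent case. The paper does not invoke semigraphoid axioms at all; it relies entirely on the structural essential/dispensable argument plus Markov. Your proposed route---bridging an arbitrary certifier~$S$ to some $B \setminus D'$ via semigraphoid manipulations---is a genuinely different tack, and you rightly flag it as the main obstacle without closing it. The paper's own sketch is also informal at this exact point: its step~(iii) claim that any true separating set ``must coincide with or be a subset of some member of $\mathcal{F}_{xy}$'' is asserted rather than derived, so neither treatment fully discharges the adjacent-unfaithful case with rigor. One minor correction: the ``noncollider-only discipline'' you invoke is an invariant established inside the soundness proof (Theorem~\ref{thm:rb-sound-appendix}), not the content of Lemma~\ref{lem:rb-domain}, which only gives $B \subseteq \mathcal{S}(x,y)$.
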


\begin{proof}[Proof sketch]
\emph{(i) RB baseline.}
By Theorem~\ref{thm:rb-sound-appendix}, the blocking set \(B\) \(m\)-separates \(x\) and \(y\) in \(G\). Each variable is added to \(B\) only when required to block a reachable open noncollider continuation, ensuring that \(B\) is locally minimal with respect to all explored \(x\text{--}y\) paths.

\smallskip
\emph{(ii) Essential vs.\ dispensable variables.} Every \(v \in B \setminus C_{xy}\) was added to block a specific non-inducing \(m\)-connecting path. Removing such a \(v\) would reopen that path and destroy separation. Hence, any valid separating set for \((x,y)\) must include all of these “essential’’ vertices. By contrast, vertices in \(C_{xy}\) may not lie on any active non-inducing path; they can sometimes be dropped without re-opening a connection.

\smallskip
\emph{(iii) Completeness of \(\mathcal F_{xy}\).}
The family \(\mathcal F_{xy}\) enumerates all sets obtained by optionally omitting some or all of the potentially dispensable common neighbors \(C_{xy}\) from \(B\), while retaining all essential blockers. Thus, if there exists any true separating set \(S\) in \(\mathbb P\), it must coincide with or be a subset of some member of \(\mathcal F_{xy}\), since any smaller set lacking an essential blocker would fail to \(m\)-separate \(x\) and \(y\).

\smallskip
\emph{(iv) Markov-consistent certification.}
Because \(G\) is Markov to \(\mathbb P\), every \(S' \in \mathcal F_{xy}\) that \(m\)-separates \(x\) and \(y\) also satisfies \(X \ind Y \mid S'\) in~\(\mathbb P\). Hence, if \(X \ind Y \mid S\) holds in the true distribution, there exists a corresponding \(S' \in \mathcal F_{xy}\) that certifies the same independence. Deleting the edge \(x *\!-\!* y\) or orienting the induced collider preserves Markovness and prepares the graph for the subsequent FCI orientation phase.
\qedhere
\end{proof}

\subsection{Effects of Unfaithfulness}
\label{sec:rb-unfaithfulness}

The preceding result assumes only Markovness, not Faithfulness.  In the oracle setting this is sufficient for correctness: every conditional independence detected by recursive blocking corresponds to a genuine \(m\)-separation in the underlying MAG, and each certified separating set \(S' \in \mathcal F_{xy}\) therefore reflects a true structural independence.  However, in finite-sample or unfaithful situations, apparent independences may arise that are not \(m\)-separations in any causal MAG.

\smallskip
An \emph{unfaithful independence} occurs when two variables are statistically independent only because of numerical parameter cancellation rather than structural separation.  If FCIT acts on such a spurious independence—by deleting an edge or orienting a collider—the resulting graph may violate the ancestral or maximal properties that define a legal PAG.  In this case no valid MAG can realize the asserted independence, and the resulting structure ceases to represent a Markov equivalence class.

\smallskip
To ensure interpretability of search results, the implementation therefore includes a \emph{legality reversion} step: whenever an update would yield an illegal PAG, the modification is reverted and the current graph is projected back into the space of legal PAGs.  This step has no effect under Faithfulness, where all detected independences are structural and the procedure preserves legality automatically, but it provides robustness in the presence of unfaithful or noisy data by preventing the search from producing non-representable graphs.

\subsection{Relationship to Zhang's m-Separation}
\label{sec:rb-zhang}

Algorithms~\ref{alg:block_paths_recursively} and~\ref{alg:find_path_to_target} operate on a PAG or MAG \(G=(V,E)\), two distinct vertices \(x,y\in V\), a set of mandatory conditioning vertices \(C\subseteq V\) (\texttt{containing}), and a set of vertices that may \emph{not} be traversed \(F\subseteq V\) (\texttt{notFollowing}). They return a blocking set \(B\) obeying \(C\subseteq B\subseteq V\setminus F\).

A direct edge \(x *\!-\!* y\) has no interior vertices and therefore provides no opportunity to satisfy the blocking clauses by conditioning on a noncollider or leaving a collider unactivated.  Hence, while clause (ii) (“all colliders are in or have descendants in \(B\)”) holds trivially, this does not imply independence as long as \(x\) and \(y\) remain adjacent.  To certify \(m\)-separation, both adjacency and path conditions must be satisfied.  Zhang’s criterion~\citep{zhang2008completeness} explicitly requires
\[
  \neg\Adj(x,y,G)
  \quad\text{and}\quad
  \forall p\,\Blocked(p,B),
\]
which matches the structure of our Soundness theorem.  Once the edge \(x *\!-\!* y\) is removed, Soundness supplies the second clause, ensuring that all \(x\text{--}y\) paths are blocked.  The following observations then establish constructive equivalence:

\begin{enumerate}[label=(\roman*),leftmargin=*]
  \item Once \(x\) and \(y\) become nonadjacent, the Fix-point Lemma ensures that the current blocking set \(B\) blocks every \(x\text{--}y\) path; hence \(\text{m-sep}(x,y\mid B)\).

  \item Conversely, if there exists a set \(B^{\star}\) with \(\text{m-sep}(x,y\mid B^{\star})\), the algorithm eventually halts while \(x\) and \(y\) are still nonadjacent.  Monotone growth gives \(C\subseteq B\subseteq B^{\star}\), and Soundness yields \(\text{m-sep}(x,y\mid B)\).  Thus the procedure always returns a valid separating set whenever one exists.
\end{enumerate}

\smallskip
\noindent
\emph{Hence, whenever \(x\) and \(y\) are \(m\)-separable, the recursive blocking procedure finds a separating set consistent with the user’s constraints.}
\subsection{Complexity}\label{sec:rb-complexity}
Let $\Delta_x = |\mathrm{Adj}_G(x)|$ and let $\ell$ be the path-length cap used by the
routine (\texttt{maxPathLength}). In each call, the recursion explores a binary
branch (first without, then with $b \in \mathbf{B}$), and along any single branch
the \texttt{path} guard prevents revisiting vertices. The cost of one branch is
linear in the size of the explored subgraph (dominated by adjacency scans), i.e.,
$\mathcal{O}(|E|)$ in the worst case.

\paragraph{Bounded depth.}
If $\ell$ is a fixed constant, the total number of branches is $\mathcal{O}(2^\ell)$,
so the overall complexity is
\[
  \mathcal{O}\!\left(\Delta_x \cdot 2^\ell \cdot |E|\right).
\]
Since $2^\ell$ is constant, this is effectively $\mathcal{O}(\Delta_x\,|E|)$.

\paragraph{Unbounded depth.}
If $\ell$ grows with the graph (no path-length cap), the worst-case number of
branches can be exponential in the path length, yielding
\[
  \mathcal{O}\!\left(\Delta_x \cdot 2^{\Theta(\ell)} \cdot |E|\right),
\]
which in the worst case is exponential in $|V|$.

\paragraph{Space.}
The algorithm stores $\mathbf{B}$ and the current \texttt{path}, each of size
$\mathcal{O}(|V|)$, plus the descendants index (precomputed once), which is
$\mathcal{O}(|V|+|E|)$.

\subsection{Concrete Example: Peter’s MAG}

Consider the MAG in Figure~\ref{fig:peters-mag} with the following edges:
\[
  x \;\to\; b,\quad
  b \;\to\; z,\quad
  z \;\leftrightarrow\; y,\quad
  x \;\leftrightarrow\; a,\quad
  a \;\leftrightarrow\; y,\quad
  a \;\to\; b.
\]
We want to determine whether \(x\) and \(y\) are m-separable.

\paragraph{1. Initial Conditioning Set.}
We begin with \(B = \varnothing\). The recursive blocking driver repeatedly calls \(\textit{find\_path\_to\_target}(x, y, B)\) to detect any m-unblocked path from \(x\) to \(y\).

\paragraph{2. Checking Paths.}
A candidate path is
\[
  x \;\to\; b \;\to\; z \;\leftrightarrow\; y.
\]
Since \(b \to z\) and \(z \leftarrow y\) (by virtue of \(z \leftrightarrow y\)), 
the node \(z\) is a \emph{collider}. 
A collider path is BlockedAfter by default if neither the collider nor its descendants are in \(B\). Because \(B=\varnothing\), the path is already inactive. Hence, the procedure has no need to add \(b\). Likewise, the path \(x \leftrightarrow a \leftrightarrow y\) or any variant is also BlockedAfter 
under \(\varnothing\). 

\paragraph{3. No Node Added.}
The algorithm finds \emph{no} m-unblocked path and therefore terminates without modifying \(B\). Consequently, \(B\) remains empty. Indeed, \(\varnothing\) suffices to m-separate \(x\) and \(y\).

\paragraph{4. Conclusion.}
This example shows that, with correct collider detection, the path \(x \to b \to z \leftrightarrow y\) is recognized as BlockedAfter under \(\varnothing\). A naive method that tries to block \(z\) because of the noncollider path \(x \rightarrow b \rightarrow z\) would incorrectly add \(b\). By contrast, the proper recursive blocking procedure sees no reason to alter \(B\), returning \(\varnothing\). This illustrates the crucial role of the \textit{find\_path\_to\_target} method.

\begin{figure}
    \centering
    \includegraphics[width=0.3\linewidth]{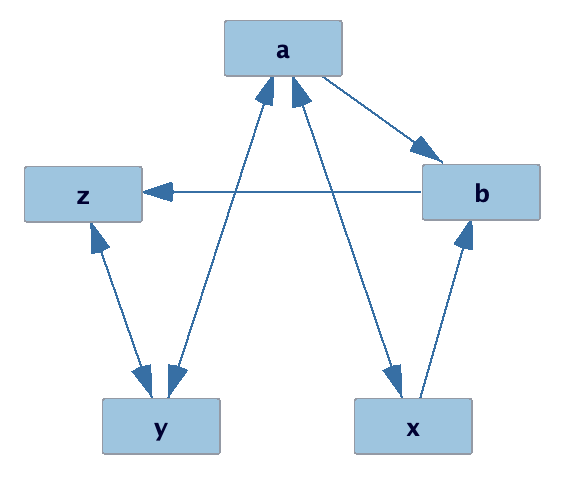}
    \caption{Peter’s MAG example. Since \(z\) is a collider on the path 
    \(x \to b \to z \leftrightarrow y\), it is already BlockedAfter under \(\varnothing\). 
    No additional conditioning is needed.}
    \label{fig:peters-mag}
\end{figure}

\section{Optimizing the Zhang Rules}
\label{sec:optimizing_zhang_rules}

In FCIT, the final FCI orientation rules are applied repeatedly, and some of these rules can become bottlenecks in practice. The Zhang rules are given in Section \ref{sec:zhang_rules}. In particular, we focus on optimizing four rules, whose standard implementations are typically the most time-consuming:

\begin{itemize}
    \item \textbf{Rule R4}, which requires searching for \emph{discriminating paths} and can involve conditioning on many subsets of adjacent nodes of varaibles.
    \item \textbf{Rules R5 and R9}, both of which involve identifying \emph{all-circle-endpoint} paths of potentially unbounded length.
    \item \textbf{Rule R10}, which requires finding certain \emph{semidirected paths} to finalize circle endpoints.
\end{itemize}

Below, we give our optimizations for each of these.

\subsection{Zhang's Final FCI Orientation Rules}
\label{sec:zhang_rules}

For completeness, we recap here the orientation rules (R0--R10) that Zhang~\cite{zhang2008completeness} proves are both sound and complete for FCI in the presence of latent variables and selection bias. In these rules, \(\alpha,\beta,\gamma,\theta\) denote generic vertices, and the symbol ``\(\ast\)'' is a wildcard that can represent an arrowhead (\(\rightarrow\)), a tail (\(-\)), or a circle (\(\circ\)). Any rule that does not apply in a given configuration can be ignored.

\begin{description}[leftmargin=0pt]
\item[R0 \ (Unshielded collider):] 
  For every unshielded triple \(\langle \alpha, \gamma, \beta \rangle\)
  (i.e.\ \(\alpha\) and \(\beta\) both adjacent to \(\gamma\) but not to each other):
  \begin{itemize}
    \item If \(\gamma \not\in \text{Sepset}(\alpha,\beta)\), orient \(\alpha \ast\to \gamma \leftarrow\!\ast \beta\).
  \end{itemize}

\item[R1:]
  If \(\alpha \ast\to \beta \;\circ-\!\ast \gamma\) and \(\alpha\) and \(\gamma\) are not adjacent, orient 
  \(\beta \to \gamma\).
  
\item[R2:]
  If either 
  \(\alpha \to \beta \ast\to \gamma\)
  \textbf{or}
  \(\alpha \ast\to \beta \to \gamma\),
  and there is an unoriented edge \(\alpha \ast -\!\circ \gamma\), orient \(\alpha \ast\to \gamma\).

\item[R3:]
  If \(\alpha \ast\to \beta \leftarrow\!\ast \gamma\), and 
  \(\alpha \ast-\!\circ \theta \circ-\!\ast \gamma\), 
  with \(\alpha\) and \(\gamma\) not adjacent, and
  \(\theta \ast-\!\circ \beta\),
  then orient \(\theta \ast\to \beta\).

\item[R4 \ (Discriminating path):]
  Let \(u = \langle \theta, \dots, \alpha, \beta, \gamma \rangle\) be a \emph{discriminating path} between \(\theta\) and \(\gamma\) for \(\beta\), and suppose \(\beta \circ-\!\ast \gamma\) is currently unoriented. Then:
  \begin{itemize}
    \item If \(\beta \in \text{Sepset}(\theta,\gamma)\), orient \(\beta \circ-\!\ast \gamma\) as \(\beta \to \gamma\).
    \item Otherwise orient \(\langle \alpha, \beta, \gamma\rangle\) as \(\alpha \leftrightarrow \beta \leftrightarrow \gamma\).
  \end{itemize}

\item[R5:]
  If \(\alpha \circ-\!\circ \beta\) and there is an \textit{uncovered circle-only path}
  \(\langle \alpha, \ldots, \beta \rangle\) such that \(\alpha\) and \(\beta\) are \emph{not} adjacent to
  one another's immediate neighbors on that path, orient \(\alpha \circ-\!\circ \beta\) (and every edge on the path) as undirected, i.e.\ \(\alpha - \beta\).

\item[R6:]
  If \(\alpha - \beta \;\circ-\!\ast \gamma\),
  then orient \(\beta \circ-\!\ast \gamma\) as \(\beta - \gamma\).

\item[R7:]
  If \(\alpha - \beta \;\circ-\!\circ \gamma\) and \(\alpha\) and \(\gamma\) are not adjacent, orient 
  \(\beta \circ-\!\circ \gamma\) as \(\beta - \gamma\).

\item[R8:]
  If 
  \(\alpha \to \beta \to \gamma\)
  \textbf{or}
  \(\alpha - \beta \to \gamma\),
  and \(\alpha \circ\!\to \gamma\) is present, orient \(\alpha \circ\!\to \gamma\) as \(\alpha \to \gamma\).

\item[R9:]
  If \(\alpha \circ\!\to \gamma\) and there is an \emph{uncovered potentially-directed path} 
  \(\langle \alpha, \beta, \theta, \ldots, \gamma \rangle\)
  such that \(\beta\) and \(\gamma\) are not adjacent, then orient \(\alpha \circ\!\to \gamma\) as \(\alpha \to \gamma\).

\item[R10:]
  Suppose \(\alpha \circ\!\to \gamma\),
  \(\beta \to \gamma \leftarrow \theta\),
  and there exist uncovered potentially-directed paths 
  \(p_1: \alpha \to^\ast \beta\) 
  and 
  \(p_2: \alpha \to^\ast \theta\), 
  where the two neighbors of \(\alpha\) on \(p_1\) and \(p_2\) (denote them \(\nu\) and \(\omega\)) are distinct and not adjacent. In that case, orient \(\alpha \circ\!\to \gamma\) as \(\alpha \to \gamma\).
\end{description}

\subsection{Optimizing R4 Using Recursive Blocking}
\label{sec:opt-r4}

A major bottleneck among the final orientation rules is \textbf{Rule~R4}, which seeks \emph{discriminating paths} of the form
\[
  X *\!\rightarrow \dots \leftrightarrow\; W \;\leftarrow\!\circ\; V' \;\circ\!\rightarrow\; Y,
\]
where each internal collider along the path is a parent of \(Y\). In the standard approach \citep{spirtes2001causation}, applying R4 often requires exploring many potential conditioning sets, which is time-consuming and can lead to inaccuracies in dense graphs.

To mitigate this, we leverage the \emph{recursive blocking procedure} (Section~\ref{alg:block_paths_recursively}) to focus only on a small, targeted collection of conditioning sets. Specifically:

\begin{enumerate}
\item \textbf{Enumerate discriminating paths.}
  Identify all valid and unoriented discriminating paths from \(X\) to \(Y\). Because FCIT prunes many extraneous edges beforehand, the number of such paths is usually modest.

\item \textbf{Collect collider candidates.}
  For each path, record the possible collider nodes (including \(V'\) if its endpoint with \(Y\) is a circle). These nodes form a small set of “candidate colliders” that we either include or exclude in our conditioning sets.

\item \textbf{Apply recursive blocking.}
  For each subset \(Q\) of these candidate colliders, call the \textit{block\_paths\_recursively} subroutine to build a minimal set \(S\) that blocks all active paths from \(X\) to \(Y\)—assuming each node in \(Q\) is treated as a collider.

\item \textbf{Test subsets of common adjacents.}
  Let \(C\) be the set of common adjacents of \(X\) and \(Y\). For each subset \(C' \subseteq C\), we form \(S' = S \setminus C'\) (or vice versa, depending on how $Q$ is handled) and test whether \(X \ind Y \mid S'\). If \(X \ind Y \mid S'\) and \(V' \notin S'\), we orient the relevant structure as a new collider (e.g., \(W \leftrightarrow V' \leftarrow Y\)); otherwise, we orient \(\alpha \circ\!\to \gamma\) as a noncollider.
\end{enumerate}

By confining the conditioning sets to those implied by the recursive blocking procedure on legitimate discriminating paths, we drastically reduce both the path search space and the number of conditional independence tests performed. We give pseudocode for this procedure in Algorithm \ref{alg:ddp_recursive}.

\subsection{Shortest-Path Searches for R5 and R9}
\label{sec:r5-r9}

Rules \textbf{R5} and \textbf{R9} require checking whether a path composed entirely of circle endpoints exists between two nodes. Naively, one might enumerate all such paths, but this is prohibitively expensive for larger graphs. Instead, we construct an \emph{auxiliary circle-only} graph:

\begin{itemize}
\item The node set is the same as in the original PAG.
\item We add an undirected edge \(X - Y\) in the auxiliary graph if and only if \(X \circ\!\!-\!\!\circ Y\) in the PAG.
\end{itemize}

We then run a shortest-path algorithm (e.g., BFS or Dijkstra’s, typically BFS with uniform edge weights) from \(X\). If \(Y\) is reachable in this auxiliary graph, then at least one all-circle path exists in the original PAG. Because most edges with definite tails or arrowheads are excluded, the auxiliary graph is often much sparser, allowing us to detect or rule out circle-only paths much more efficiently than exhaustive enumeration.

\subsection{Uncovered Potentially-Directed Path Checking for R10}

\label{sec:r10-approach}

\textbf{Rule R10} is among the costliest FCI orientation rules because it requires verifying the existence of \emph{uncovered, potentially-directed (PD)} paths.
The rule applies when
$\alpha \circ\!\to \gamma,\quad$
$\beta \to \gamma \leftarrow \theta,$
and there exist uncovered PD paths
$p_1 : \alpha \to^* \beta$ and $p_2 : \alpha \to^* \theta$
whose first hops from~$\alpha$ are through distinct, nonadjacent neighbors~$\mu$ and~$\omega$.
If so, the circle on~$\alpha \circ\!\to \gamma$ is replaced by a tail ($\alpha \to \gamma$).

To guarantee correctness, we use an explicit \emph{depth-first search} that enforces both constraints:
(i) every step must follow an edge with no arrowhead into the previous node (potentially-directed), and
(ii) each triple must be unshielded (uncovered).
Subproblems are memoized to avoid redundant traversals.
This yields an exact implementation of R10 that ensures all orientations remain valid under the oracle semantics of Zhang’s rules.

\section{Supplementary Traces}

\subsection*{Supplementary Trace 1: Recursive Blocking Example}
\label{sec:supplementary_trace}

This section provides a worked example tracing the behavior of \textit{block\_paths\_recursively} on a case where the score-based DAG includes a spurious edge that FCIT correctly removes through recursive blocking. This example shows the recursive blocking procedure removing an extraneous edge due to latent confounding.

\paragraph{True model.} The true causal DAG includes an unobserved confounder $L$ that influences both $Y$ and $Z$, and is given by:
\[
X \rightarrow Y \leftarrow L \rightarrow Z \leftarrow W
\]
Here, $L$ is latent. The observed variables are $X$, $Y$, $Z$, and $W$. The induced PAG over these observed variables has the following structure:
\begin{itemize}
    \item $X \rightarrow Y$
    \item $W \rightarrow Z$
    \item $Y \leftrightarrow Z$ (due to unmeasured confounding via $L$)
\end{itemize}

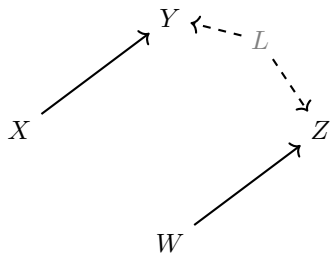
\begin{figure}[H]
\centering
\begin{tikzpicture}[->, thick, node distance=2cm, scale=1, every node/.style={transform shape},
    latent/.style={draw=none, fill=none, text=gray},
    bidirected/.style={<->, dashed, thick}]

\node (X) at (0,0) {$X$};
\node (Y) at (2,1.5) {$Y$};
\node (Z) at (4,0) {$Z$};
\node (W) at (2,-1.5) {$W$};
\node[latent] (L) at (3.2, 1.2) {$L$};

\draw[->] (X) -- (Y);
\draw[->] (W) -- (Z);
\draw[->, dashed] (L) -- (Y);
\draw[->, dashed] (L) -- (Z);

\end{tikzpicture}
\caption{True causal DAG including latent variable $L$ (dashed arrows). Only $X$, $Y$, $Z$, and $W$ are observed.}
\label{fig:true_dag}
\end{figure}

\paragraph{Score-derived DAG.} Since the score-based method cannot represent latent confounding directly, it introduces the edge $X \rightarrow Z$ to cover the collider $X *-> Y <-* Z$ so that the graph may remain Markov to the data. The resulting DAG is:

\[
X \rightarrow Y \rightarrow Z \leftarrow W,\quad X \rightarrow Z
\]

\begin{figure}[H]
\centering
\begin{tikzpicture}[->, thick, node distance=2cm, scale=1, every node/.style={transform shape}]

\node (X) at (0,0) {$X$};
\node (Y) at (2,1.5) {$Y$};
\node (Z) at (4,0) {$Z$};
\node (W) at (2,-1.5) {$W$};

\draw[->] (X) -- (Y);
\draw[->] (Y) -- (Z);
\draw[->] (W) -- (Z);
\draw[->] (X) to[bend left=15] (Z);

\end{tikzpicture}
\caption{DAG returned by score-based structure search. The edge $X \rightarrow Z$ does not belong in the true PAG and will be tested for removal.}
\label{fig:score_dag}
\end{figure}
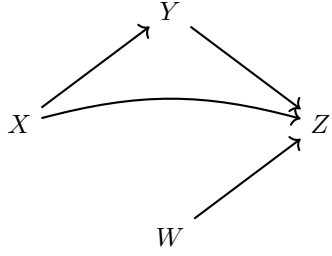

\paragraph{Blocking analysis.} We apply \textit{block\_paths\_recursively($X$, $Z$)} to test whether the edge $X \rightarrow Z$ can be removed.

\begin{table}[H]
\centering
\begin{tabular}{@{}lll@{}}
\toprule
\textbf{P} & \textbf{Type} & \textbf{Status under $B = \{Y\}$} \\
\midrule
$X \rightarrow Z$                & Direct edge        & Active \\
$X \rightarrow Y \rightarrow Z$ & Non-collider chain & Blocked (since $Y \in B$) \\
$X \rightarrow Z \leftarrow W$  & Collider at $Z$    & Blocked (since $Z \notin B$) \\
\bottomrule
\end{tabular}
\vspace{0.5em}
\caption{Paths from $X$ to $Z$ considered by \textit{block\_paths\_recursively}. All non-collider paths are BlockedAfter by $B = \{Y\}$.}
\label{tab:blocking_paths}
\end{table}

The recursive blocking procedure returns:
\[
\boxed{B = \{Y\}}
\]

Next, FCIT computes the set of common adjacents $C = \text{Adj}(X) \cap \text{Adj}(Z) = \{Y\}$, and tests conditional independence over all subsets of $B \setminus D$ for $D \subseteq C$:

\begin{table}[H]
\centering
\begin{tabular}{@{}lll@{}}
\toprule
\textbf{Subset $D \subseteq C = \{Y\}$} & \textbf{Tested set $B \setminus D$} & \textbf{Independence result} \\
\midrule
$\emptyset$     & $\{Y\}$     & Dependent (fails) \\
$\{Y\}$         & $\emptyset$ & Independent (succeeds) \\
\bottomrule
\end{tabular}
\vspace{0.5em}
\caption{Conditional independence tests performed by FCIT for $X \ind Z$. The independence is found given the empty set.}
\label{tab:ci_tests}
\end{table}

\paragraph{Conclusion.} Since $X \ind Z \mid \emptyset$, FCIT removes the edge $X \rightarrow Z$. Because the separating set does not contain $Y$, the unshielded triple $X - Y - Z$ is oriented as a collider:
\[
X \rightarrow Y \leftarrow Z
\]

This demonstrates how FCIT successfully recovers latent-induced collider structure by combining recursive blocking with selective independence testing.

\subsection*{Supplementary Trace 2: Discriminating Path Required for Edge Removal}
\label{sec:discriminating_path_trace}

This example illustrates how a discriminating path must be used during the edge removal phase so that FCIT can correctly orient a collider and remove a spurious edge. Without including this, incorrect adjacencies and orientations persist in the output PAG.

\paragraph{True DAG.} The true DAG is in Figure~\ref{fig:graph_trio} (a). Measured nodes are shown in rectangular boxes, and latent variables are shown in elliptical boxes.

\begin{figure}[H]
\centering
\begin{subfigure}[t]{0.31\textwidth}
    \centering
    \includegraphics[width=\linewidth]{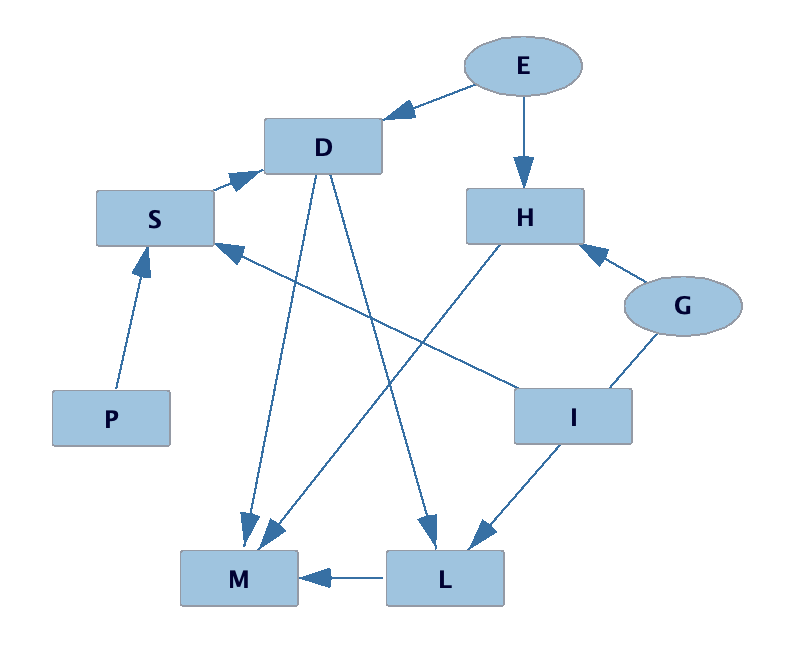}
    \caption{True DAG}
    \label{fig:true_dag_png}
\end{subfigure}
\hfill
\begin{subfigure}[t]{0.31\textwidth}
    \centering
    \includegraphics[width=\linewidth]{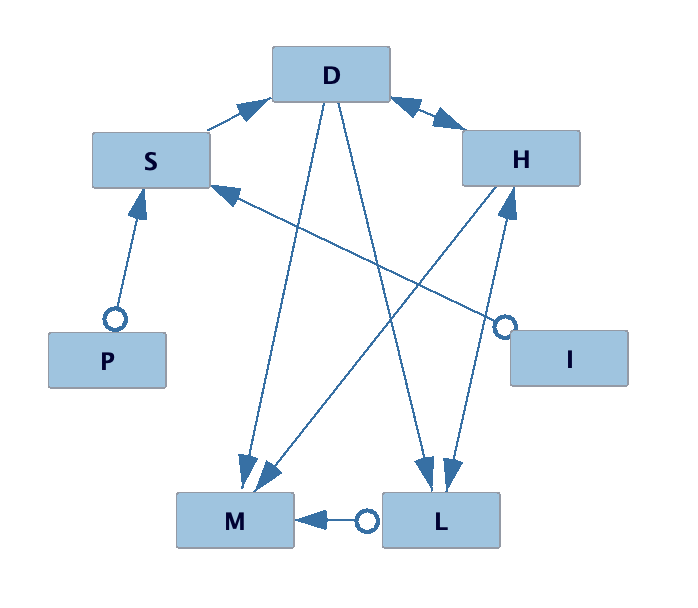}
    \caption{Induced PAG}
    \label{fig:true_pag_png}
\end{subfigure}
\hfill
\begin{subfigure}[t]{0.31\textwidth}
    \centering
    \includegraphics[width=\linewidth]{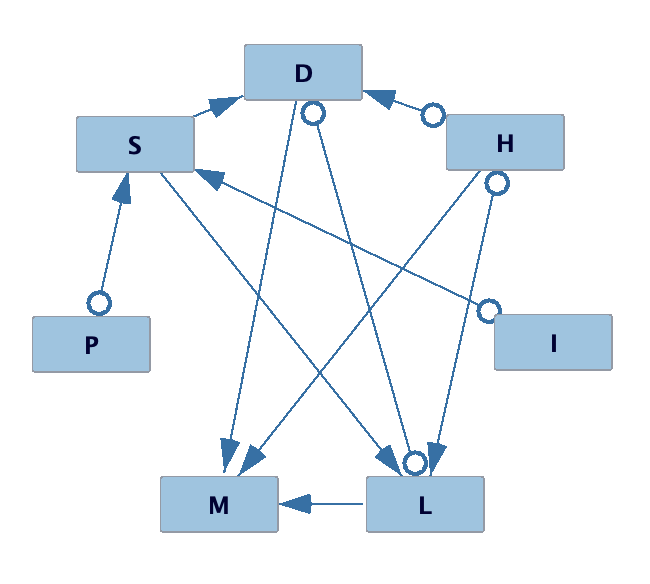}
    \caption{FCIT without discriminating path edge removal step}
    \label{fig:fci_tt_no_dp_png}
\end{subfigure}
\caption{Graph visualizations generated using Tetrad. Left: the true DAG (including latent variables). Middle: the correct PAG over observed variables. Right: PAG returned by FCIT when the discriminating path edge removal rule is disabled.}
\label{fig:graph_trio}
\end{figure}

\paragraph{Key structure.} In the induced PAG, the triple $D *$--$L$--$* M$ is initially unoriented. FCIT uses the discriminating path:
\[
D \leftrightarrow H \leftrightarrow L \rightarrow M
\]
to determine that $L$ is a collider in this triple. Orienting $D \rightarrow L \leftarrow M$ enables the recursive blocking procedure to find a separating set between $D$ and $M$, allowing the spurious edge to be removed.

Table~\ref{tab:pag_comparison_summary} shows a summary of the differences between the PAG found with the discriminating path edge removal step and the PAG found without it. Without that step, there are several orientation mistakes and one adjacency mistake.

\begin{table}[H]
\centering
\begin{tabular}{@{}lll@{}}
\toprule
\textbf{Edge} & \textbf{Correct PAG} & \textbf{Without Discriminating Path Step} \\
\midrule
D \textendash{} H & $D \leftrightarrow H$ & $H \to D$ \\
H \textendash{} L & $H \leftrightarrow L$ & $H \to L$ \\
D \textendash{} L & $D \to L$ & $D$ o--o $L$ \\
L \textendash{} M & $L$ o$\to$ $M$ & $L \to M$ \\
S \textendash{} L & (absent) & $S \to L$ \\
Other edges & identical & identical \\
\bottomrule
\end{tabular}
\caption{Comparison of key edges in the PAG with and without using the discriminating path orientation step in edge removal. Differences arise due to missed collider orientations affecting recursive blocking decisions.}
\label{tab:pag_comparison_summary}
\end{table}

\paragraph{Conclusion.}
This example demonstrates that discriminating path orientation must occur during edge removal for recursive blocking to correctly identify separating sets and return a valid PAG. This requirement is specific to FCIT. In FCI and GFCI, adjacencies $X !\text{–}! Y$ are removed prior to orientation, using subsets of adjacents or Possible-D-SEP sets. FCIT, by contrast, identifies separating sets by tracing paths recursively—an approach that depends on correct collider orientation during edge removal, including colliders identified via discriminating paths. This design enables a more efficient and accurate latent variable search: it leverages richer graphical structure during edge removal and reduces the number of conditional independence tests to those involving subsets of the common adjacents of $X$ and $Y$ after the initial CPDAG search, a much smaller search space.

\paragraph{Take-Home Message.}
Together, these traces illustrate how FCIT integrates recursive blocking with discriminating path orientation during edge removal, yielding separating sets that are both valid and efficient to compute. By using more of the graph’s structure earlier in the process, FCIT achieves correctness guarantees while sharply reducing the search space for conditional independence tests.

\section{Auxiliary Figures}
\label{sec:auxiliary_figures}

\subsection*{Appendix: 20-Node Results for All Average Degrees}

For completeness, we provide all figures for the 20-node case with average degrees 2 and 6  
(Figs.~\ref{fig:ap20deg2}–\ref{fig:cpu20deg2} and  
Figs.~\ref{fig:ap20deg6}–\ref{fig:cpu20deg6}, respectively).  
These exhibit trends consistent with the average-degree 4 case discussed in the main text.  
At lower densities (average degree 2) all algorithms recover sparser, more stable structures,  
whereas at higher densities (average degree 6) recall improves but orientation accuracy declines slightly.


\begin{figure}
    \centering
    \includegraphics[width=0.5\linewidth]{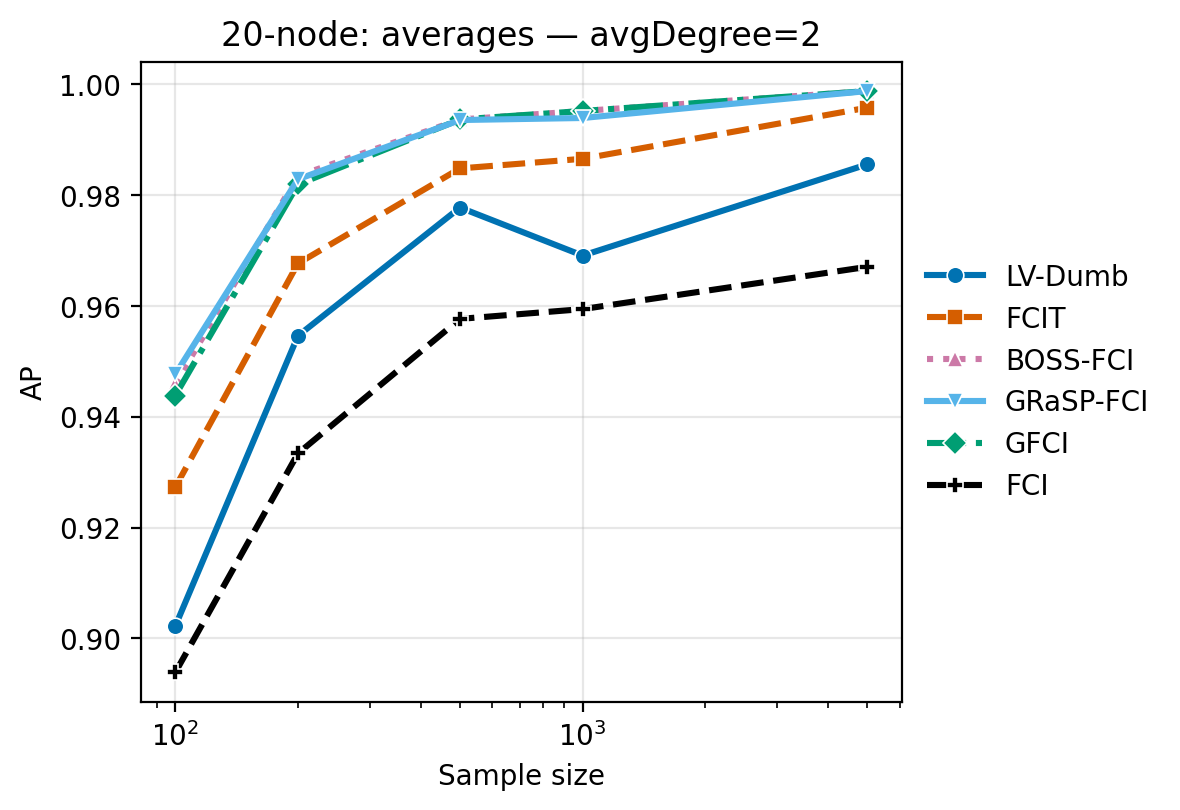}
    \caption{Adjacency Precision (AP) for 20-node graphs with average degree 2. 
    Precision remains uniformly high across all algorithms ($>0.93$). 
    \textsc{BOSS-FCI} \textsc{GRaSP-FCI}, and \textsc{GFCI}  achieve near-perfect accuracy ($\approx 1.0$) throughout, with \textsc{LV-Dumb} lagging behimd a bit. \textsc{FCI} is also slightly behind but still quite high. 
    Results are averaged over graphs with 0, 4, and 8 latent common causes.}
    \label{fig:ap20deg2}
\end{figure}

\begin{figure}
    \centering
    \includegraphics[width=0.5\linewidth]{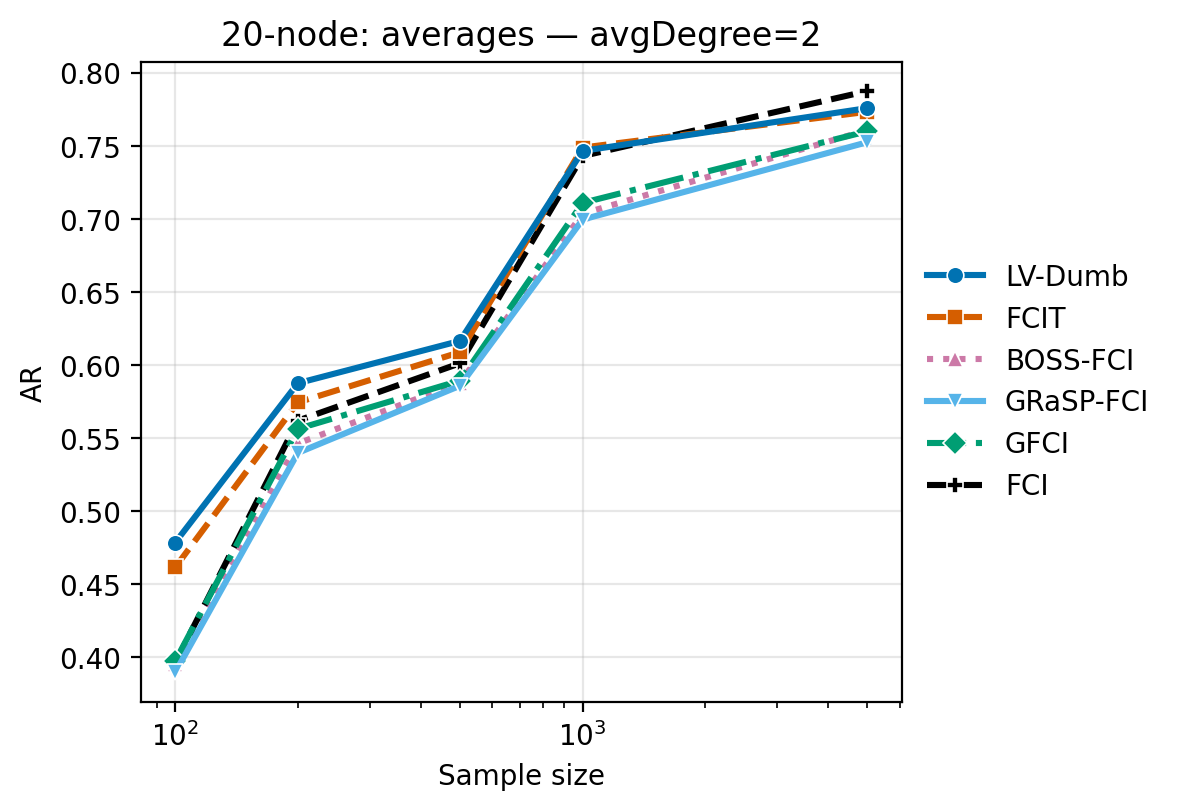}
    \caption{Adjacency Recall (AR) for 20-node graphs with average degree 2. 
    All algorithms improve sharply with sample size, essentially in parallel. 
    Results are averaged over graphs with 0, 4, and 8 latent common causes.}
    \label{fig:ar20deg2}
\end{figure}

\begin{figure}
    \centering
    \includegraphics[width=0.5\linewidth]{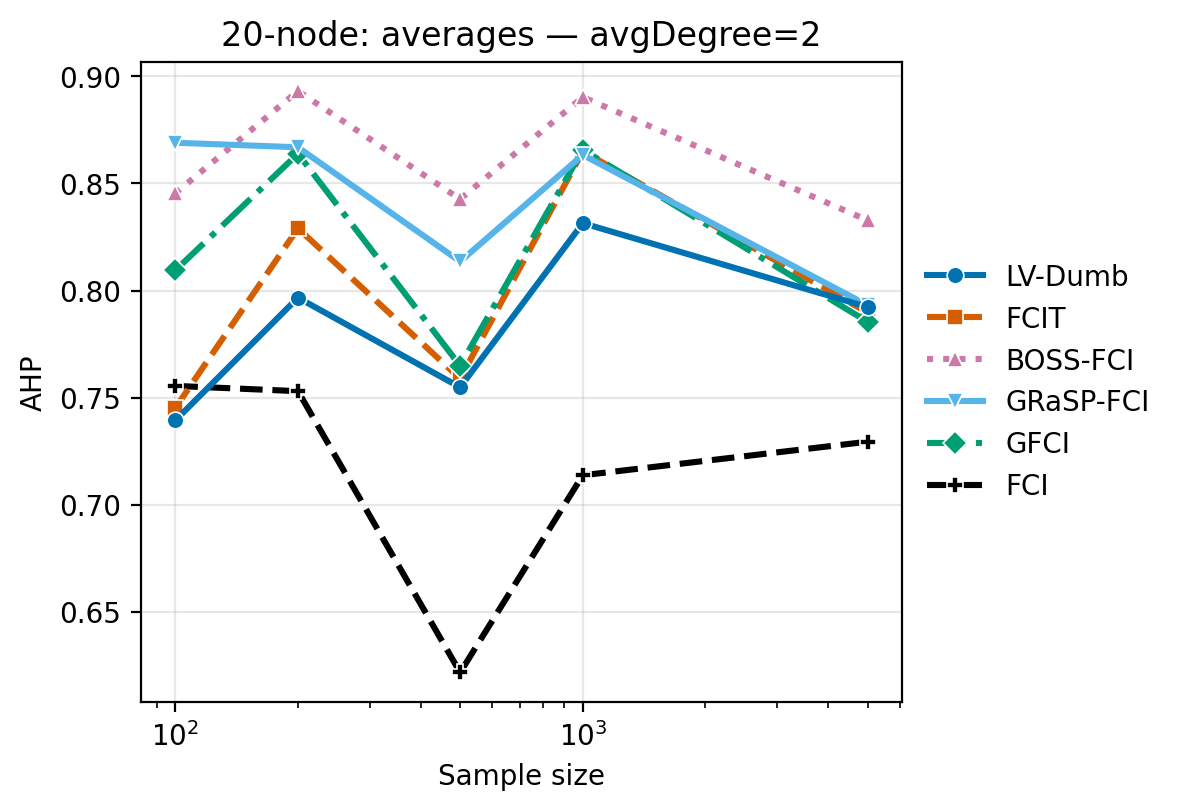}
    \caption{Arrowhead Precision (AHP) for 20-node graphs with average degree 2. 
    \textsc{LV-Dumb} lags at small samples, and \textsc{FCI} is behind overall; the other algorithms do quite well. 
    \textsc{FCI} remains slightly lower but converges steadily. 
    Results are averaged over graphs with 0, 4, and 8 latent common causes.}
    \label{fig:ahp20deg2}
\end{figure}

\begin{figure}
    \centering
    \includegraphics[width=0.5\linewidth]{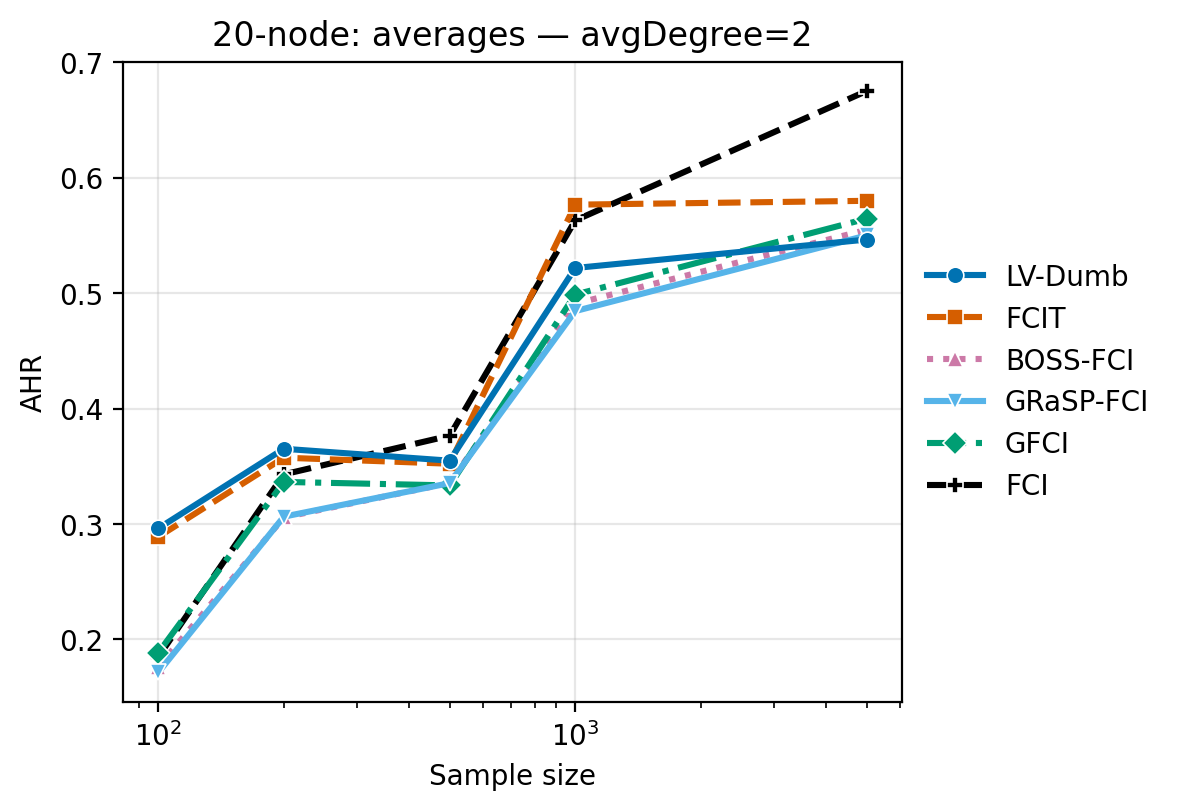}
    \caption{Arrowhead Recall (AHR) for 20-node graphs with average degree 2. 
    Recall improves monotonically with $N$ for all algorithms, exceeding 0.7 at the largest sample size. 
    \textsc{FCI} attains the highest overall recall, followed closely by \textsc{FCIT}, while \textsc{LV-Dumb} and the GFCI-family algorithms show similar convergence. 
    Results are averaged over graphs with 0, 4, and 8 latent common causes.}
    \label{fig:ahr20deg2}
\end{figure}

\begin{figure}
    \centering
    \includegraphics[width=0.5\linewidth]{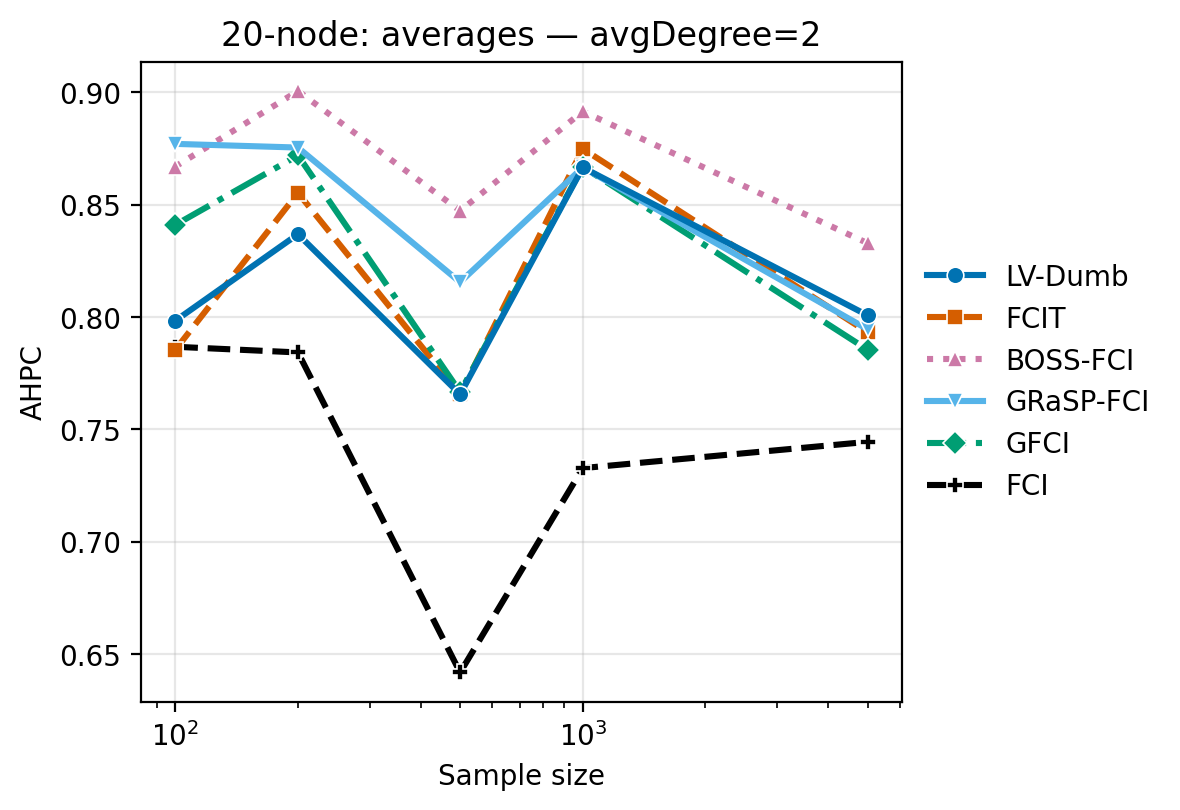}
    \caption{Arrowhead Precision for Common Adjacencies (AHPC) for 20-node graphs with average degree 2. 
    Most algorithms fare quite well here, with \textsc{FCI} lagging behind. 
    Results are averaged over graphs with 0, 4, and 8 latent common causes.}
    \label{fig:ahpc20deg2}
\end{figure}

\begin{figure}
    \centering
    \includegraphics[width=0.5\linewidth]{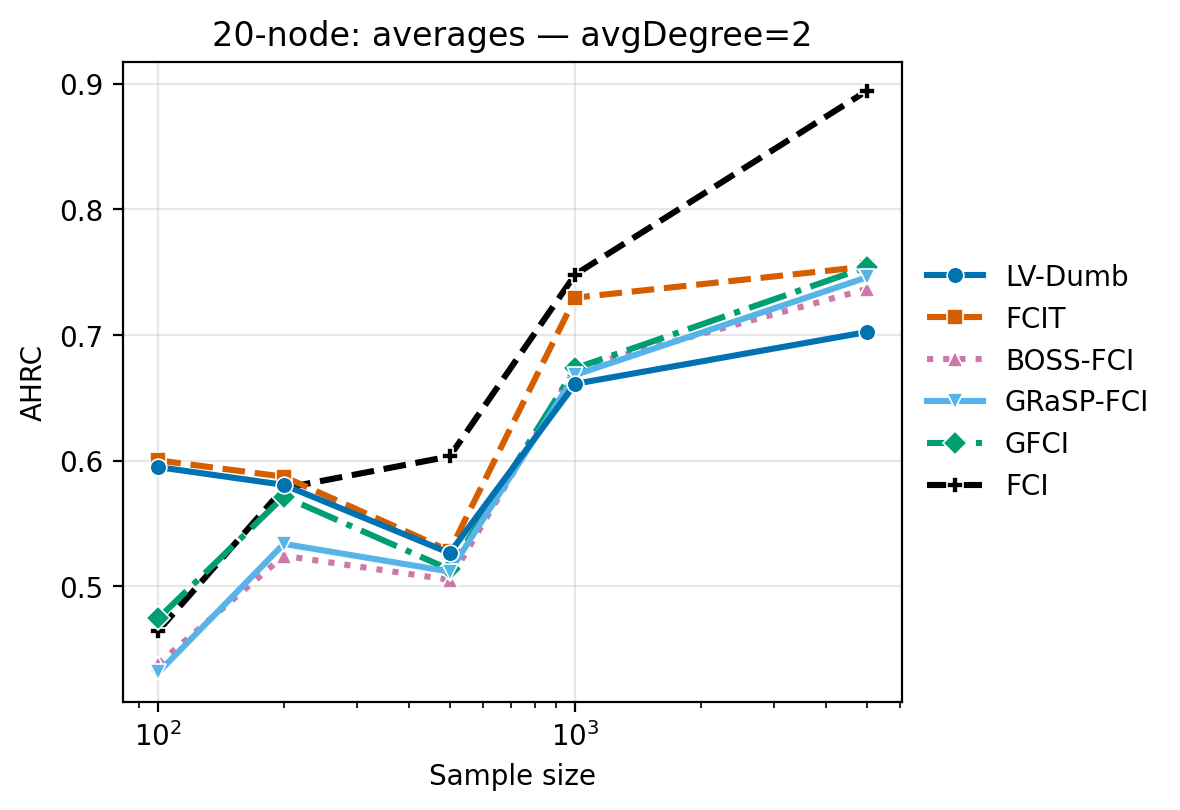}
    \caption{Arrowhead Recall for Common Adjacencies (AHRC) for 20-node graphs with average degree 2. 
    \textsc{FCI} achieves the highest arrowhead recall ($>0.9$) at large $N$, with other algorithms somewhat lower. Out of the algorithms with good AHPC, \textsc{FCIT} performs the best.
    Results are averaged over graphs with 0, 4, and 8 latent common causes.}
    \label{fig:ahrc20deg2}
\end{figure}

\begin{figure}
    \centering
    \includegraphics[width=0.5\linewidth]{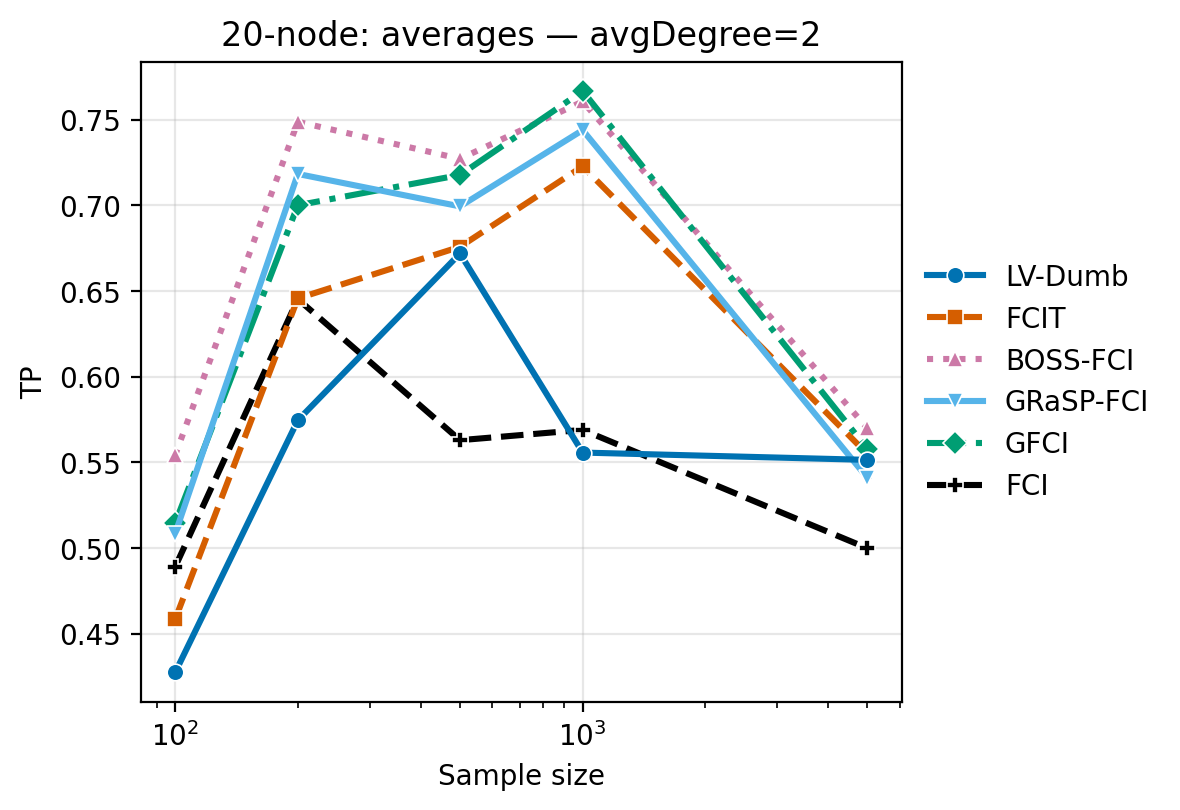}
    \caption{Tail Precision (TP) for 20-node graphs with average degree 2. 
    Most algorithms are in a top group, with \textsc{LV-Dumb} and \textsc{FCI} lagging behind.
    \textsc{FCI} trails slightly at larger sample sizes. 
    Results are averaged over graphs with 0, 4, and 8 latent common causes.}
    \label{fig:tp20deg2}
\end{figure}

\begin{figure}
    \centering
    \includegraphics[width=0.5\linewidth]{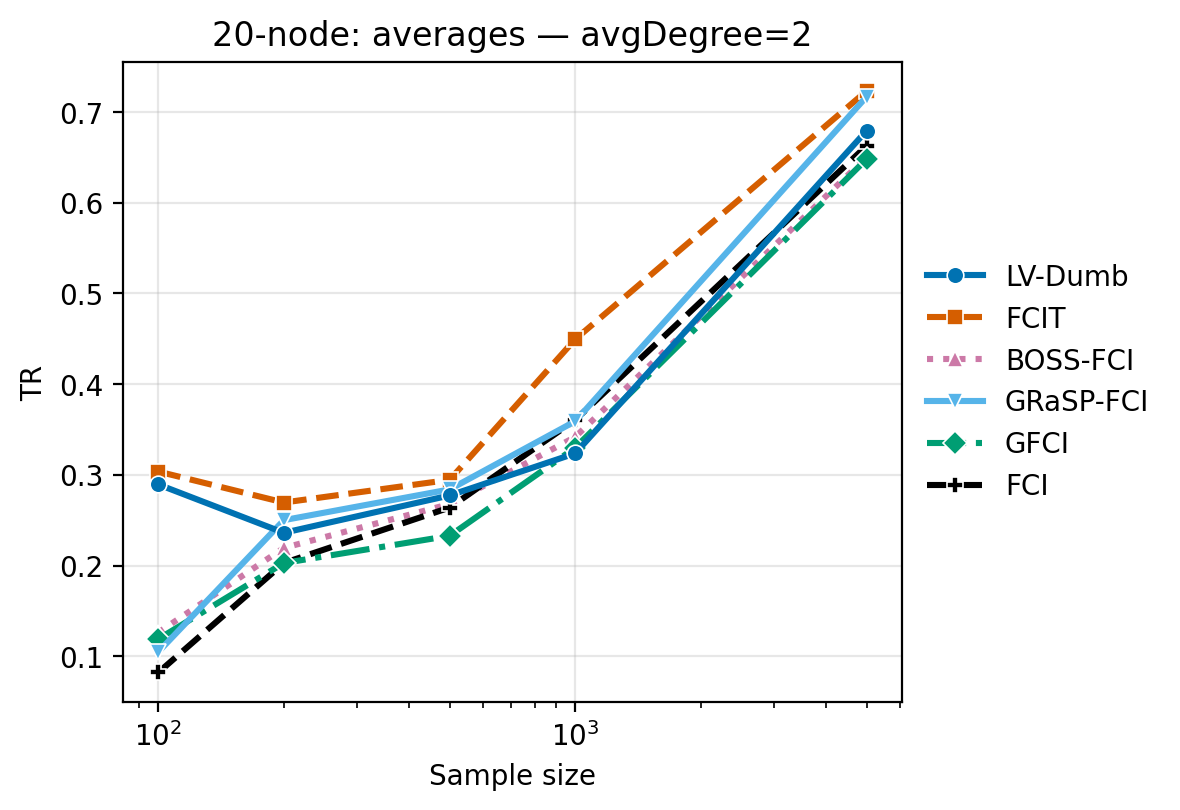}
    \caption{Tail Recall (TR) for 20-node graphs with average degree 2. 
    The algorithm all consistently improve with sample size, with \textsc{FCIT} having the largest tail recalls uniformly.
    Results are averaged over graphs with 0, 4, and 8 latent common causes.}
    \label{fig:tr20deg2}
\end{figure}

\begin{figure}
    \centering
    \includegraphics[width=0.5\linewidth]{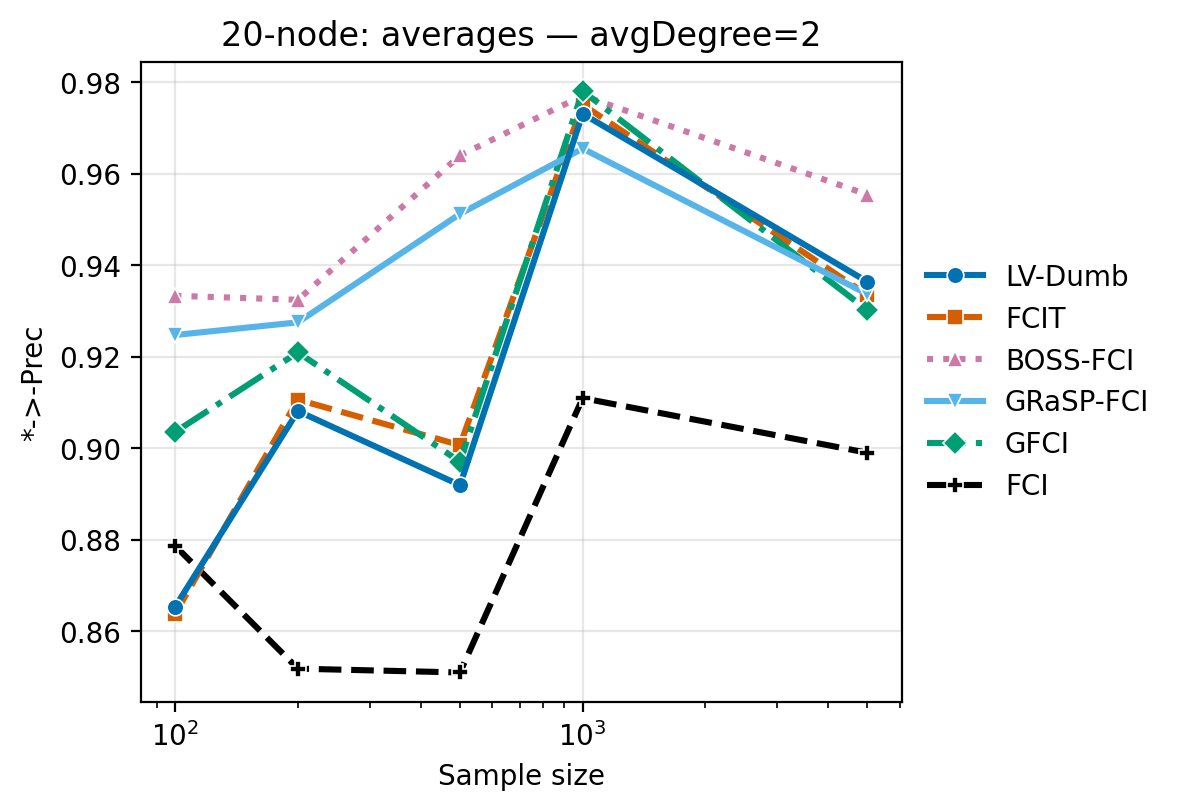}
    \caption{Arrow Path Precision for 20-node graphs with average degree 2. 
    This precision is excellent here for all algorithms except \textsc{FCI} as $N$ increases. 
    Results are averaged over graphs with 0, 4, and 8 latent common causes.}
    \label{fig:arrow20deg2}
\end{figure}

\begin{figure}
    \centering
    \includegraphics[width=0.5\linewidth]{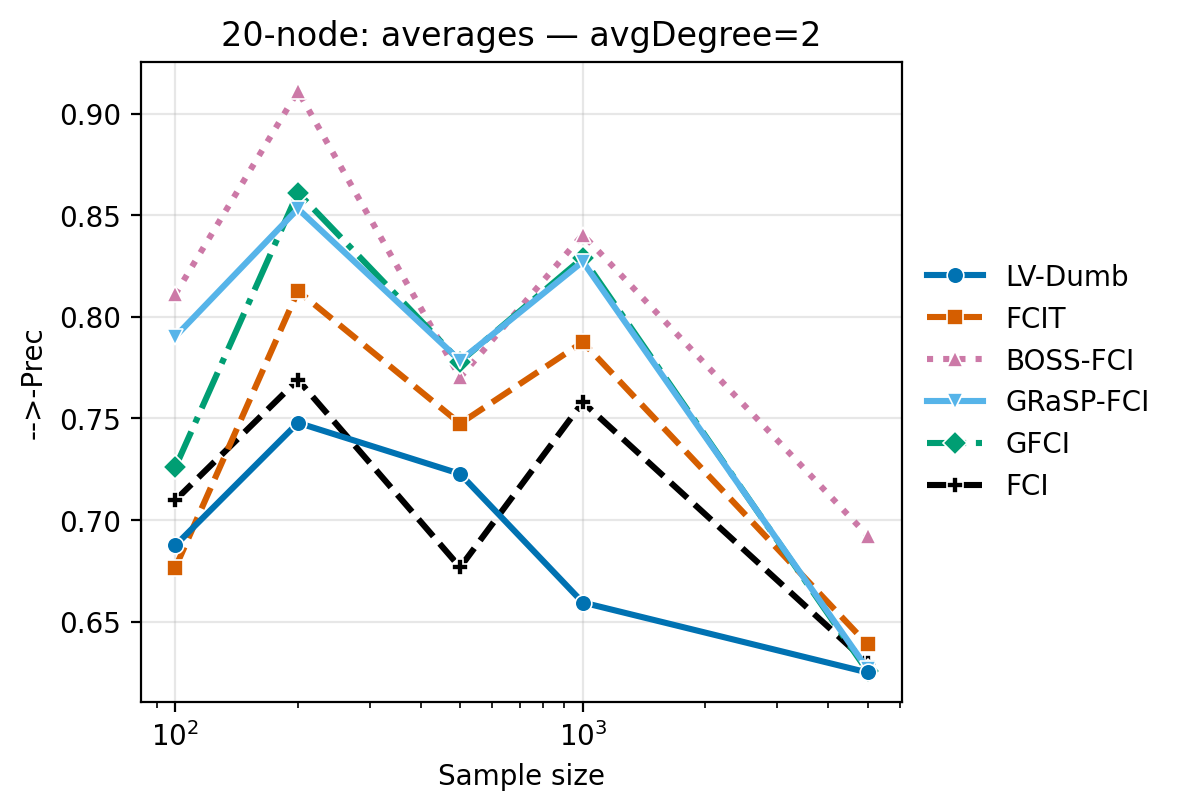}
    \caption{Tail Path Precision for 20-node graphs with average degree 2. 
    Precision varies strongly with $N$; for large samples. \textsc{LV-Dumb} is systematically behind.  
    Results are averaged over graphs with 0, 4, and 8 latent common causes.}
    \label{fig:tail20deg2}
\end{figure}

\begin{figure}
    \centering
    \includegraphics[width=0.5\linewidth]{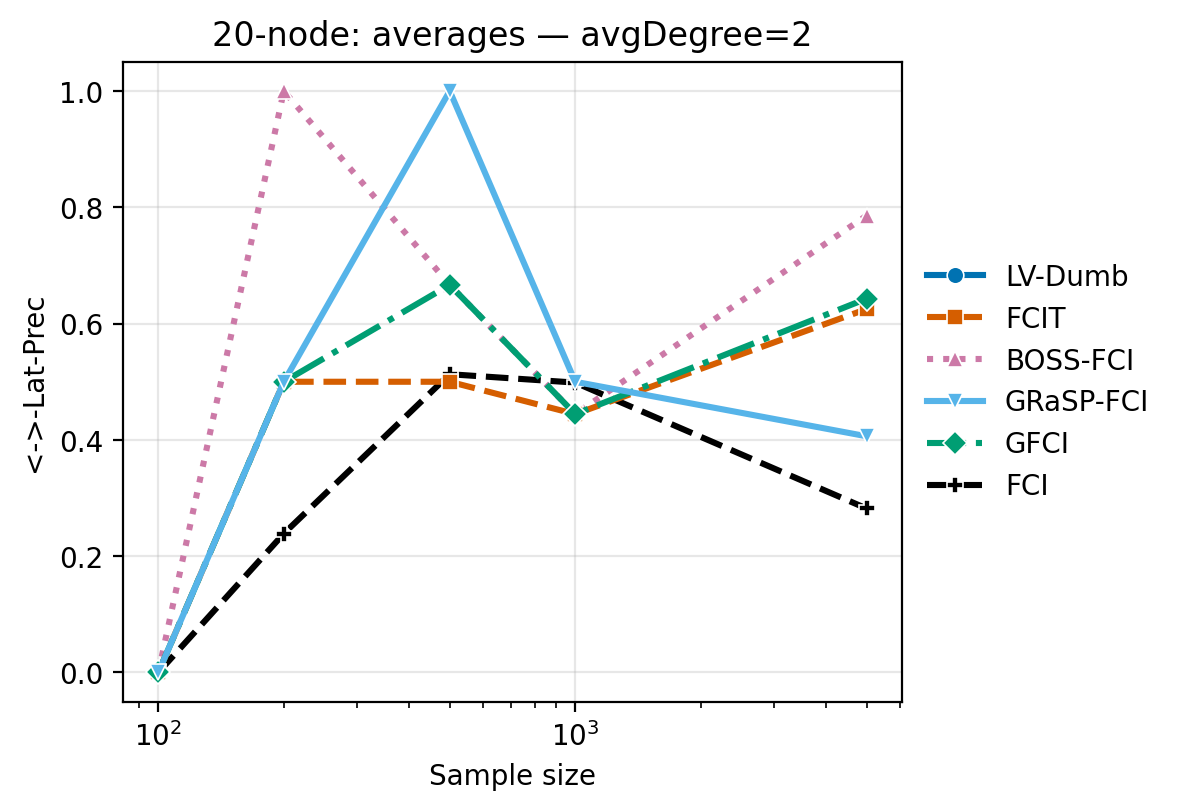}
    \caption{Bidirected Latent Path Precision for 20-node graphs with average degree 2. 
    There is considerable variation here, though \textsc{FCI} is lower overall.
    Results are averaged over graphs with 0, 4, and 8 latent common causes.}
    \label{fig:lat20deg2}
\end{figure}

\begin{figure}
    \centering
    \includegraphics[width=0.5\linewidth]{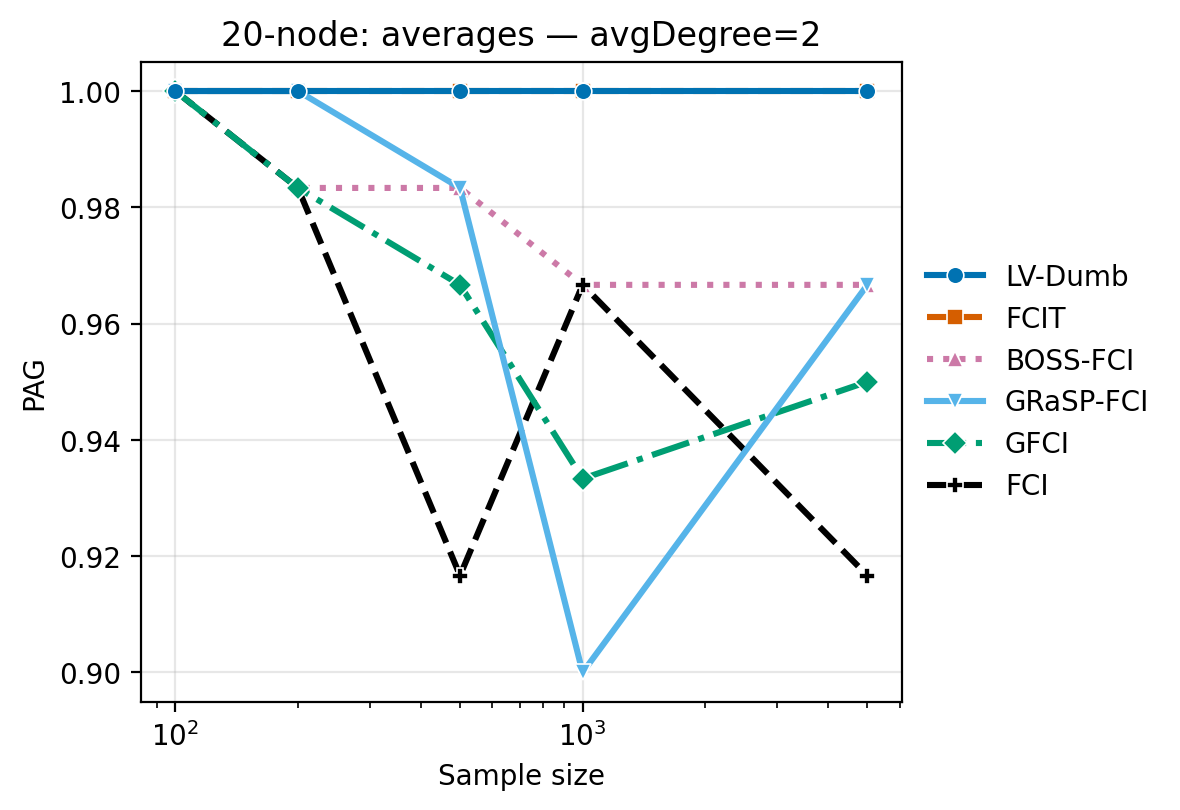}
    \caption{Proportion of well-formed PAGs for 20-node graphs with average degree 2. 
    There is good PAG precision for this regime, with \textsc{LV-Dumb} and \textsc{FCIT} returning perfect scores, as predicted.
    Results are averaged over graphs with 0, 4, and 8 latent common causes.}
    \label{fig:pag20deg2}
\end{figure}

\begin{figure}
    \centering
    \includegraphics[width=0.5\linewidth]{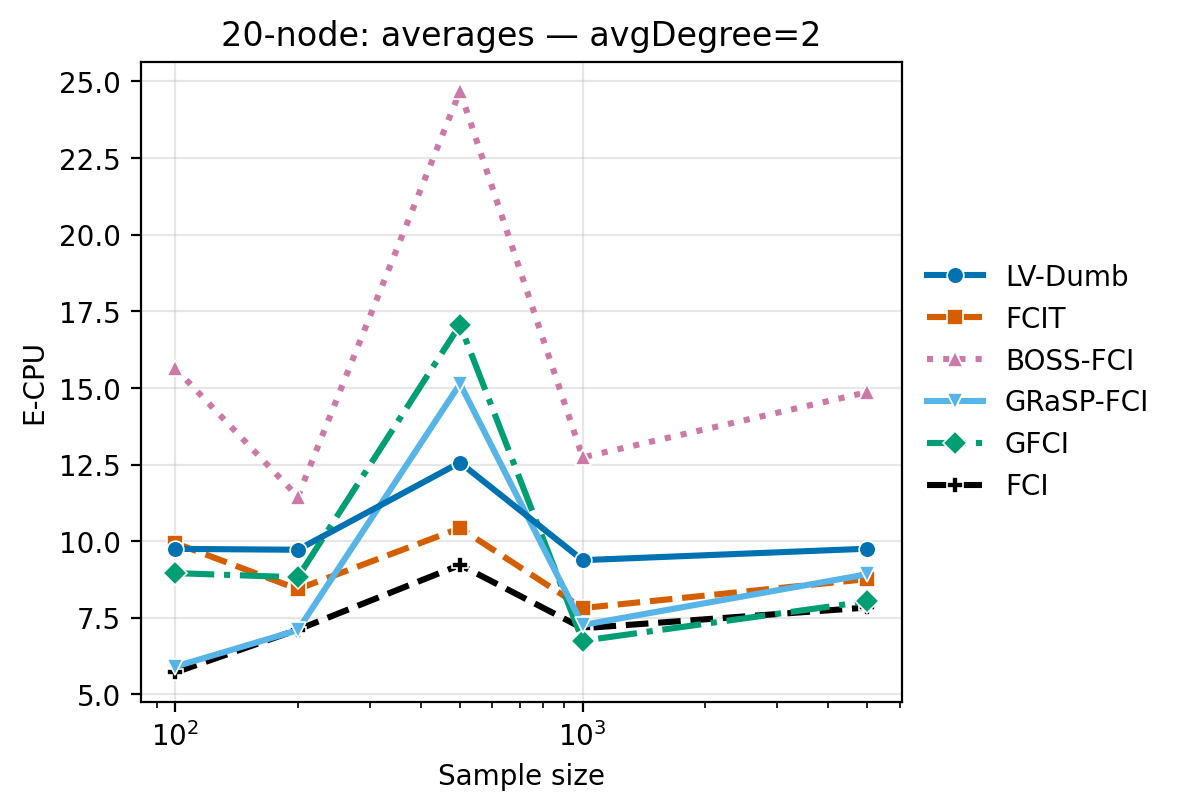}
    \caption{Runtime (CPU milliseconds) for 20-node graphs with average degree 2. 
    All algorithms run quickly in this regime. 
    Results are averaged over graphs with 0, 4, and 8 latent common causes.}
    \label{fig:cpu20deg2}
\end{figure}


\begin{figure}
    \centering
    \includegraphics[width=0.5\linewidth]{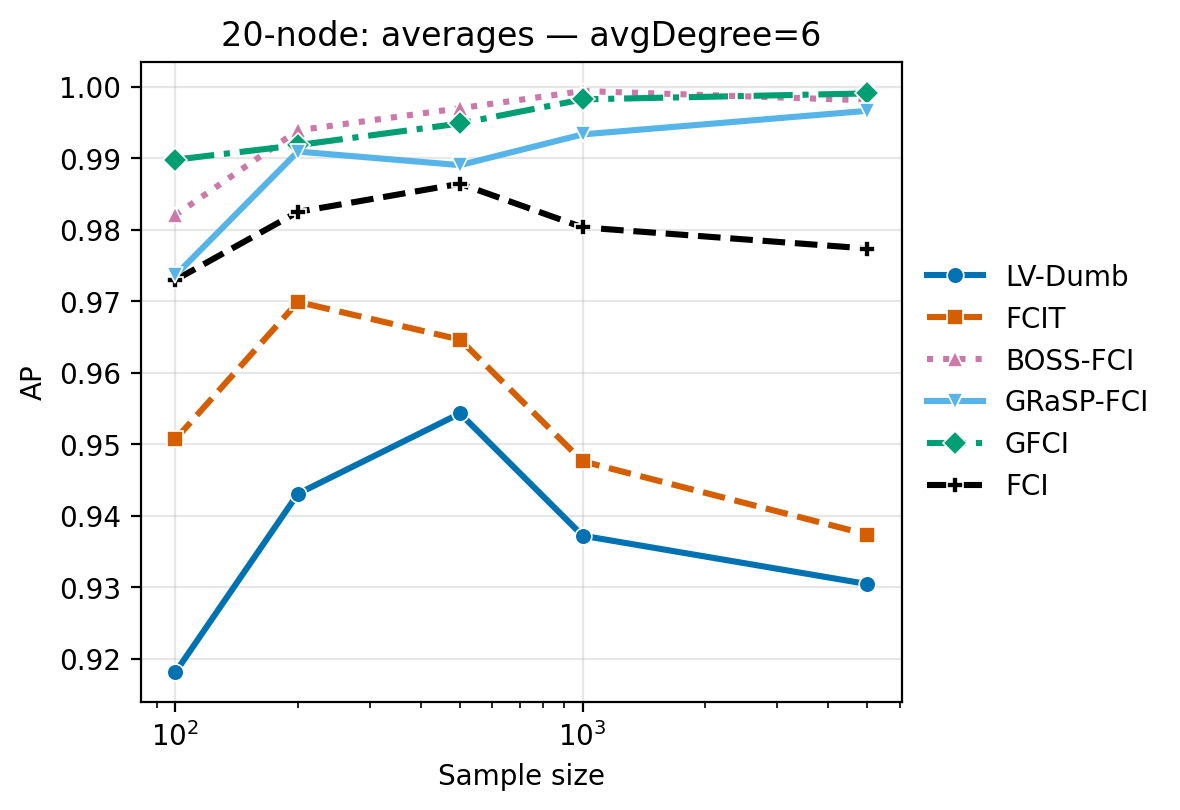}
    \caption{Adjacency Precision (AP) for 20-node graphs with average degree 6. 
    All algorithms maintain high precision ($>0.93$) across sample sizes.
    Results are averaged over graphs with 0, 4, and 8 latent common causes.}
    \label{fig:ap20deg6}
\end{figure}

\begin{figure}
    \centering
    \includegraphics[width=0.5\linewidth]{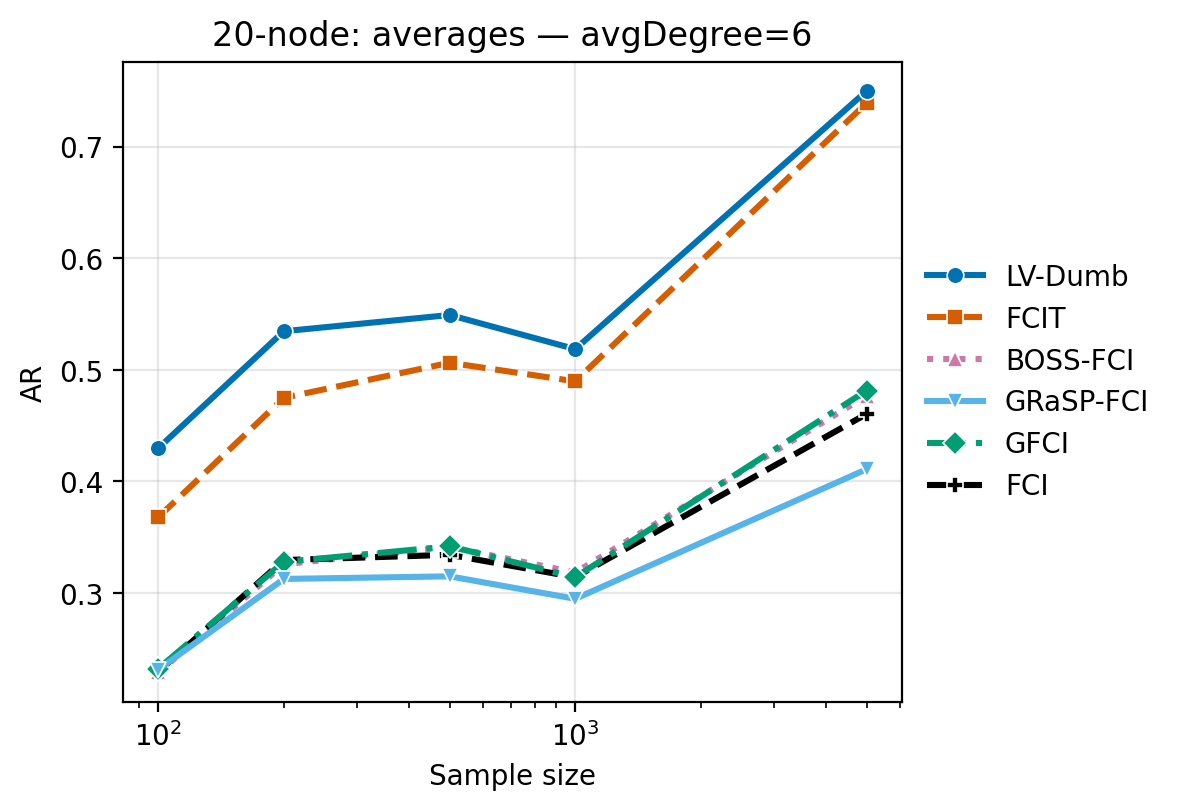}
    \caption{Adjacency Recall (AR) for 20-node graphs with average degree 6. 
    \textsc{LV-Dumb} and \textsc{FCIT} show the strongest gains with sample size, reaching $\approx 0.7$ at large $N$. 
    \textsc{GFCI}, \textsc{BOSS-FCI}, and \textsc{FCI} improve more modestly, and \textsc{GRaSP-FCI} remains lowest. 
    Results are averaged over graphs with 0, 4, and 8 latent common causes.}
    \label{fig:ar20deg6}
\end{figure}

\begin{figure}
    \centering
    \includegraphics[width=0.5\linewidth]{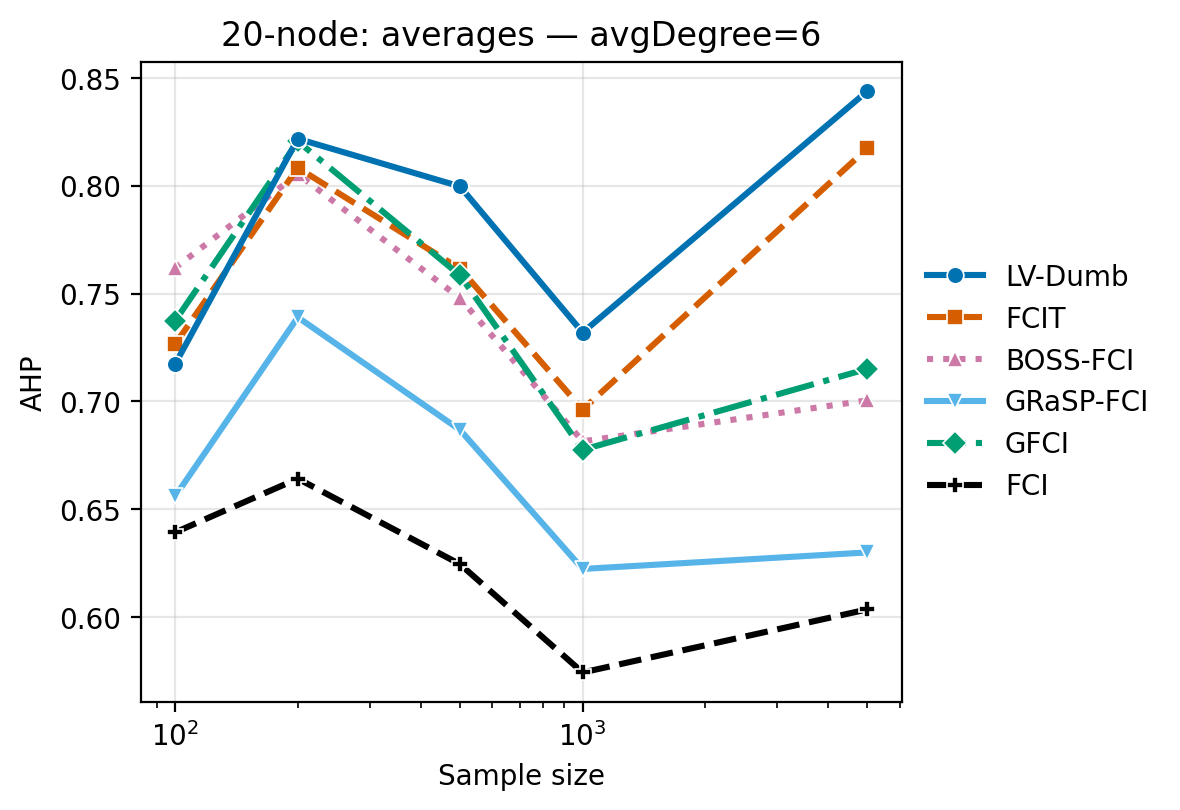}
    \caption{Arrowhead Precision (AHP) for 20-node graphs with average degree 6. 
    \textsc{LV-Dumb} attains the highest precision and improves with $N$; \textsc{FCIT} is next best. 
    \textsc{GFCI} and \textsc{BOSS-FCI} remain around $0.7$, \textsc{GRaSP-FCI} is lower, and \textsc{FCI} lowest overall. 
    There is a consistent dip at $N{=}500$. 
    Results are averaged over graphs with 0, 4, and 8 latent common causes.}
    \label{fig:ahp20deg6}
\end{figure}

\begin{figure}
    \centering
    \includegraphics[width=0.5\linewidth]{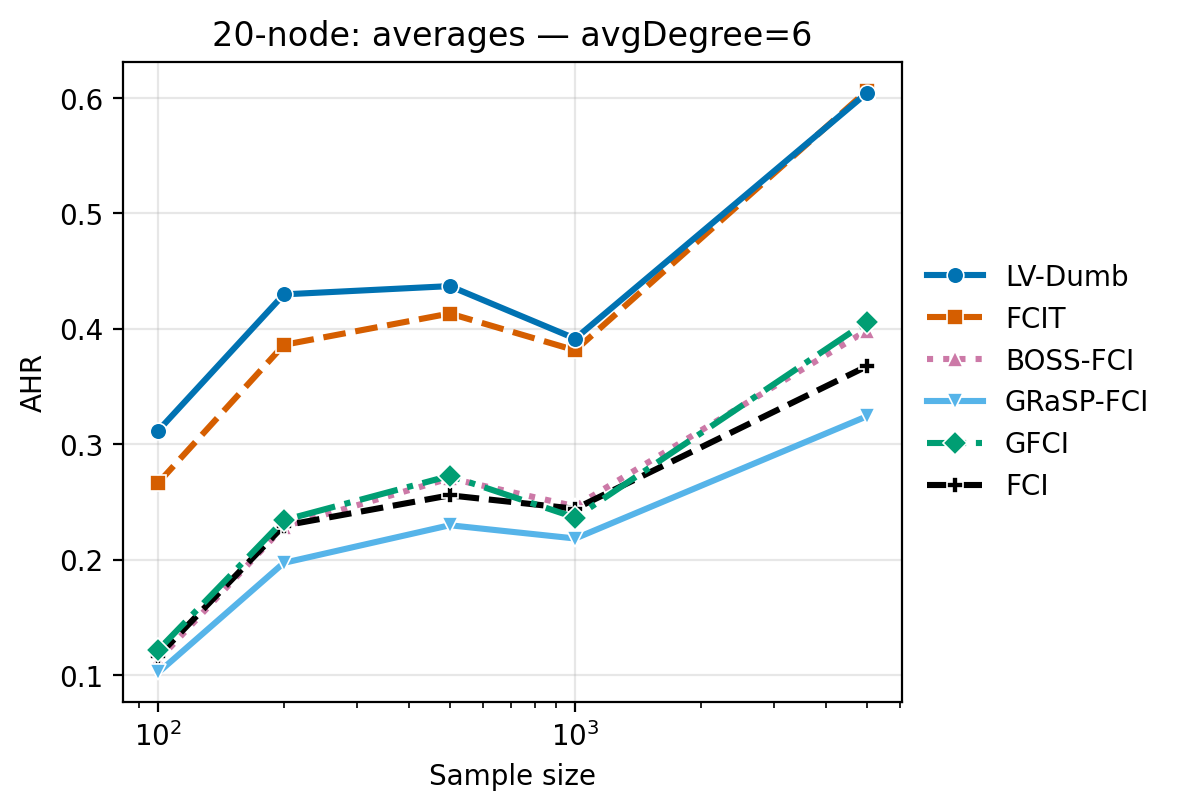}
    \caption{Arrowhead Recall (AHR) for 20-node graphs with average degree 6. 
    All curves rise with $N$; \textsc{LV-Dumb} and \textsc{FCIT} are highest at large $N$ ($\approx 0.6$), while the GFCI-family methods and \textsc{FCI} trail. 
    Results are averaged over graphs with 0, 4, and 8 latent common causes.}
    \label{fig:ahr20deg6}
\end{figure}

\begin{figure}
    \centering
    \includegraphics[width=0.5\linewidth]{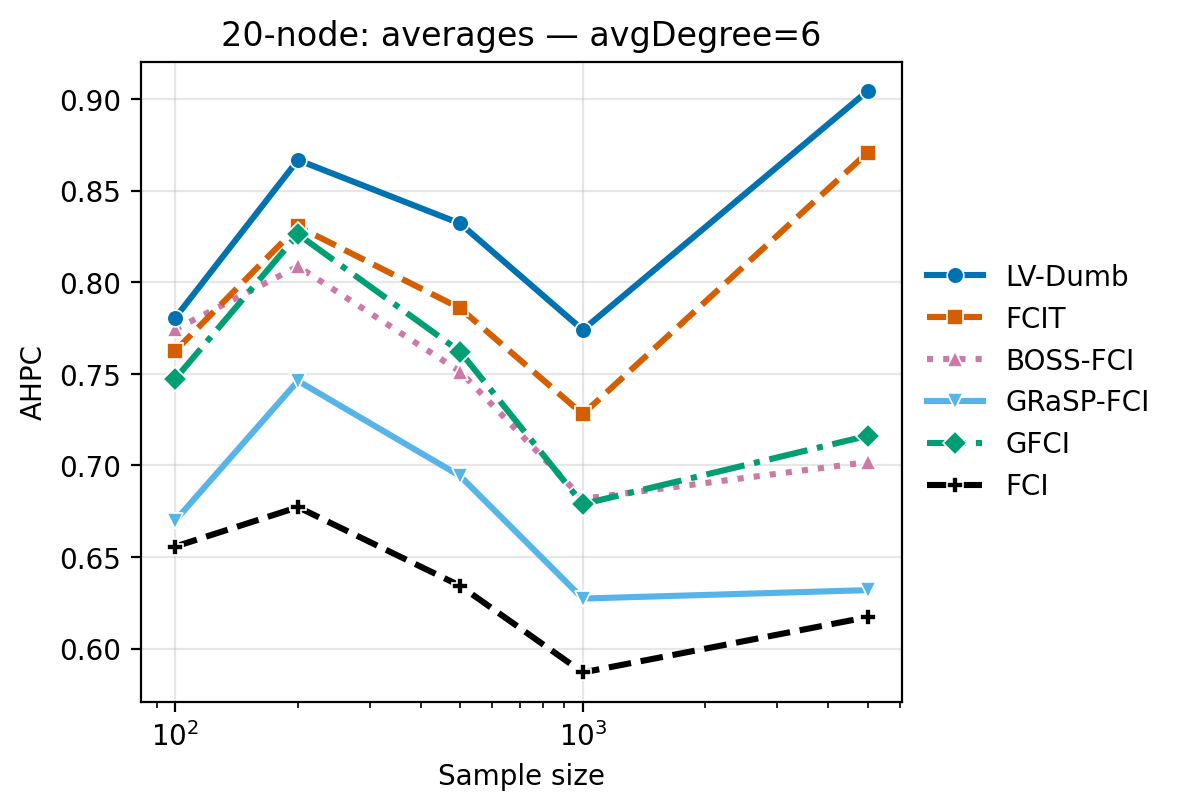}
    \caption{Arrowhead Precision for Common Adjacencies (AHPC) for 20-node graphs with average degree 6. 
    \textsc{LV-Dumb} maintains the highest precision and improves with $N$; \textsc{FCIT} also increases steadily. 
    \textsc{BOSS-FCI} and \textsc{GFCI} remain around $0.72$, \textsc{GRaSP-FCI} is lower, and \textsc{FCI} lowest overall. 
    There is a consistent dip at $N{=}500$. 
    Results are averaged over graphs with 0, 4, and 8 latent common causes.}
    \label{fig:ahpc20deg6}
\end{figure}

\begin{figure}
    \centering
    \includegraphics[width=0.5\linewidth]{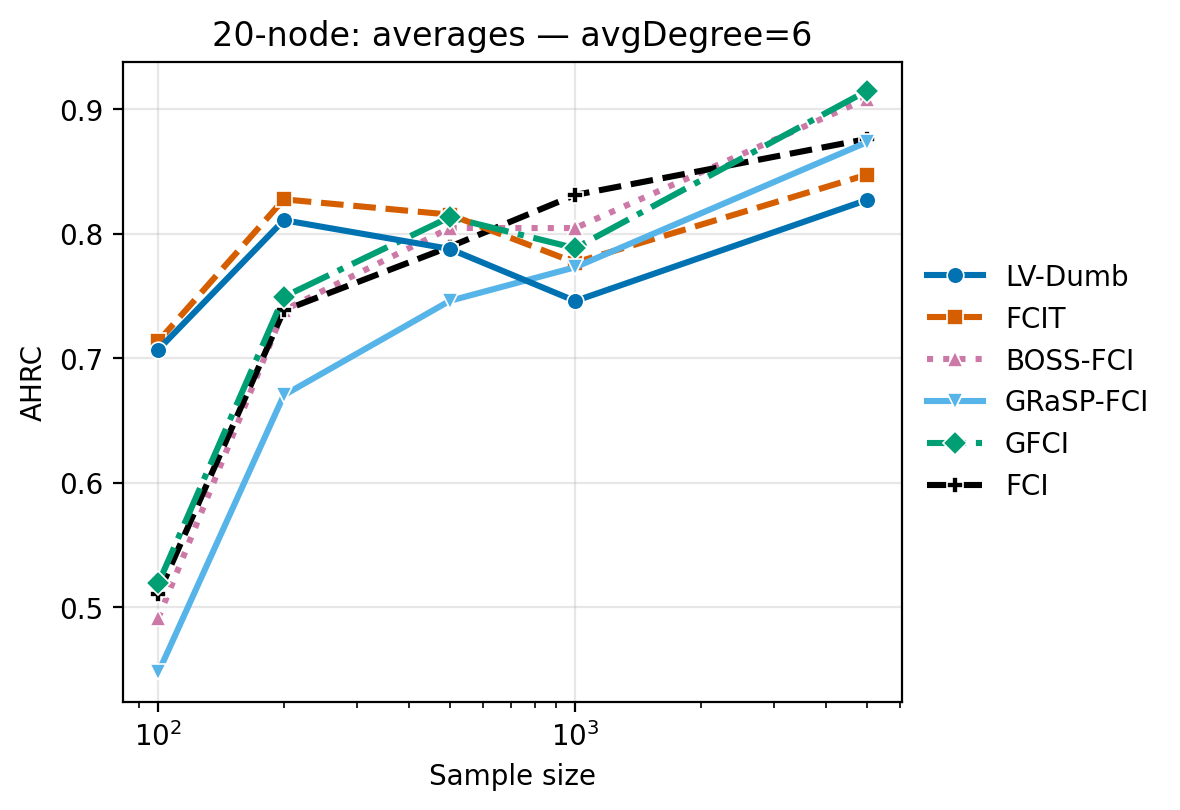}
    \caption{Arrowhead Recall for Common Adjacencies (AHRC) for 20-node graphs with average degree 6. 
    Recall increases with $N$ for all methods. 
    At large $N$, \textsc{BOSS-FCI} is highest (near $0.93$), followed by \textsc{GFCI} and \textsc{FCI}; \textsc{FCIT} and \textsc{GRaSP-FCI} are close behind, and \textsc{LV-Dumb} is slightly lower. 
    Results are averaged over graphs with 0, 4, and 8 latent common causes.}
    \label{fig:ahrc20deg6}
\end{figure}

\begin{figure}
    \centering
    \includegraphics[width=0.5\linewidth]{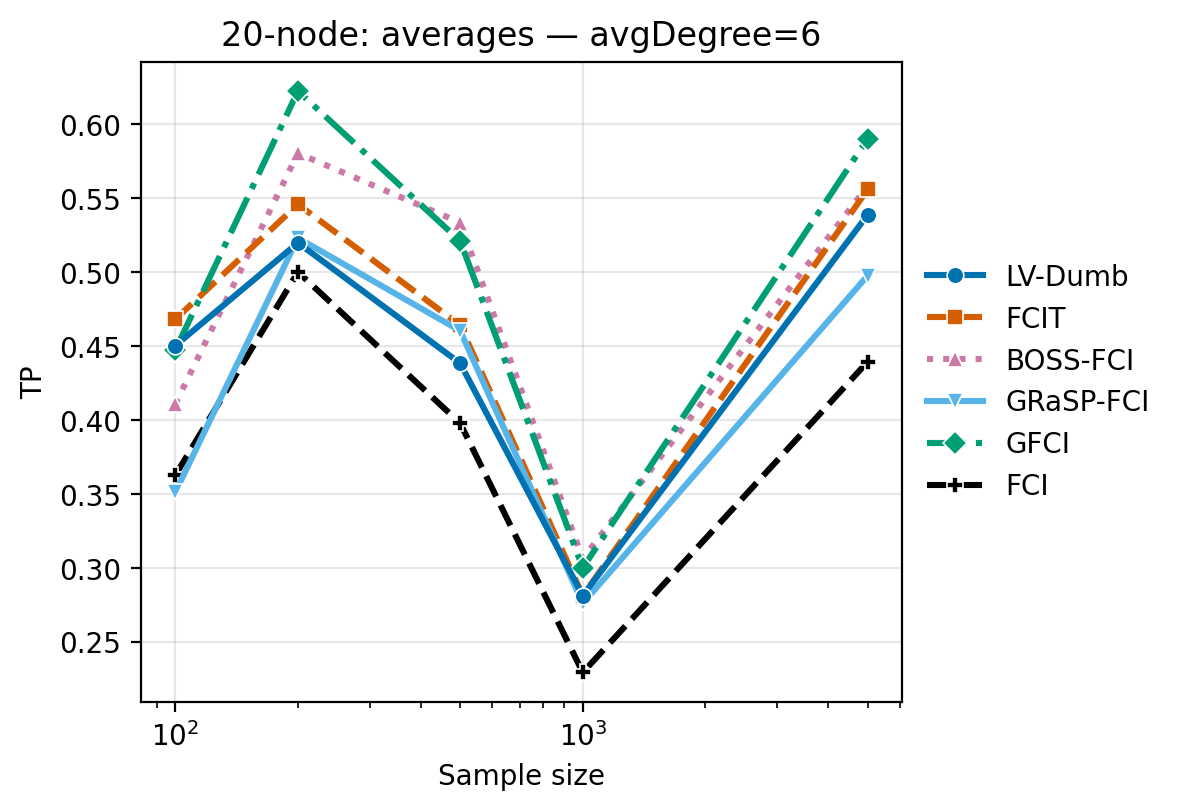}
    \caption{Tail Precision (TP) for 20-node graphs with average degree 6. 
    \textsc{GRaSP-FCI} and \textsc{BOSS-FCI}, and \textsc{FCIT} outperform the other algorithms here, though none of the algorithms performs especially well. 
    Results are averaged over graphs with 0, 4, and 8 latent common causes.}
    \label{fig:tp20deg6}
\end{figure}

\begin{figure}
    \centering
    \includegraphics[width=0.5\linewidth]{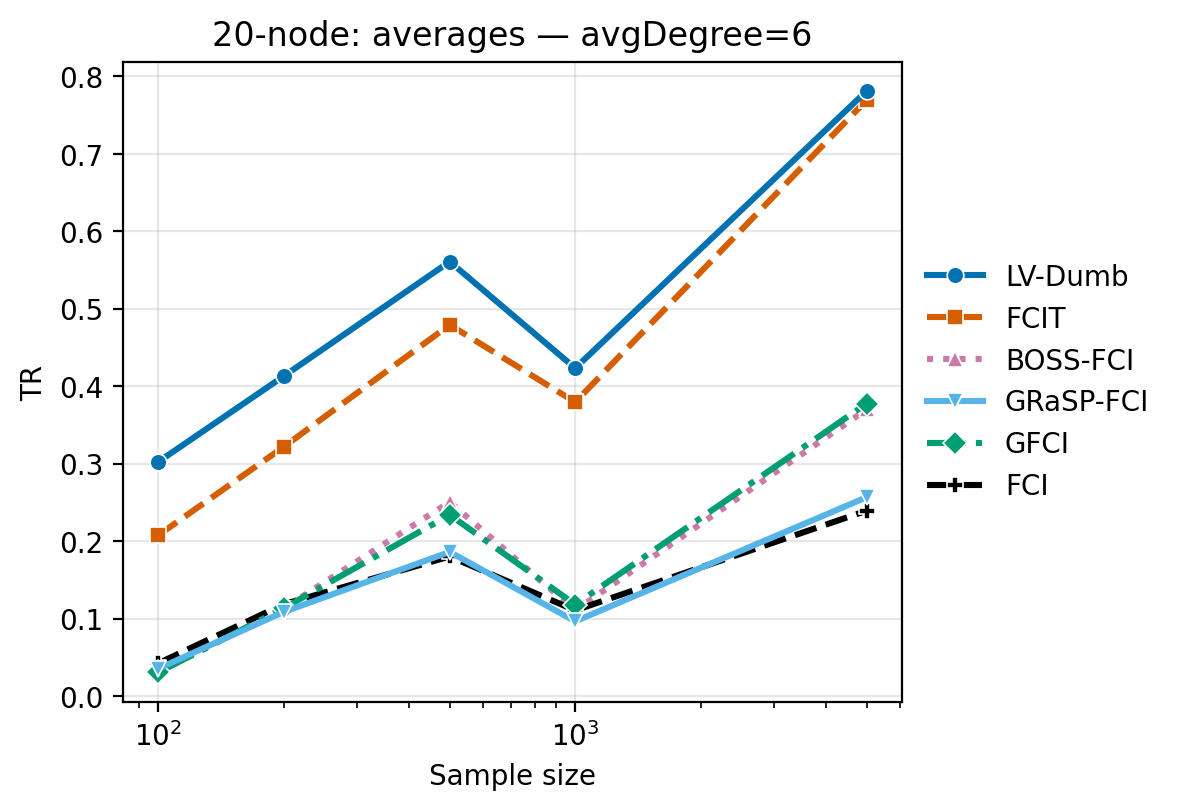}
    \caption{Tail Recall (TR) for 20-node graphs with average degree 6. 
    At the all sample sizes \textsc{LV-Dumb} and \textsc{FCIT} have the highest scores, while the others falls short. 
    Results are averaged over graphs with 0, 4, and 8 latent common causes.}
    \label{fig:tr20deg6}
\end{figure}

\begin{figure}
    \centering
    \includegraphics[width=0.5\linewidth]{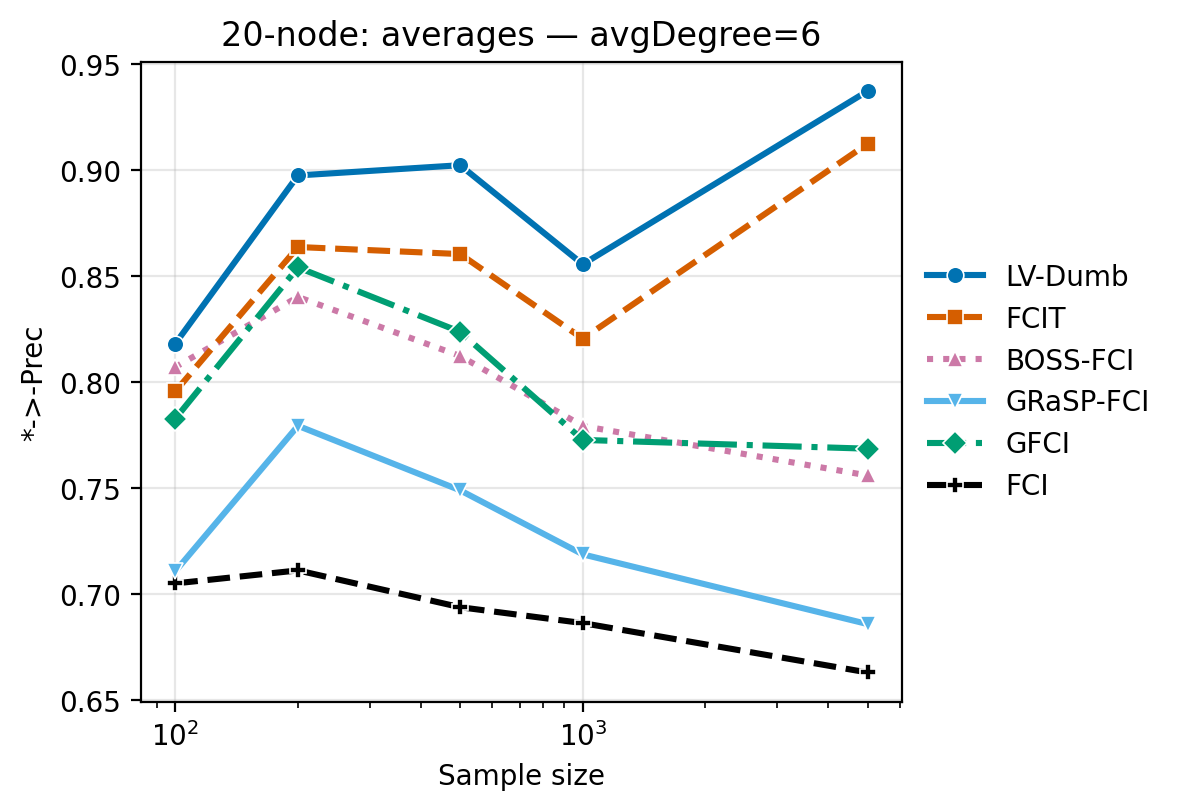}
    \caption{Arrow Path Precision for 20-node graphs with average degree 6. 
    \textsc{LV-Dumb} and \textsc{FCIT} achieve the highest precision, increasing to $\approx 0.96$ at large $N$. 
    \textsc{BOSS-FCI} and \textsc{GFCI} cluster around $0.80$; \textsc{GRaSP-FCI} is lower and \textsc{FCI} lowest overall. 
    Results are averaged over graphs with 0, 4, and 8 latent common causes.}
    \label{fig:arrow20deg6}
\end{figure}

\begin{figure}
    \centering
    \includegraphics[width=0.5\linewidth]{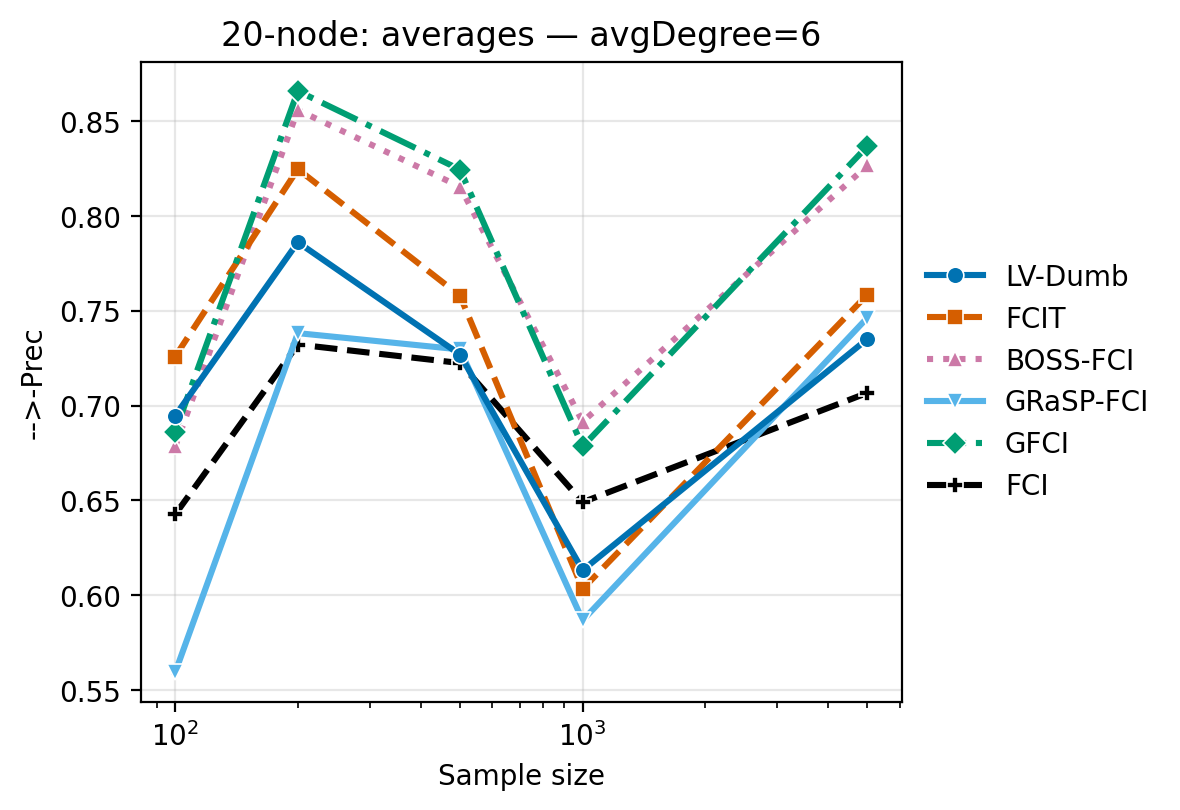}
    \caption{Tail Path Precision for 20-node graphs with average degree 6. 
    There is considerable variation for all methods across sample sizes. \textsc{GRaSP-FCI} and \textsc{BOSS-FCI} outperform the others consistently.
    Results are averaged over graphs with 0, 4, and 8 latent common causes.}
    \label{fig:tail20deg6}
\end{figure}

\begin{figure}
    \centering
    \includegraphics[width=0.5\linewidth]{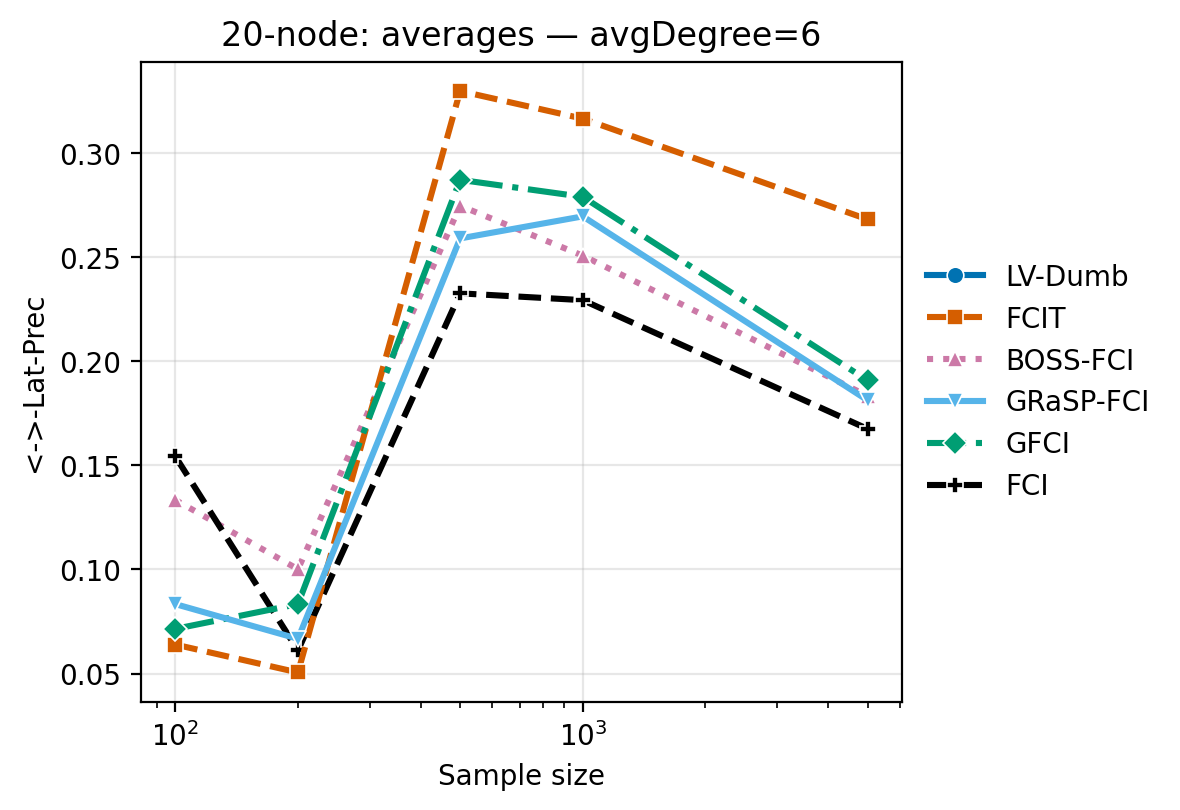}
    \caption{Bidirected Latent Path Precision for 20-node graphs with average degree 6. 
    There is weak performance on this statistic here.
    Precision peaks near intermediate $N$ and eases slightly at the largest $N$. 
    \textsc{FCIT} attains the highest precision at mid-range, with other algorithms trailing. 
    \textsc{LV-Dumb} orients no bidirected edges, yielding undefined precision, so it is not plotted. 
    Results are averaged over graphs with 0, 4, and 8 latent common causes.}
    \label{fig:lat20deg6}
\end{figure}

\begin{figure}
    \centering
    \includegraphics[width=0.5\linewidth]{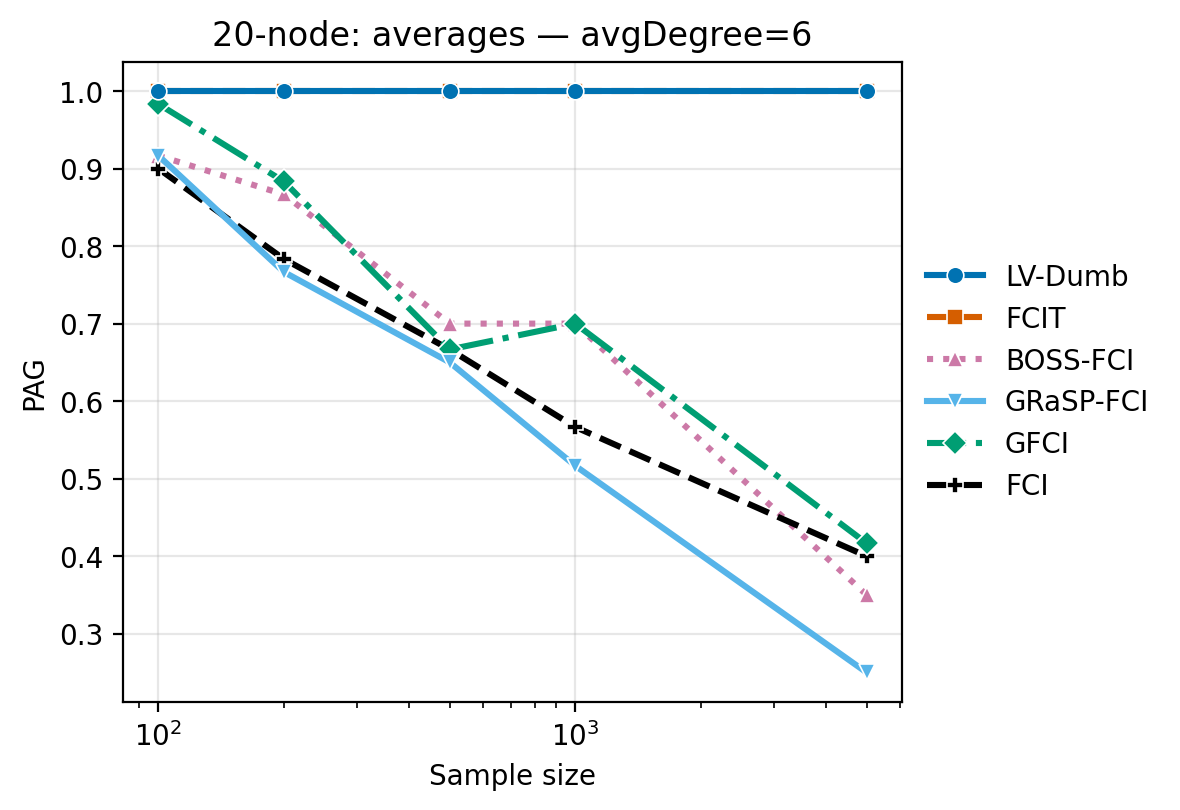}
    \caption{Proportion of well-formed PAGs for 20-node graphs with average degree 6. 
    \textsc{LV-Dumb} and \textsc{FCIT} consistently produce valid PAGs ($=1.0$), confirming structural correctness. 
    Other algorithms fall off, sometimes sharply.
    Results are averaged over graphs with 0, 4, and 8 latent common causes.}
    \label{fig:pag20deg6}
\end{figure}

\begin{figure}
    \centering
    \includegraphics[width=0.5\linewidth]{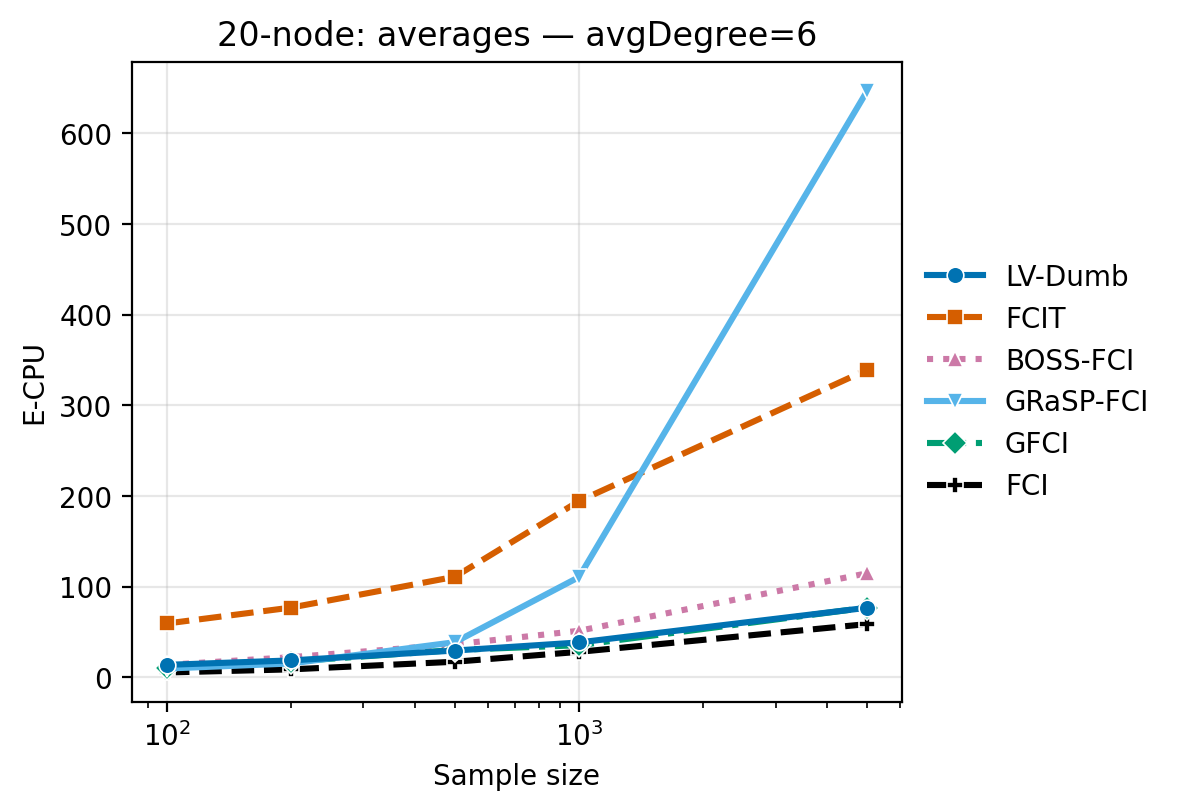}
    \caption{Runtime (CPU milliseconds) for 20-node graphs with average degree 6. 
    \textsc{GRaSP-FCI} incurs the highest cost, growing steeply at large $N$ ($\sim\!600$ ms). 
    \textsc{FCIT} is moderate, while \textsc{LV-Dumb}, \textsc{LV-Dumb}, \textsc{FCI}, and \textsc{GFCI} remain consistently faster. 
    Results are averaged over graphs with 0, 4, and 8 latent common causes.}
    \label{fig:cpu20deg6}
\end{figure}

\end{document}